\DeclareMathOperator*{\argmin}{arg\,min}
\DeclareMathOperator*{\argmax}{arg\,max}
\newcommand\Perm[1]{\mathfrak{S}_{#1}}
\newcommand\LL{\textit{\L}}
\newcommand\SkCZ{ \Perm{k}\mathcal{C}_{\mathbb{Z}}}
\newcommand\signvect{\mathfrak{s}}
\newtheorem{theorem}{Theorem}[section]
\crefname{theorem}{Theorem}{Theorems}
\newtheorem{lemma}[theorem]{Lemma}
\crefname{lemma}{Lemma}{Lemmas}
\newtheorem{corollary}[theorem]{Corollary}
\crefname{corollary}{Corollary}{Corollaries}
\newtheorem{proposition}[theorem]{Proposition}
\crefname{proposition}{Proposition}{Propositions}
\crefname{conjecture}{Conjecture}{Conjecture}
\theoremstyle{definition}
\newtheorem{definition}[theorem]{Definition}
\crefname{definition}{Definition}{Definitions}
\crefname{figure}{Figure}{Figures}
\crefname{SCfigure}{Figure}{Figures}
\crefname{section}{Section}{Sections}
\crefname{subsection}{Section}{Sections}
\crefname{appendix}{Appendix}{Appendices}
\theoremstyle{remark}
\title{Weston-Watkins Hinge Loss and Ordered Partitions}
\author{%
  Yutong Wang \\
  University of Michigan\\
  \texttt{yutongw@umich.edu} \\
  \and
  Clayton D. Scott\\
  University of Michigan\\
  \texttt{clayscot@umich.edu} \\
}
\date{}
\begin{document}

\maketitle

\begin{abstract}
Multiclass extensions of the support vector machine (SVM) have been formulated in a variety of ways.
A recent empirical comparison of nine such formulations \cite{dogan2016unified} recommends the variant proposed by Weston and Watkins (WW), despite the fact that the WW-hinge loss is not calibrated with respect to the 0-1 loss.
In this work we introduce a novel discrete loss function for multiclass classification, the \emph{ordered partition loss}, and prove that the WW-hinge loss \emph{is} calibrated with respect to this loss.
We also argue that the ordered partition loss is maximally informative among discrete losses satisfying this property.
Finally, we apply our theory to justify the empirical observation made by \citet{dogan2016unified} that the WW-SVM can work well even under massive label noise, a challenging setting for multiclass SVMs.
\end{abstract}

\section{Introduction}
Classification is the task of assigning labels to instances, and a common approach is to minimize misclassification error corresponding to the 0-1 loss. However, the 0-1 loss is discrete and typically cannot be optimized efficiently.
To address this, the 0-1 loss is often replaced by a surrogate loss during training.
If the surrogate is \emph{calibrated} with respect to the 0-1 loss, then a classifier minimizing the expected surrogate loss will also minimize the expected 0-1 loss in the infinite sample limit.

For multiclass classification, several different multiclass extensions of the support vector machine (SVM) have been proposed, including the Weston-Watkins (WW) \cite{weston1999support},
Crammer-Singer (CS) \cite{crammer2001algorithmic}, and
Lee-Lin-Wahba (LLW) \cite{lee2004multicategory} SVMs.
The pertinent difference between these multiclass SVMs is the multiclass generalization of the hinge loss.
Below, we refer to the hinge loss from WW-SVM as the WW hinge loss and so on.
It is well-known that the LLW-hinge is calibrated with respect to the 0-1 loss, while the WW- and CS-hinge losses are not \cite{liu2007fisher, tewari2007consistency}.

Despite this result, the LLW-SVM is not more widely accepted than the WW-, CS-, and other SVMs.
The first reason for this is that while the LLW-SVM is calibrated with respect to the 0-1 loss, this did not lead to superior performance empirically.
In particular, \citet{dogan2016unified} found that the LLW-SVM fails in low dimensional feature space even under the noiseless setting.
On the other hand, \citet{dogan2016unified} observed that the WW-SVM is the only multiclass SVM that succeeded in both the noiseless and noisy setting in their simulations.
Indeed, \citet{dogan2016unified} concluded that, among 9 different competing multiclass SVMs, the WW-SVM offers the best overall performance when considering accuracy and computation.
The second reason is that the calibration framework is not limited to the 0-1 loss.
There could be other discrete losses with respect to which a surrogate is calibrated, and which help to explain its performance.
Indeed, \citet{ramaswamy2018consistent} recently showed that the CS-hinge loss is calibrated with respect to a discrete loss for classification with abstention.

In a vein similar to \cite{ramaswamy2018consistent}, we show that the WW-hinge loss
is calibrated with respect to a novel discrete loss that we call the \emph{ordered partition} loss.
Our results leverage the embedding framework for analyzing discrete losses and convex piecewise linear surrogates, introduced recently by \citet{finocchiaro2019embedding}.
We also give theoretical justification for the empirical performance of the WW-SVM observed by \citet{dogan2016unified}.

\subsection{Related work}

\citet{cortes1995support} introduced the support vector machine for learning a binary classifier, using the hinge loss as a surrogate for the 0-1 loss.
\citet{steinwart2002support} showed that the binary SVM is \emph{universally consistent}, a desirable property of a classification algorithm that ensures its convergence to the Bayes optimal classifier in the large sample limit.
\citet{steinwart2005consistency} later used calibration to give a more general proof of SVM consistency with respect to the 0-1 loss. 
Around that time, more general theories of when a loss is calibrated with respect to 0-1 loss, or ``classification calibrated," began to emerge \cite{zhang2004statistical, bartlett2006convexity, steinwart2007compare}, and since then a proliferation of papers have extended these ideas to a variety of learning settings (see \citet{bao2020calibrated} for a recent review).

Several natural extensions of the binary SVM exist,
including 
the Weston-Watkins (WW) \cite{weston1999support},
Crammer-Singer (CS) \cite{crammer2001algorithmic}, and
Lee-Lin-Wahba (LLW) \cite{lee2004multicategory} SVMs.
\citet{tewari2007consistency} extended the definition of calibration with respect to the 0-1 loss to the multiclass setting.
\citet{liu2007fisher}
and
\citet{tewari2007consistency}
analyzed these hinge losses and showed that 
WW and CS hinge losses are not calibrated with respect to the 0-1 loss while 
the LLW hinge loss is.
\citet{dogan2016unified} introduced a framework that unified existing multiclass SVMs, proved the 0-1 loss consistency of several multiclass SVMs when the kernel is allowed to change, and also conducted extensive experiments.
Despite not being calibrated with respect to the 0-1 loss, \citet{zhang2004statistical} showed that 
the Crammer-Singer SVM is consistent given the ``majority assumption'',
i.e., 
the most probable class has greater than $1/2$ probability.
When the majority assumption is violated, experiments conducted by \citet{dogan2016unified} suggested that 
the CS-SVM fails, while the WW-SVM continues to perform well.

\citet{ramaswamy2016convex} extended the notion of calibration to an arbitrary discrete loss used in \emph{general multiclass learning}.
The general multiclass learning framework unifies several learning problems,
including cost-sensitive classification \cite{scott2012calibrated},
classification with abstain option \cite{ramaswamy2018consistent},
ranking \cite{duchi2013asymptotics},
and partial label learning \cite{cid2012proper}.  
Furthermore, \citet{ramaswamy2016convex} introduced the concept of \emph{convex calibration dimension} which is defined for a discrete loss to be the minimum dimension required for the domain of a convex surrogate loss to be calibrated with respect to the given discrete loss.
\citet{ramaswamy2018consistent} proved the consistency of CS-SVM with respect to the abstention loss where the cost of abstaining is $1/2$ by showing that the CS hinge is calibrated with respect to this abstention loss.
They also proposed a new calibrated convex surrogate loss in dimension $\lceil \log_2 k \rceil$ for the abstention loss, implying that the CS hinge is suboptimal from the CC-dimension perspective.

Recently, several new multiclass hinge-like losses have been proposed, as well as frameworks for constructing convex losses. \citet{dogan2016unified} used their framework to devise two new multiclass hinge losses, and using ideas from adversarial multiclass classification, \citet{fathony2016adversarial} proposed a new multiclass hinge-like loss;  all three are calibrated with respect to the 0-1 loss. \citet{blondel2019learning} introduced a class of losses known as \emph{Fenchel-Young losses} which contains non-smooth losses such as the CS hinge loss as well as smooth losses such as the logistic loss.
\citet{tan2020loss} proposed an approach for constructing hinge-like losses using generalized entropies. \citet{finocchiaro2019embedding} studied the calibration properties of \emph{polyhedral} losses using the \emph{embedding} framework that they developed. They analyzed several polyhedral losses in the literature including the CS hinge, the Lov{\'a}sz hinge \cite{yu2018lovasz}, and the top-$n$ loss \cite{lapin2017analysis}.

\subsection{Our contributions}

We introduce a novel discrete loss $\ell$, the \emph{ordered partition loss}.
We show in \cref{main-theorem: embedding result only} that
the Weston-Watkins hinge loss $L$ embeds the ordered partition loss $\ell$.
Our embedding result together with results of \cite{finocchiaro2019embedding} imply 
that $L$ is calibrated with respect to $\ell$ (\cref{main-corollary: calibration result only}).
To the best of our knowledge, this is the first calibration-theoretic result for the WW-hinge loss.
We also introduce the notion of the \emph{maximally informative} discrete loss that a polyhedral loss can embed and argue that the ordered partition loss is maximally informative for the WW-hinge loss.  In \cref{main-section: the argmax link}, we use properties of the ordered partition loss to give theoretical support for the empirical observations made by \citet{dogan2016unified} on the success of WW-SVM in the massive label noise setting.

\subsection{Notations}\label{main-section: notations}
Let $k \ge 3$ be an integer which denotes the number of classes.
For a positive integer $n$, we let $[n] = \{1,\dots, n\}$.
If $v = (v_1,\dots, v_k) \in \mathbb{R}^k$ and $i \in [k]$ is an index, then let $[v]_i := v_i$.
Define $\max v = \max_{i \in [k]} v_i$ and $\argmax v= \{i \in [k]: v_i = \max v\}$.

Let $\Perm{k}$ denote the set of permutations on $[k]$, i.e., elements of $\Perm{k}$ are bijections $\sigma: [k] \to [k]$.
Given $\sigma \in \Perm{k}$ and $v \in \mathbb{R}^k$, the vector $\sigma v \in \mathbb{R}^k$ is defined entrywise where the $i$-th entry is $[\sigma v]_i = v_{\sigma(i)}$.
Equivalently, we view $\Perm{k}$ as the set of permutation matrices in $\mathbb{R}^{k\times k}$.

Let $\mathbb{R}_+$ denote the set of nonnegative reals.
Denote $\Delta^k = \{(p_1,\dots, p_k) \in \mathbb{R}^k_+ : p_1 + \cdots +p_k = 1\}$ the probability simplex.
For $p \in \Delta^k$, we write $Y\sim p$ to denote a discrete random variable $Y \in [k]$ whose probability mass function is $p$.
Let $\langle\cdot,\, \cdot \rangle$ be the usual dot-product between vectors.
Denote by $\mathbb{I}\{\verb|input|\}$ the indicator function which returns $1$ if $\verb|input|$ is true and $0$ otherwise.

\subsection{Background}\label{main-section: backgrounds}

Recall the \emph{general multiclass learning} framework as described in \cite{ramaswamy2016convex}: 
$\mathcal{X}$ is a sample space and $P$ is a joint distribution over $\mathcal{X} \times [k]$.
A \emph{multiclass classification loss} is a function $\ell: \mathcal{R} \to \mathbb{R}^k_+$ where $\mathcal{R}$ is called the \emph{prediction space} and $[\ell(r)]_y \in \mathbb{R}_+$ is the penalty incurred for predicting $r \in \mathcal{R}$ when the label is $y \in [k]$.
If $\mathcal{R}$ is finite, we refer to $\ell$ as a \emph{discrete loss}.
For example, a common setting for classification is $\mathcal{R} = [k]$ and $\ell$ is the 0-1 loss.
The $\ell$-\emph{risk} of a \emph{hypothesis} function $f: \mathcal{X} \to \mathcal{R}$ is 
\begin{equation}
  \label{main-equation: ell risk}
  \mathrm{er}^{\ell}_P(f) :=
  \mathbb{E}_{X,Y \sim P} 
  \left\{
    [\ell(f(X))]_Y
  \right\}.
\end{equation}
The goal is to design \emph{$\ell$-consistent} algorithms, i.e., procedures that output a hypothesis $f_n$ based on an input of $n$ training samples sampled i.i.d from $P$
such that
$\mathrm{er}_P^\ell(f_n)
\to \mathrm{er}_P^{\ell, *} = 
  \inf_{f :\mathcal{X} \to \mathcal{R}}
  \mathrm{er}_P^{\ell}(f)
  $ as $n\to \infty$.
Since $\ell$ is discrete, \cref{main-equation: ell risk} is difficult to directly minimize.
To circumvent this difficulty, we consider a convex \emph{surrogate loss} $L : \mathbb{R}^d \to \mathbb{R}^k$ for some positive integer $d$.
The following property relates the surrogate loss $L$ and the discrete loss $\ell$.
\begin{definition}[Calibration]\label{main-definition: calibration}
For each $p \in \Delta^k$, define $\gamma_\ell(p) := \argmin_{r \in \mathcal{R}} \langle p, \ell(r)\rangle$.
We say that \emph{$L$ is calibrated with respect to $\ell$} if there exists a function $\psi: \mathbb{R}^d \to \mathcal{R}$  such that for all $p \in \Delta^k$
  \[
    \inf_{u \in \mathbb{R}^d : \psi(u) \not \in \gamma_\ell(p)}
    \langle p, L(u)\rangle
    >
    \inf_{v \in \mathbb{R}^d }
    \langle p, L(v)\rangle.
  \]
\end{definition}
By \citet[Theorem 3]{ramaswamy2016convex}, $L$ being calibrated with respect to $\ell$ is equivalent to the following:
there exists $\psi: \mathbb{R}^d \to \mathcal{R}$ such that
for all joint distributions $P$ on $\mathcal{X} \times [k]$ and all sequences of functions $g_n: \mathcal{X} \to \mathbb{R}^d$, we have
\[
  \mathrm{er}_P^L(g_n)
\to
\mathrm{er}_P^{L,*}
  \quad \mbox{implies} \quad
  \mathrm{er}_P^\ell(\psi \circ g_n)
\to
  \mathrm{er}_P^{\ell,*}
\]
where 
$
\mathrm{er}_P^{L,*}
=
\inf_{g: \mathcal{X} \to \mathbb{R}^d} \mathrm{er}_P^{L}(g)$.
Thus, the calibration property allows us to focus on finding $L$-consistent algorithms.
In general it can be difficult to check that a given $L$ is calibrated with respect to $\ell$.
\citet{finocchiaro2019embedding} introduced the following definition:
\begin{definition}
  [\citet{finocchiaro2019embedding}]
  \label{main-definition: loss embedding}
  The loss $L : \mathbb{R}^d \to \mathbb{R}^k$ \emph{embeds} $\ell : \mathcal{R} \to \mathbb{R}^k$ if there exists an injection $\varphi : \mathcal{R} \to \mathbb{R}^d$ called an \emph{embedding} such that
  \begin{enumerate}
    \item \label{main-definition: loss embedding - loss vector are equal}
      $L(\varphi(r)) = \ell(r)$ for all $r \in \mathcal{R}$
    \item \label{main-definition: loss embedding - argmin are equivalent}
      $r \in \argmin_{r \in \mathcal{R}} \langle p, \ell(r)\rangle
      $ if and only if
      $\varphi(r) \in \argmin_{v \in \mathbb{R}^d} \langle p, L(v)\rangle$.
  \end{enumerate}
\end{definition}
The notion of embedding is important due to the following result 
from \cite[Theorem 3]{finocchiaro2019embedding}:
\begin{theorem}
  [\citet{finocchiaro2019embedding}]
  \label{main-theorem: embedding implies calibration}
Let $L$ be convex piecewise-linear and $\ell$ be discrete. If $L$ embeds $\ell$, then $L$ is calibrated with respect to $\ell$.
\end{theorem}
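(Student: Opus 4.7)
The plan is to construct an explicit link $\psi: \mathbb{R}^d \to \mathcal{R}$ from the polyhedral structure of $L$ and verify the strict inequality in \cref{main-definition: calibration} directly. Because $L$ is convex and piecewise-linear, the scalar map $v \mapsto \langle p, L(v)\rangle$ is convex piecewise-linear for every $p \in \Delta^k$, and $\mathbb{R}^d$ admits a finite polyhedral decomposition into maximal cells $C_1,\dots,C_N$ on which every coordinate $[L]_y$ is simultaneously affine. This finiteness is the mechanism that promotes the weak inequality one gets for free from embedding to the strict gap demanded by calibration.

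For each cell I form the normal cone $\mathcal{N}(C_i) := \{p \in \Delta^k : C_i \subseteq \argmin_{v \in \mathbb{R}^d} \langle p, L(v)\rangle\}$, itself polyhedral. Cells with $\mathcal{N}(C_i) = \emptyset$ get an arbitrary label (with a consistent tie-breaking rule on shared faces). On cells with $\mathcal{N}(C_i) \neq \emptyset$ I would pick any $p_i$ in the relative interior of $\mathcal{N}(C_i)$, choose $r_i \in \gamma_\ell(p_i)$ (nonempty since $\mathcal{R}$ is finite), and set $\psi|_{C_i} \equiv r_i$. By part~2 of \cref{main-definition: loss embedding} applied to $p_i$, $\varphi(r_i)$ minimizes $\langle p_i, L(\cdot)\rangle$. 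The central intermediate claim is that this $r_i$ actually satisfies $r_i \in \gamma_\ell(p)$ for \emph{every} $p \in \mathcal{N}(C_i)$: writing $p_i = \lambda p + (1-\lambda) p'$ for some nearby $p' \in \mathcal{N}(C_i)$ and $\lambda \in (0,1)$, and using that $\langle p, L(\varphi(r_i)) - L(u)\rangle \geq 0$ for any $u \in C_i$ and any $p \in \mathcal{N}(C_i)$, forces equality, so $\varphi(r_i) \in \argmin_v \langle p, L(v)\rangle$; then part~2 of \cref{main-definition: loss embedding} in reverse yields $r_i \in \gamma_\ell(p)$.

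With the claim in hand, calibration follows: if $\psi(u) = r' \notin \gamma_\ell(p)$ with $u \in C_i$, the claim forces $p \notin \mathcal{N}(C_i)$, so $C_i \not\subseteq \argmin_v \langle p, L(v)\rangle$; since $\langle p, L(\cdot)\rangle$ is affine on $C_i$ and does not identically attain the global minimum there, it exceeds that minimum on $\mathrm{relint}(C_i)$ by a positive amount, and the minimum over the finitely many bad cells gives a uniform strict gap. The main obstacle is precisely this strictness: the weak inequality $\langle p, L(u)\rangle \geq \inf_v \langle p, L(v)\rangle$ is immediate from embedding, but \cref{main-definition: calibration} insists on a strict gap over the non-closed, possibly unbounded set $\{u : \psi(u) \notin \gamma_\ell(p)\}$, and this is where the piecewise-linear hypothesis is indispensable, since without it a convex surrogate could embed $\ell$ yet have its argmin approached arbitrarily closely from outside. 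A secondary subtlety is handling unbounded cells whose affine $\langle p, L(\cdot)\rangle$ has a zero-slope recession direction, where the cell infimum is attained only at infinity; resolving this requires either refining the decomposition to isolate the offending recession directions or bounding the cell's contribution using its adjacency to a cell that genuinely lies in $A_p$.
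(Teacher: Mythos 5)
The paper does not actually prove this theorem; it cites it from \citet{finocchiaro2019embedding}, so there is no in-paper argument to compare against. Evaluating the proposal on its own terms, the construction of the link has a genuine gap that makes the plan fail, and it cannot be repaired by a tie-breaking rule alone.

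You make $\psi$ constant on each maximal cell $C_i$ of the polyhedral decomposition of $L$, choosing the label from $\gamma_\ell(p_i)$ for a $p_i$ in $\mathrm{relint}(\mathcal{N}(C_i))$. The intermediate claim (that this label lies in $\gamma_\ell(p)$ for every $p \in \mathcal{N}(C_i)$) is correct; the convex-combination argument is fine. The strictness step, however, is wrong: you assert that for $p \notin \mathcal{N}(C_i)$ the affine function $\langle p, L(\cdot)\rangle$ exceeds the global minimum ``on $\mathrm{relint}(C_i)$ by a positive amount.'' That is true pointwise but not as an infimum. Since $\langle p, L(\cdot)\rangle$ is continuous, $\inf_{\mathrm{relint}(C_i)}\langle p, L(\cdot)\rangle = \inf_{\overline{C_i}}\langle p, L(\cdot)\rangle$, and whenever $\overline{C_i}$ touches $\argmin_v\langle p, L(v)\rangle$ --- the generic situation for any cell adjacent to the argmin --- this infimum equals $\inf_v\langle p, L(v)\rangle$ exactly. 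If the label on $C_i$ is not in $\gamma_\ell(p)$, the calibration inequality in \cref{main-definition: calibration} is then violated. The unbounded-cell recession issue you flag is a red herring; the problem already occurs on bounded cells.

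Here is a concrete instance of the theorem's hypotheses where no cell-constant $\psi$ can be calibrated. Take $d=1$, $k=2$, $L(v) = \bigl(h(-v),\, h(v)\bigr)$ with $h(x) = \max\{0, 1-x\}$, $\mathcal{R} = \{a,b,c\}$, $\varphi(a) = -1$, $\varphi(b)=0$, $\varphi(c)=1$, and $\ell(r) := L(\varphi(r))$, so $\ell(a)=(0,2)$, $\ell(b)=(1,1)$, $\ell(c)=(2,0)$. One checks that $L$ embeds $\ell$: $\gamma_\ell(p)=\{a\}$ and $\Gamma_L(p)=\{-1\}$ when $p_1>p_2$; $\gamma_\ell(p)=\{c\}$ and $\Gamma_L(p)=\{1\}$ when $p_1<p_2$; $\gamma_\ell(p)=\{a,b,c\}$ and $\Gamma_L(p)=[-1,1]$ when $p_1=p_2$. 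The maximal $L$-affine cells are $(-\infty,-1]$, $[-1,1]$, $[1,\infty)$. Whatever single label $r_2$ your scheme assigns to $[-1,1]$: if $r_2 \ne c$, then for any $p$ with $p_1<p_2$ the set $\{u : \psi(u)\notin\gamma_\ell(p)\}$ contains $(-1,1)$, on which $\inf_u\langle p, L(u)\rangle = 2p_1 = \inf_v\langle p,L(v)\rangle$ (approached as $u\to 1^-$); if $r_2\ne a$, the symmetric failure occurs for $p_1>p_2$; and $r_2=b$ fails for both. So the proof as planned cannot be completed. What a correct proof needs is a finer partition whose boundaries are informed by the geometry of the embedded points $\varphi(\mathcal{R})$ rather than by the cells on which $L$ is affine --- in this example the Voronoi cells of $\{-1,0,1\}$ give a working $\psi$ --- together with a uniform positive-gap argument across those refined cells; that refinement is where the real content of the Finocchiaro et al.\ result lies.
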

\citet{finocchiaro2019embedding} also provide an explicit construction for $\psi$ given $L,\ell$ and $\varphi$.

In this work, we are interested in the case when $L$ is the WW-hinge loss:
  \begin{definition}
    \label{main-definition: full WW hinge loss}
    For $v \in \mathbb{R}^{k}$, define the
\emph{Weston-Watkins hinge loss}
\cite{weston1999support}
$L(v) \in \mathbb{R}^{k}_+$ entrywise by
\[
  [L(v)]_y = \sum_{i \in [k]\,:\, i \ne y} 
  h(v_y - v_i), \qquad y \in [k]
\]
where $h : \mathbb{R} \to \mathbb{R}_+$ is the \emph{hinge function} defined by $h(x) = \max\{0,1-x\}$.
  \end{definition}
  By \cref{main-theorem: embedding implies calibration}, to prove that $L$ is calibrated with respect to $\ell$, it suffices to show that $L$ embeds $\ell$.
  Going forward, $L$ will refer to the WW-hinge loss.
  We now work toward showing that $L$ embeds the ordered partition loss $\ell$, which we introduce next.

\begin{SCfigure}
  \label{main-figure: Bayes optimal classifier of the OP loss}
  \centering
  \caption{
    The gray triangle represents the probability simplex $\Delta^3$, where $(p_1,p_2,p_3) \in \Delta^3$ is plotted as $(p_2,p_3)$ in the plane.
    The interior of each polygonal region contains $p \in \Delta^3$ such that $\min_{\mathbf{S} \in \mathcal{OP}_k} \langle p, \ell(\mathbf{S})\rangle$ has a unique minimizer.
    For the derivations, see \cref{section: figure 1 generating code}.
    Ordered partitions are represented as follows:\newline 
    \smallskip$\quad(\{1\},\{2,3\}) \mapsto 1 | 23$,\newline
    \smallskip$\quad(\{1\},\{2\},\{3\})\mapsto1|2|3$,\newline
    \smallskip $\qquad\vdots$\newline
    \smallskip$\quad(\{3\},\{2\},\{1\}) \mapsto 3|2|1$.
  }
  \begin{tikzpicture}[scale = 0.75]
    \fill[black!5!white] (0,0) -- (6,0) -- (0,6) -- cycle;
    \draw[help lines, dashed] (0,0) grid (6,6);
    \draw (0,6) -- (6,0);
    \node[draw,circle,inner sep=2pt,fill] at (0,0) {};
    \node[below, scale = 0.8] at (-0.5,-0.1) {$(1,0,0)$};
    \node[draw,circle,inner sep=2pt,fill] at (0,6) {};
    \node[left, scale = 0.8] at (-0.1,6) {$(0,0,1)$};
    \node[draw,circle,inner sep=2pt,fill] at (2,2) {};
    \node[draw,circle,inner sep=2pt,fill] at (6,0) {};
    \node[below, scale = 0.8] at (6,-.1) {$(0,1,0)$};
    \node[below] at (1,0) {$\frac{1}{6}$};
    \node[below] at (2,0) {$\frac{1}{3}$};
    \node[below] at (3,0) {$\frac{1}{2}$};
    \node[above] at (7,0) {$p_2$};
    \node[right] at (0.2,6.5) {$p_3$};
    \draw [<->] (0,6.5) -- (0,0) -- (6.5,0);
    \node[left] at (0,1) {$\frac{1}{6}$};
    \node[left] at (0,2) {$\frac{1}{3}$};
    \node[left] at (0,3) {$\frac{1}{2}$};
    \draw (0,0) -- (2,1);
    \draw (2,2) -- (2,1);
    \draw (3,0) -- (2,1);
    \draw (6,0) -- (3,1);
    \draw (2,2) -- (3,1);
    \draw (3,0) -- (3,1);
    \draw (6,0) -- (3,2);
    \draw (2,2) -- (3,2);
    \draw (3,3) -- (3,2);
    \draw (0,6) -- (2,3);
    \draw (2,2) -- (2,3);
    \draw (3,3) -- (2,3);
    \draw (0,6) -- (1,3);
    \draw (2,2) -- (1,3);
    \draw (0,3) -- (1,3);
    \draw (0,0) -- (1,2);
    \draw (2,2) -- (1,2);
    \draw (0,3) -- (1,2);
    \node at (1.5,1.5) {$1|23$};
    \node at (1.75,1/3) {$1|2|3$};
    \node[rotate = -45] at (2.5,1) {$12|3$};
    \node at (3.75,1/3) {$2|1|3$};
    \node[rotate = -45] at (3,1.5) {$2|13$};
    \node[rotate = -45] at (3.6, 2) {$2|3|1$};
    \node[rotate = -45] at (2, 3.6) {$3|2|1$};
    \node at (2.5,2.5) {$23|1$};
    \node[rotate = -45] at (1.5,3) {$3|12$};
    \node[rotate = -45] at (1,2.5) {$13|2$};
    \node[rotate = -90] at (0.3,3.8) {$3|1|2$};
    \node[rotate = -90] at (0.3,1.8) {$1|3|2$};
  \end{tikzpicture} 
\end{SCfigure}
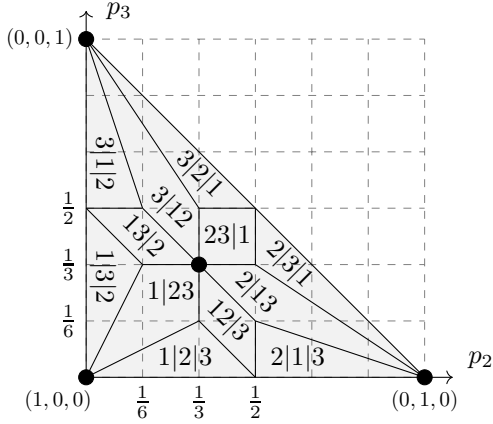
  \section{The ordered partition loss}\label{main-section: ordered partitions}

The prediction space $\mathcal{R}$ that we use is the set of ordered partitions, which we now define:
\begin{definition}
  \label{main-definition: ordered partitions}
  An \emph{ordered partition} on $[k]$ of length $l$ is an ordered list $\mathbf{S} = (S_1,\dots, S_l)$ of nonempty, pairwise disjoint subsets of $[k]$ such that $S_1 \cup \cdots \cup S_l = [k]$.
  Denote by $\mathcal{OP}_k$ the set of all ordered partitions on $[k]$ with length $\ge 2$.
  We write the length of $\mathbf{S}$ as $l_{\mathbf{S}}$ to be precise when working with multiple ordered partitions.
\end{definition}
Ordered partitions can be thought of as a complete ranking of $k$ items where ties are allowed.
They are widely studied in combinatorics \cite{mansour2012combinatorics, gross1962preferential, ishikawa2008euler}.
In the ranking literature, ordered partitions are called \emph{bucket orders}  \cite{fagin2004comparing} and the $S_i$s are called the \emph{buckets}.
The first bucket $S_1$ contains the highest ranked items, and so on.
  There is only one ordered partition with $l_{\mathbf{S}} = 1$, namely the \emph{trivial partition} $\mathbf{S} = ([k])$. Thus, $\mathcal{OP}_k$ is the set of nontrivial ordered partitions.

  We now define the following discrete loss over the ordered partitions:

\begin{definition}
  \label{main-definition: ordered partition loss}
  The \emph{ordered partition loss}
$
  \ell : \mathcal{OP}_k \to \mathbb{R}^k_+
$
is defined, for $i \in [k]$ and $\mathbf{S} = (S_1,\dots, S_l) \in \mathcal{OP}_k$, as
$
  [\ell(\mathbf{S})]_i
  =
|S_1|-1
+\sum_{j=1}^{l_{\mathbf{S}}-1}
|S_1 \cup \dots \cup S_{j+1}|
\cdot
\mathbb{I}\{
i \not \in S_1 \cup \cdots \cup S_j\}.
$
\end{definition}
To build intuitions about $\ell$, let $Y \sim p$ and consider the random variable $[\ell(\mathbf{S})]_Y$ whose expectation is
\begin{equation}
  \label{main-equation: ordered partition inner risk}
  \mathbb{E}_{Y \sim p} \left\{[\ell(\mathbf{S})]_Y\right\}
  =
|S_1|-1
+\sum_{j=1}^{l_{\mathbf{S}}-1}
|S_1 \cup \dots \cup S_{j+1}|
\cdot
\Pr_{Y \sim p}
\left\{Y \not \in S_1 \cup \cdots \cup S_j\right\}.
\end{equation}
Note that $
  \mathbb{E}_{Y \sim p} \left\{[\ell(\mathbf{S})]_Y\right\}
  =
  \langle p, \ell(\mathbf{S})\rangle
  $.
In \cref{main-figure: Bayes optimal classifier of the OP loss}, we visualize the decision rule for the Bayes optimal classifier in the $k=3$ case by plotting the function $p \mapsto \argmin_{\mathbf{S} \in \mathcal{OP}_k} 
  \langle p, \ell(\mathbf{S})\rangle
$.
When $l_\mathbf{S} = 2$, we have
\begin{equation}
  \mathbb{E}_{Y \sim p} \left\{[\ell(\mathbf{S})]_Y\right\}
  =
|S_1|-1
+
k
\Pr_{Y \sim p}
\left\{Y \not \in S_1 \right\}.
\end{equation}
Thus, we have a trade-off where adding elements to $S_1$ increases the $|S_1| - 1$ term but decreases the $
k
\Pr_{Y \sim p}
\left\{Y \not \in S_1 \right\}$ term.
More generally, when $l_{\mathbf{S}} \ge 2$, the ordered partition loss requires the predictor to associate each test instance $x$ with a nested sequence of sets $S_1,S_1\cup S_2,\cdots$ where these sets are designed to balance the probability of containing $x$'s label with the size of the set.
In the learning with partial labels settings \cite{cour2011learning, cid2012proper}, for each training instance the learner observes a set of labels, one of which is the true label.
The sets $S_1,S_1\cup S_2,\dots$ might be called \emph{progressive partial labels} in the spirit of partial label learning \cite{cour2011learning, cid2012proper}.

Next, we define the embedding that satisfies  \cref{main-definition: loss embedding} when $L$ is the WW-hinge loss and $\ell$ is the ordered partition loss:
  \begin{definition}
    \label{main-definition: embedding map varphi}
    The \emph{embedding} $\varphi : \mathcal{OP}_k \to \mathbb{R}^{k}$ is defined as follows:
  Let $\mathbf{S} = (S_1,\dots, S_l) \in \mathcal{OP}_k$.
    Define $\varphi(\mathbf{S}) \in \mathbb{R}^{k}$ entrywise so that for all $i \in [l_{\mathbf{S}}]$ and all $j \in S_i$, we have
  $
    [\varphi(\mathbf{S})]_{j} = -(i-1).
  $
  \end{definition}
  With the discrete loss $\ell$ and the embedding map $\varphi$ defined, we now proceed to the main results.

\section{Main results}
\label{main-section: main results}
In this work, we establish that the WW-hinge loss embeds the ordered partition loss:
\begin{theorem}
  \label{main-theorem: embedding result only}
  The Weston-Watkins hinge loss $L: \mathbb{R}^k \to \mathbb{R}^k$ embeds the ordered partition loss $\ell: \mathcal{OP}_k \to \mathbb{R}^k$ with embedding $\varphi$ as in \cref{main-definition: embedding map varphi}.
\end{theorem}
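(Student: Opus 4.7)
The proof splits into verifying the two conditions of \cref{main-definition: loss embedding}. For the first, that $L(\varphi(\mathbf{S})) = \ell(\mathbf{S})$ for every $\mathbf{S} \in \mathcal{OP}_k$, I would proceed by direct calculation. Fix $\mathbf{S} = (S_1,\dots,S_l)$ and $y \in S_a$; then for any $i \in S_b$ one has $[\varphi(\mathbf{S})]_y - [\varphi(\mathbf{S})]_i = b - a$, so $h(b-a) = \max\{0, 1 + a - b\}$ is nonzero exactly when $b \leq a$, taking value $1+a-b$. Grouping the sum $\sum_{i\neq y} h([\varphi(\mathbf{S})]_y - [\varphi(\mathbf{S})]_i)$ by the bucket containing $i$ yields
\[
[L(\varphi(\mathbf{S}))]_y \;=\; (|S_a|-1) \;+\; \sum_{b=1}^{a-1} |S_b|\,(1+a-b),
\]
which, after interchanging the order of summation, matches the expression for $[\ell(\mathbf{S})]_y$ in \cref{main-definition: ordered partition loss}.

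For the argmin equivalence, my plan is to prove the stronger statement
\[
\min_{v \in \mathbb{R}^k} \langle p, L(v)\rangle \;=\; \min_{\mathbf{S} \in \mathcal{OP}_k}\langle p, \ell(\mathbf{S})\rangle \qquad \text{for all } p \in \Delta^k.
\]
Granting this, the second embedding condition is immediate from Part 1: $\mathbf{S}$ lies in $\argmin_{\mathbf{S}'}\langle p, \ell(\mathbf{S}')\rangle$ iff $\varphi(\mathbf{S})$ attains the common minimum value, iff $\varphi(\mathbf{S}) \in \argmin_v \langle p, L(v)\rangle$. The inequality $\geq$ is free because each $\varphi(\mathbf{S})$ is a candidate $v$; the work is the direction $\leq$, namely producing, for every $v \in \mathbb{R}^k$, some $\mathbf{S}_v \in \mathcal{OP}_k$ with $\langle p, \ell(\mathbf{S}_v)\rangle \leq \langle p, L(v)\rangle$.

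I propose to establish this via a rounding argument. Given $v$, sort $v_{\pi(1)} \geq \cdots \geq v_{\pi(k)}$ and let $\mathbf{S}_v$ be the ordered partition whose buckets are the level sets of $v$ listed in decreasing order (if all entries of $v$ coincide, fall back on $(\{i^*\}, [k]\setminus\{i^*\})$ with $i^* \in \argmax_i p_i$; a direct check gives $k(1-p_{i^*}) \leq k-1 = \langle p, L(v)\rangle$ in this degenerate case). Then deform $v$ along the segment $v(t) = (1-t)\,v + t\,\varphi(\mathbf{S}_v)$, $t \in [0,1]$. Because $R(u) := \langle p, L(u)\rangle$ is convex and piecewise linear, $t \mapsto R(v(t))$ is convex piecewise linear on $[0,1]$, with breakpoints only where some coordinate difference $v(t)_y - v(t)_i$ crosses $\pm 1$. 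A careful slope accounting, grouped by the buckets of $\mathbf{S}_v$, should show that this univariate restriction is nonincreasing, yielding $R(\varphi(\mathbf{S}_v)) \leq R(v)$.

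The main obstacle is this rounding/slope-accounting step. Moving from $v$ to $\varphi(\mathbf{S}_v)$ simultaneously modifies many terms $h(v_y - v_i)$: some transition from strictly positive to zero, others from zero to strictly positive, and many vary linearly throughout. The net signed contribution, weighted by the $p_y$, must be shown nonpositive, and pairs $(y,i)$ whose difference sits exactly on the kink $\pm 1$ introduce boundary cases demanding separate care. If the direct rounding proves too delicate, an alternative route is to analyze the subdifferential stationarity condition $0 \in \partial R(v^*)$ at a minimizer: convex piecewise linearity yields a system of linear constraints on the subgradient coefficients, from which one argues that every pair of distinct coordinate values of $v^*$ must differ by exactly $1$, so that (after the translation $R$ is invariant under) $v^*$ lies in $\varphi(\mathcal{OP}_k)$.
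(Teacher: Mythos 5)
Your treatment of condition (1) is correct and actually cleaner than the paper's route. The direct bucket-by-bucket grouping gives
\[
[L(\varphi(\mathbf{S}))]_y = (|S_a|-1) + \sum_{b=1}^{a-1} |S_b|\,(1+a-b), \qquad y \in S_a,
\]
and interchanging the order of summation does match $[\ell(\mathbf{S})]_y = -1 + \sum_{b=1}^{a}(a-b+1)|S_b|$. The paper instead deduces condition (1) from a separately-proved identity $\langle p, \ell(\mathbf{S})\rangle = \langle p, L(\varphi(\mathbf{S}))\rangle$ tested on the elementary basis vectors $e_i$; yours is shorter and self-contained.

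Your reduction of condition (2) to the equality $\underline{L}(p) = \underline{\ell}(p)$, and the observation that the inequality $\underline{L}(p) \le \underline{\ell}(p)$ is free, are both correct. However, the rounding argument you propose for the hard direction has a genuine gap: as stated it is false. Take $k=3$, $p = (0.01, 0.98, 0.01)$, and $v = (0, -0.1, -10)$. Your $\mathbf{S}_v$ is $(\{1\},\{2\},\{3\})$, so $\varphi(\mathbf{S}_v) = (0,-1,-2)$. A direct computation gives $L(v) = (0.9, 1.1, 21.9)$ hence $\langle p, L(v)\rangle = 1.306$, while $L(\varphi(\mathbf{S}_v)) = (0,2,5)$ hence $\langle p, L(\varphi(\mathbf{S}_v))\rangle = 2.01$. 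The restriction $t \mapsto \langle p, L((1-t)v + t\varphi(\mathbf{S}_v))\rangle$ is linear and strictly increasing on $[0,1]$ here, not nonincreasing. The root cause is that your $\mathbf{S}_v$ is built from $v$'s ordering alone, which can be badly misaligned with the ordering of $p$; moving $v_2$ away from $v_1$ inflates the heavily-weighted term $[L(\cdot)]_2$. Any repair must first argue (via the equivariance $L(\sigma v) = \sigma L(v)$ and a rearrangement inequality) that one may restrict to $v$ sorted compatibly with a sorted $p$ — and even then the slope accounting at the kinks remains delicate. Your sketched alternative via subdifferential stationarity also needs care: it is not true that \emph{every} minimizer lies in $\varphi(\mathcal{OP}_k)$ (for $p = (1/k,\dots,1/k)$ the constant vector $0$ is a minimizer), only that \emph{some} minimizer does, and the characterization ``every pair of distinct coordinate values of $v^*$ must differ by exactly $1$'' is not the defining property of $\SkCZ$ — what is needed is that consecutive distinct values differ by exactly $1$.

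For contrast, the paper fills exactly this gap by a quite different route: after reducing to $p \in \Delta^k_\downarrow$ via $\Perm{k}$-equivariance and a rearrangement lemma, it recasts $\inf_v \langle p, L(v)\rangle$ as a linear program in the differences $z = \pi(v)$, shows the constraint matrix is totally unimodular so an integral optimum $z^* \in \mathbb{Z}^{k-1}_+$ exists, and then applies a ``margin tightening'' step to push $z^*$ into $\mathcal{C}_{\mathbb{Z}}$ without increasing the objective. That total-unimodularity argument is the piece your proposal is missing; the rounding heuristic alone does not substitute for it.
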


In light of \cref{main-theorem: embedding implies calibration}, \cref{main-theorem: embedding result only} implies
\begin{corollary}
  \label{main-corollary: calibration result only}
  $L$ is calibrated with respect to $\ell$.
\end{corollary}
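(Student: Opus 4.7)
The plan is to invoke \cref{main-theorem: embedding implies calibration} directly: that theorem asserts that any convex piecewise-linear surrogate which embeds a discrete loss is automatically calibrated with respect to it. So the corollary reduces to checking the three hypotheses of that theorem in our setting.

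First I would verify that the Weston-Watkins hinge loss is convex and piecewise-linear. From \cref{main-definition: full WW hinge loss}, each coordinate $[L(v)]_y = \sum_{i \ne y} h(v_y - v_i)$ is a finite sum of compositions of the hinge function $h(x) = \max\{0, 1-x\}$ with affine maps in $v$, so each coordinate is convex and piecewise linear in $v \in \mathbb{R}^k$. Second, $\ell : \mathcal{OP}_k \to \mathbb{R}^k_+$ is a discrete loss because $\mathcal{OP}_k$ is a finite set (its elements are sequences of nonempty pairwise disjoint subsets of $[k]$). Third, $L$ embeds $\ell$ via the map $\varphi$ of \cref{main-definition: embedding map varphi} — this is precisely the content of \cref{main-theorem: embedding result only}.

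With the three hypotheses in place, \cref{main-theorem: embedding implies calibration} yields a link $\psi : \mathbb{R}^k \to \mathcal{OP}_k$ witnessing that $L$ is calibrated with respect to $\ell$ in the sense of \cref{main-definition: calibration}, which is exactly the statement of the corollary. There is no genuine obstacle: the corollary is just a packaging of the embedding theorem through the Finocchiaro-Frongillo framework, with all substantive work sitting inside \cref{main-theorem: embedding result only}.
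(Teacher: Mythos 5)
Your proposal is correct and matches the paper's argument: the corollary is stated as an immediate consequence of \cref{main-theorem: embedding result only} via \cref{main-theorem: embedding implies calibration}, exactly as you describe. Your explicit verification that $L$ is convex piecewise-linear and that $\ell$ is discrete is a helpful (if routine) elaboration of what the paper leaves implicit.
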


In the remainder of this section, we develop the tools necessary to prove \cref{main-theorem: embedding result only}.

\subsection{Vectorial representation of ordered partitions}
First, we define the set $\SkCZ$ whose elements serve as realizations of ordered partitions inside $\mathbb{R}^k$.
\begin{definition}
  \label{main-definition: SkCZ}
  Define the following sets:
  \begin{equation}
    \mathcal{C}
    :=
    \{
      v \in \mathbb{R}^{k} : v_1 = 0, v_k \le -1,\,v_i - v_{i+1} \in [0,1], \, \forall i \in [k-1]
    \},
    \qquad
    \mathcal{C}_{\mathbb{Z}} := \mathcal{C} \cap \mathbb{Z}^k
  \end{equation}
  and finally
  $
    \SkCZ := \bigcup_{\sigma \in \Perm{k}} \sigma \mathcal{C}_{\mathbb Z}
  $
  where $\sigma \mathcal{C}_{\mathbb Z} = \{\sigma v : v \in \mathcal{C}_{\mathbb Z}\}$.
\end{definition}

A vector $v \in \mathbb{R}^k$ is \emph{monotonic non-increasing} if $v_1 \ge v_2 \ge \cdots \ge v_k$.
Note that vectors in $\mathcal{C}_{\mathbb{Z}}$ are nonconstant, integer-valued monotonic non-increasing such that consecutive entries decrease at most by $1$.
Furthermore, by construction, $\SkCZ$ consists of all possible permutations of elements in $\mathcal{C}_{\mathbb{Z}}$.
Therefore, the entries of an element $v \in \SkCZ$ take on every value in $0,-1,\dots, -(l-1)$ for some integer $l \in \{2,\ldots,k\}$.
Thus, $v \in \SkCZ$ can be thought of as vectorial representation of the ordered partition $\mathbf{S} = (S_1,\dots, S_l)$ where $S_i = \{j : v_j = -(i-1)\}$ for each $i \in [l]$.
In \cref{main-theorem: bijection between OP representations} below, we make this notion precise.

  \begin{lemma}
    \label{main-lemma: image of the embedding map is SkCZ}
    The image of $\varphi$ is contained in $\SkCZ$.  
  \end{lemma}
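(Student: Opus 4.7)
The plan is to reduce the claim to showing that after rearranging the coordinates of $\varphi(\mathbf{S})$ into non-increasing order, the resulting vector lies in $\mathcal{C}_{\mathbb{Z}}$; by the very definition $\SkCZ = \bigcup_{\sigma \in \Perm{k}} \sigma \mathcal{C}_{\mathbb{Z}}$, this will suffice.

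Fix $\mathbf{S} = (S_1, \ldots, S_l) \in \mathcal{OP}_k$ and set $v := \varphi(\mathbf{S})$. First I would record directly from \cref{main-definition: embedding map varphi} that the multiset of entries of $v$ consists of the value $-(i-1)$ with multiplicity $|S_i|$, for $i \in [l]$; in particular $v \in \mathbb{Z}^k$ and its entries lie in $\{0, -1, \ldots, -(l-1)\}$. Then I would define the ``sorted'' vector $w \in \mathbb{Z}^k$ by placing all the $0$'s first, then all the $-1$'s, and so on: $w_m = -(i-1)$ whenever $|S_1| + \cdots + |S_{i-1}| < m \le |S_1| + \cdots + |S_i|$. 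The defining conditions of $\mathcal{C}_{\mathbb{Z}}$ are then immediate: $w_1 = 0$ because $S_1 \ne \emptyset$; $w_k = -(l-1) \le -1$ because $l \ge 2$ for every $\mathbf{S} \in \mathcal{OP}_k$; and consecutive differences $w_m - w_{m+1}$ are either $0$ (within a bucket) or $1$ (across a bucket boundary), so they lie in $[0,1]$. Thus $w \in \mathcal{C}_{\mathbb{Z}}$.

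To conclude, I would exhibit an explicit $\sigma \in \Perm{k}$ with $v = \sigma w$. The paper's convention $[\sigma w]_j = w_{\sigma(j)}$ means I need, for each $j \in S_i$, that $\sigma(j)$ land in the block $\{|S_1| + \cdots + |S_{i-1}| + 1, \ldots, |S_1| + \cdots + |S_i|\}$ of indices where $w$ equals $-(i-1)$. Since $S_i$ has exactly the same cardinality as that block, any bijection between them will do, and gluing together these per-bucket bijections over $i \in [l]$ produces the required permutation. Hence $v \in \sigma \mathcal{C}_{\mathbb{Z}} \subseteq \SkCZ$.

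This argument is essentially a bookkeeping exercise and I do not foresee a real obstacle. The only minor trap is to respect the indexing convention $[\sigma w]_j = w_{\sigma(j)}$ so that $\sigma$ maps bucket indices to the sorted positions rather than the reverse; getting this direction wrong would yield $\sigma^{-1} w$ instead of $\sigma w$, but since $\Perm{k}$ is closed under inversion this is only a presentational issue.
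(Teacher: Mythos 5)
Your proof is correct and follows essentially the same route as the paper's: sort the entries of $\varphi(\mathbf{S})$ to land in $\mathcal{C}_{\mathbb{Z}}$ and invoke the definition of $\SkCZ$ as permutations of $\mathcal{C}_{\mathbb{Z}}$. You spell out in more detail the verification that the sorted vector satisfies the three defining conditions of $\mathcal{C}_{\mathbb{Z}}$ and the explicit construction of the sorting permutation, both of which the paper leaves implicit, but the underlying argument is identical.
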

  \begin{proof}
    Let $\mathbf{S} \in \mathcal{OP}_k$. It suffices to prove that there exists some $\sigma \in \Perm{k}$ such that $\sigma \varphi(\mathbf{S}) \in \mathcal{C}_{\mathbb{Z}}$.
    Note that by definition, we have the set of unique values of $\varphi(\mathbf{S})$ is
    \[
      \{[\varphi(\mathbf{S})]_j : j \in [k]
      \}
      =
      \{0, -1, -2,\dots, -(l_{\mathbf{S}}-1)\}.
    \]
    Thus, let $\sigma \in \Perm{k}$ be such that $\sigma \varphi(\mathbf{S})$ is monotonic non-increasing. Then $\sigma \varphi(\mathbf{S}) \in \mathcal{C}_{\mathbb{Z}}$.
  \end{proof}

  Next, we define the inverse of $\varphi$.

  \begin{definition}
    \label{main-definition: quasi-link map}
    The \emph{quasi-link map} $\tilde \psi : \SkCZ \to \mathcal{OP}_k$ is defined as follows:
    Given $v \in \SkCZ$, let $l = 1-\min_{j\in[k]} v_j$.
    Define $ S_i = \{j \in [k]: v_j = -(i-1)\}$
for each $i \in [l]$.
    Finally, define $\tilde \psi(v)
    =(S_1,\dots, S_l)$.
  \end{definition}
  The tilde in $\tilde \psi$ is to differentiate the quasi-link from $\psi$ in \cref{main-definition: calibration}.
\begin{proposition}
  \label{main-theorem: bijection between OP representations}
  The embedding map $\varphi: \mathcal{OP}_k \to \SkCZ$ given in \cref{main-definition: embedding map varphi} is a bijection with inverse given by the quasi-link map $\tilde \psi$ from \cref{main-definition: quasi-link map}.
\end{proposition}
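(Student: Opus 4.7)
The plan is to check the three standard obligations: that $\tilde\psi$ genuinely lands in $\mathcal{OP}_k$, that $\tilde\psi \circ \varphi = \mathrm{id}_{\mathcal{OP}_k}$, and that $\varphi \circ \tilde\psi = \mathrm{id}_{\SkCZ}$. Lemma (image of the embedding map is $\SkCZ$) already puts $\varphi$ into $\SkCZ$, so these three facts establish the bijection.

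First I would verify well-definedness of $\tilde\psi$. Given $v \in \SkCZ$ write $v = \sigma w$ with $w \in \mathcal{C}_\mathbb{Z}$; since $\sigma$ is a permutation, the multiset of values of $v$ equals that of $w$. From $w \in \mathcal{C}_\mathbb{Z}$ we have $w_1 = 0$, $w_k \le -1$, and $w_i - w_{i+1} \in [0,1] \cap \mathbb{Z} = \{0,1\}$. Setting $l := 1 - \min_{j} v_j = 1 - w_k$, the integer sequence $w_1 = 0, w_2, \dots, w_k = -(l-1)$ descends in unit or zero steps, so it attains every integer in $\{0, -1, \dots, -(l-1)\}$. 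Consequently the sets $S_i = \{j : v_j = -(i-1)\}$ for $i \in [l]$ are nonempty, pairwise disjoint, and cover $[k]$; since $w_k \le -1$ we have $l \ge 2$, so $\tilde\psi(v) = (S_1,\dots,S_l) \in \mathcal{OP}_k$.

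Next I would check $\tilde\psi \circ \varphi = \mathrm{id}$. For $\mathbf{S} = (S_1,\dots,S_{l_{\mathbf{S}}}) \in \mathcal{OP}_k$, the vector $v := \varphi(\mathbf{S})$ has $v_j = -(i-1)$ for $j \in S_i$, so $\min_j v_j = -(l_{\mathbf{S}}-1)$ and hence $\tilde\psi$ reads off length $l_\mathbf{S}$. The $i$-th bucket it produces, $\{j : v_j = -(i-1)\}$, is exactly $S_i$ by the definition of $\varphi$. Conversely, for $\varphi \circ \tilde\psi = \mathrm{id}$, take $v \in \SkCZ$ and let $\mathbf{S} = \tilde\psi(v) = (S_1,\dots,S_l)$. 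For any $j \in [k]$, $j$ lies in the unique $S_i$ with $v_j = -(i-1)$, and then $[\varphi(\mathbf{S})]_j = -(i-1) = v_j$.

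The only step with any content is the well-definedness check, and the key subtlety there is recognizing that the interval constraint $v_i - v_{i+1} \in [0,1]$ combined with integrality forces unit-or-zero descent, so that no value in $\{0, -1, \dots, -(l-1)\}$ is skipped; the two identity checks are then immediate bookkeeping on the definitions of $\varphi$ and $\tilde\psi$.
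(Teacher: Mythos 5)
Your proof is correct and follows essentially the same route as the paper's: verify the two composition identities $\tilde\psi\circ\varphi=\mathrm{id}$ and $\varphi\circ\tilde\psi=\mathrm{id}$ by unwinding the definitions. The one place you go beyond the paper's proof block is in explicitly verifying that $\tilde\psi$ is well-defined (the unit-or-zero-descent argument showing no value in $\{0,-1,\dots,-(l-1)\}$ is skipped, so the buckets are nonempty and exhaustive); the paper establishes this informally in the prose preceding the proposition rather than inside the proof itself, so your version is a touch more self-contained but not a different argument.
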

\begin{proof}
  
  We first show that for all $\tilde \psi(\varphi(\mathbf{S})) = \mathbf{S}$ for all $\mathbf{S} = (S_1,\dots, S_l)  \in \mathcal{OP}_k$.
  Observe that $S_i = \{j \in[k] : [\varphi(\mathbf{S})]_j = -(i-1)\}$ for all $i = 1,2,\dots, l$. This implies that $\tilde \psi(\varphi(\mathbf{S})) = \mathbf{S}$.

  Next, we show that $\varphi(\tilde \psi(v)) = v$ for all $v \in \SkCZ$.
  Let $\mathbf{S} = (S_1,\dots, S_l) = \tilde \psi(v)$.
  Then
  $[\varphi(\mathbf{S})]_j
  =-(i-1)$ if and only if $j \in S_i$. By definition $S_i = \{j \in [k]: v_j = -(i-1)\}$. Hence,
  $[\varphi(\mathbf{S})]_j
  =-(i-1)$ if and only if $v_j = -(i-1)$ which implies that $\varphi(\mathbf{S}) = v$, as desired.
\end{proof}
In the next section, using $\varphi$, we prove a relationship between the inner risk functions of $L$ and $\ell$.
\subsection{Inner risk functions}
Define the \emph{inner risk} functions $\underline L: \Delta^k \to \mathbb{R}_+$ and $\underline{\ell} : \Delta^k \to \mathbb{R}_+$ as follows:
\begin{equation}
  \label{main-equation: WW hinge bayes risk}
  \underline L(p) = \inf_{v \in \mathbb{R}^{k}}\langle p, L(v) \rangle,
  \quad \mbox{and} \quad
  \underline \ell(p) = \inf_{\mathbf{S} \in \mathcal{OP}_k}\langle p, \ell(\mathbf{S}) \rangle.
\end{equation}
Note that these functions appear in the second part of \cref{main-definition: loss embedding}, although here we have $\inf$ instead of $\min$.
Since $\mathcal{OP}_k$ is finite, the infimum in 
the definition of $\underline{\ell}$ is attained.
Later, we will argue that the infimum in the definition of $\underline{L}$ is also attained.

We now state the main structural result regarding $\underline{L}$:
  \begin{theorem}
    \label{main-theorem: L bayes risk is optimized over OP}
    For all $p \in \Delta^k$, we have
    \[
      \underline L(p) = \min_{v \in \SkCZ} \langle p, L(v) \rangle.
    \]
  \end{theorem}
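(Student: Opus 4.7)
The plan is to exhibit a vector in $\SkCZ$ that attains the infimum $\underline L(p)$. Since $\langle p, L(v+c\mathbf 1)\rangle = \langle p, L(v)\rangle$, I would first normalize $v_1 = 0$ so that $\langle p, L(\cdot)\rangle$ becomes a nonnegative, convex, piecewise linear function on a $(k-1)$-dimensional affine subspace. When all $p_y > 0$ the function is coercive---each term $p_y h(v_y - v_i)$ grows linearly whenever $v_y - v_i$ moves sufficiently negative---so the infimum is attained. When $p$ vanishes on a set $B$, the infimum is only approached by sending $v_i \to -\infty$ for $i \in B$, and the problem reduces to the full-support case on $A = [k]\setminus B$; its minimizer extends to $\SkCZ$ by adjoining $B$ as a new bottom bucket.

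With attainment in hand, I next argue there is an integer-valued minimizer. The function $\langle p, L(\cdot)\rangle$ is piecewise linear with kinks lying precisely on the arrangement of hyperplanes $\{v_y - v_i = 1 : y \ne i\}$, and it is the objective of the LP
\[
\min_{\xi, v} \sum_{y \ne i} p_y \xi_{y,i} \quad \text{s.t.} \quad \xi_{y,i} \ge 0,\ \xi_{y,i} + v_y - v_i \ge 1,\ v_1 = 0.
\]
At a vertex of this feasible polytope, the $v$-coordinates are determined by saturating $k-1$ equations of the form $v_{y_j} - v_{i_j} = 1$; together with $v_1 = 0$, these propagate integer values along a spanning subgraph of the complete digraph on $[k]$. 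Standard LP theory then guarantees that the minimum is attained at an integer vertex $v^* \in \mathbb Z^k$.

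Finally, I would eliminate gaps in the value set of $v^*$. Let the distinct values of $v^*$ be $\lambda_1 > \cdots > \lambda_l$, and suppose some $\lambda_j - \lambda_{j+1} \ge 2$. Set $A = \{i : v^*_i \ge \lambda_j\}$, $B = \{i : v^*_i \le \lambda_{j+1}\}$, and let $v'$ agree with $v^*$ on $A$ while $v'_i = v^*_i + 1$ on $B$. A case check of the pair contributions $h(v_y - v_i)$ shows they are unchanged within $A$, within $B$, and for $y \in A, i \in B$ (where both differences remain $\ge 1$), while each pair with $y \in B, i \in A$ contributes a decrease of exactly $p_y$. Summing,
\[
\langle p, L(v')\rangle - \langle p, L(v^*)\rangle = -|A|\sum_{y \in B} p_y \le 0.
\]
Optimality of $v^*$ forces $\sum_{y \in B} p_y = 0$, so $v'$ is also a minimizer with a strictly smaller gap; iterating removes all gaps. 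If the resulting minimizer happens to be constant, then it has value $k-1$, but the vector $v$ with $v_{y^*} = 0$ and $v_i = -1$ for $i \ne y^*$, where $y^* \in \argmax_y p_y$, lies in $\SkCZ$ and satisfies $\langle p, L(v)\rangle = k(1 - p_{y^*}) \le k-1$ (since $p_{y^*} \ge 1/k$), furnishing a non-constant witness; otherwise, translating the maximum entry to $0$ places the minimizer in $\SkCZ$ directly.

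I expect the main obstacle to be the integer-vertex claim: although the kink arrangement has integer shifts and the LP's constraint matrix has a graph-incidence structure suggestive of total unimodularity, care is required when the argmin face is positive-dimensional to extract an integer vertex. The gap-filling exchange is a clean pair-counting calculation and the initial attainment step follows from standard convex analysis.
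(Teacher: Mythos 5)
Your proof takes essentially the same route as the paper's: reformulate the inner-risk minimization as a linear program whose constraint matrix has a network structure yielding integral optimal vertices, then tighten the resulting integer minimizer so that all consecutive value gaps become $1$, landing in $\SkCZ$ after translation, with a separate explicit witness for the degenerate constant case. The paper's implementation differs only at the level of bookkeeping: it first permutes $p$ into $\Delta^k_\downarrow$, passes to a reduced loss $\LL$ on $\mathbb{R}^{k-1}$ via the map $\pi$ (eliminating the translation degree of freedom rather than fixing $v_1=0$), proves total unimodularity of the LP constraint matrix by Hoffman's condition, and then applies a one-shot margin-tightening lemma: writing the sorted minimizer as $-Tc$ with $c\in\mathbb{Z}_+^k$, it sets $s_i=\mathbb{I}\{c_i\ge 1\}$ and shows $[L(-Tc)]_y\ge[L(-Ts)]_y$ for every $y$. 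You skip the sort and instead close gaps one level at a time; your pair-counting exchange is correct and in fact establishes the entrywise statement $[L(v')]_y=[L(v^*)]_y$ for $y\in A$ and $[L(v')]_y=[L(v^*)]_y-|A|$ for $y\in B$, so iterating it reproduces the paper's lemma. Your constant-case witness anchored at $\argmax p$ works just as well as the paper's at $\argmin p$. The two places your sketch should be shored up are ones you already flag: the integer-vertex claim needs a genuine total-unimodularity argument (or the spanning-tree propagation argument in full, which incidentally is the device the paper uses in a different lemma, to show $P_\theta\cap\SkCZ\ne\emptyset$); and the opening paragraph's assertion that when $p$ vanishes on a set $B$ the infimum is ``only approached by sending $v_i\to-\infty$ for $i\in B$'' is actually false in general, but also irrelevant, since attainment follows immediately from the LP being bounded below over a polyhedron that contains no line.
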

  \begin{proof}[Sketch of proof]
    Note that $L$ is invariant under translation by any scalar multiple of the all ones vector. Thus, $L$ has an extra degree of freedom. We introduce a loss function $\LL: \mathbb{R}^{k-1} \to \mathbb{R}^k$ called the \emph{reduced WW-hinge loss}, which removes this extra degree freedom.
    Furthermore, there exists a mapping $\pi : \mathbb{R}^k \to \mathbb{R}^{k-1}$ such that $\langle p, L(v) \rangle
    = 
    \langle p, \LL(\pi (v))\rangle$ for all $p \in \Delta^k$ and $v \in \mathbb{R}^k$.
    Letting $z = \pi(v) \in \mathbb{R}^{k-1}$, we show that for a fixed $p$, the function $F_p(z) := \langle p ,\LL(z)\rangle$ is convex and piecewise-linear and the minimization of which can be formulated as a linear program \cite{bertsimas1997introduction}.
    Furthermore, since $F_p$ is nonnegative, the infimum $\inf_{z \in \mathbb{R}^{k-1}} F_p(z)$ is attained \cite[Corollary 3.2]{bertsimas1997introduction}, which implies that the infimum in the definition of $\underline{L}$ in \cref{main-equation: WW hinge bayes risk} is attained as well.
    The linear program is shown to be totally unimodular, which implies that an integral solution exists \cite{lawler2001combinatorial}, i.e., $\min_{z \in \mathbb{R}^{k-1}} F_p(z) = F_p(z^*)$ for some $z^* \in \mathbb{Z}^{k-1}$.
    From $z^*$, we obtain an integral $v^* \in \mathbb{Z}^k$ such that $\underline L(p) = \langle p, L(v^*)\rangle$.
    Finally, we construct an element $v^\dagger \in \SkCZ$ from $v^*$ in such a way that the objective does not increase, i.e.,
    $\langle p, L(v^*)\rangle \ge \langle p, L(v^\dagger)\rangle$, which implies that $\underline L(p) = \langle p, L(v^\dagger)$ by the optimality of $v^*$.
  \end{proof}

The ordered partition loss $\ell$ and the WW-hinge loss $L$ are related by the following:
\begin{theorem}
  \label{main-theorem: L and ell inner risk identity}
  For all $p \in \Delta^k$ and all $\mathbf{S} \in \mathcal{OP}_k$, we have
  \[
  \langle p, \ell(\mathbf{S}) \rangle
  =
\langle p, L(\varphi(\mathbf{S}))\rangle,\]
where $\varphi$ is the embedding map as in \cref{main-definition: embedding map varphi}.
\end{theorem}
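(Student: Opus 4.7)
The plan is to prove the stronger \emph{entrywise} identity $\ell(\mathbf{S}) = L(\varphi(\mathbf{S}))$ in $\mathbb{R}^k$, from which the inner-product equality in the theorem follows by pairing with $p$. Both sides are easy to expand combinatorially given the explicit form of $\varphi$, so the proof is essentially bookkeeping.

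Fix $\mathbf{S} = (S_1,\dots,S_l) \in \mathcal{OP}_k$ and set $v := \varphi(\mathbf{S})$. For each $y \in [k]$, let $a = a(y) \in [l]$ denote the unique index with $y \in S_a$. By \cref{main-definition: embedding map varphi}, $v_y = -(a-1)$ and $v_i = -(b-1)$ whenever $i \in S_b$, so $v_y - v_i = b - a$. Hence
\[
  h(v_y - v_i) \;=\; \max\{0,\, 1 - (b-a)\} \;=\; \max\{0,\, a - b + 1\},
\]
which equals $a - b + 1$ when $b \le a$ and vanishes when $b \ge a+1$. Grouping the sum defining $[L(v)]_y$ by the bucket $S_b$ containing $i$, and noting that the exclusion $i \ne y$ only matters in the bucket $b = a$ (where it removes one term whose coefficient $a - a + 1 = 1$), yields
\[
  [L(\varphi(\mathbf{S}))]_y \;=\; \sum_{b=1}^{a-1}(a - b + 1)\,|S_b| \;+\; (|S_a| - 1)
  \;=\; \sum_{b=1}^{a}(a - b + 1)\,|S_b| \;-\; 1.
\]

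For the other side, observe that $\mathbb{I}\{y \notin S_1 \cup \cdots \cup S_j\} = 1$ iff $j \le a - 1$, so
\[
  [\ell(\mathbf{S})]_y \;=\; |S_1| - 1 \;+\; \sum_{j=1}^{a-1}\,\sum_{b=1}^{j+1} |S_b|.
\]
Swapping the order of the double summation, the coefficient of $|S_b|$ is the number of $j \in \{1,\dots,a-1\}$ with $j \ge b-1$; this gives $a - b + 1$ for $2 \le b \le a$, and together with the leading $|S_1|$ it gives coefficient $a$ on $|S_1|$, which matches $a - 1 + 1$. Therefore $[\ell(\mathbf{S})]_y = \sum_{b=1}^{a}(a - b + 1)|S_b| - 1$, coinciding with the expression derived for $[L(\varphi(\mathbf{S}))]_y$.

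There is no real obstacle beyond correctly managing the boundary cases: $a = 1$ (the $\ell$-sum is empty and both sides collapse to $|S_1| - 1$), and $b = a$ (where the $y$-exclusion in $L$ and the upfront $|S_1|$ term in $\ell$ must be tracked carefully in the combinatorial identity). Having established the entrywise equality for every $y \in [k]$, taking inner product with any $p \in \Delta^k$ gives the claim.
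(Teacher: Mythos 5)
Your proof is correct, and it takes a genuinely different and more elementary route than the paper. The paper conjugates by a permutation $\sigma$ that sorts $\varphi(\mathbf{S})$, then passes through the lower-triangular all-ones matrix $T$ and its inverse $D$, writes $\langle p, L(\varphi(\mathbf{S}))\rangle = \langle T'(\sigma p), DL(-T\zeta(\mathbf{S}))\rangle$, and evaluates the right-hand side via a dedicated formula for $[DL(-Ts)]_y$ (its \cref{lemma: DL formula}); the inner-product identity is proved first and the entrywise identity $L(\varphi(\mathbf{S})) = \ell(\mathbf{S})$ is then recovered by plugging in $p = e_i$. You instead prove the entrywise identity directly by bucketing the sum $\sum_{i\ne y} h(v_y - v_i)$ according to the block $S_b$ containing $i$, noting $v_y - v_i = b - a$, and matching coefficients of $|S_b|$ against the swapped double sum in $[\ell(\mathbf{S})]_y$; the inner-product identity then follows trivially. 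Your computation is simpler and self-contained (no permutation equivariance, no $T$/$D$ machinery, no $\zeta$), whereas the paper's heavier scaffolding is amortized by reuse in its other results (e.g.\ the TUM/LP argument for \cref{main-theorem: L bayes risk is optimized over OP} and the $\Perm{k}$-equivariance lemmas). The bookkeeping in your proof checks out, including the boundary handling at $a = 1$ and the absorption of the leading $|S_1| - 1$ term into the $b = 1$ coefficient.
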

\begin{proof}
  [Sketch of proof]
  Let $\mathbf{S} = (S_1,\dots, S_l) \in \mathcal{OP}_k$ and $p \in \Delta^k$.
  Let $T \in \mathbb{R}^{k\times k}$ consist of ones on and below the main diagonal and zero everywhere else. 
Letting $D = T^{-1}$, we have \[\langle p, L(\varphi(\mathbf{S}))\rangle
  =
  \langle p, TD L(\varphi(\mathbf{S}))\rangle
  =
  \langle T'p, D L(\varphi(\mathbf{S}))\rangle
  .\]
  Next, we observe that $[T'p]_i = p_i + \cdots + p_k$ for each $i \in [k]$.
  We then show through a lengthy calculation that for each $i \in [k]$
   \begin{enumerate}[wide, labelwidth=!, labelindent=0pt]
     \item If $i = 1$, then 
    $[T'p]_1 = 1$ and $[DL(\varphi(\mathbf{S}))]_1 = |S_1| - 1$.
  \item If $i > 1$ and $i = |S_1 \cup \cdots \cup S_j|+1$ for some $j \in [l]$, then
    $
    [T'p]_i = \Pr_{Y \sim p} \left\{ Y \not \in S_1 \cup \dots \cup S_j
  \right\} 
  $ and $
  [DL(\varphi(\mathbf{S}))]_i
  =
  |S_1\cup \cdots \cup S_{j+1}|.
  $
\item For all other $i$, $ [DL(\varphi(\mathbf{S}))]_i =0$ (in which case the value of $[T'p]_i$ is irrelevant).
   \end{enumerate}
  From this, we deduce that $
  \langle T'p, D L(\varphi(\mathbf{S}))\rangle$ is equal to \cref{main-equation: ordered partition inner risk}.
\end{proof}

Next, we show that the inner risks of $L$ 
  and $\ell$ from
  \cref{main-equation: WW hinge bayes risk}
  are in fact identical:

\begin{corollary}
  \label{main-lemma: L and ell bayes risk are equal}
For all $p \in \Delta^k$, we have
$\underline{L}(p) = \underline{\ell}(p)$.
\end{corollary}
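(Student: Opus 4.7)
The plan is to observe that this corollary follows almost immediately by chaining together the three results already established, so there is no real obstacle beyond bookkeeping. First I would invoke \cref{main-theorem: L bayes risk is optimized over OP} to rewrite $\underline L(p)$ as a minimum over the discrete set $\SkCZ$ rather than over all of $\mathbb{R}^k$:
\[
\underline L(p) = \min_{v \in \SkCZ} \langle p, L(v)\rangle.
\]

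Next I would use \cref{main-theorem: bijection between OP representations}, which tells us that $\varphi : \mathcal{OP}_k \to \SkCZ$ is a bijection. This allows me to reparametrize the minimization: every $v \in \SkCZ$ is uniquely of the form $v = \varphi(\mathbf{S})$ for some $\mathbf{S} \in \mathcal{OP}_k$, so
\[
\min_{v \in \SkCZ} \langle p, L(v)\rangle = \min_{\mathbf{S} \in \mathcal{OP}_k} \langle p, L(\varphi(\mathbf{S}))\rangle.
\]

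Finally, I would apply \cref{main-theorem: L and ell inner risk identity}, which asserts that $\langle p, L(\varphi(\mathbf{S}))\rangle = \langle p, \ell(\mathbf{S})\rangle$ pointwise in $\mathbf{S}$, to conclude
\[
\min_{\mathbf{S} \in \mathcal{OP}_k} \langle p, L(\varphi(\mathbf{S}))\rangle = \min_{\mathbf{S} \in \mathcal{OP}_k} \langle p, \ell(\mathbf{S})\rangle = \underline\ell(p).
\]
Chaining the three displays yields $\underline L(p) = \underline \ell(p)$, as desired. The only subtlety worth remarking on is that the infimum in the definition of $\underline L$ is actually attained; this is already guaranteed by \cref{main-theorem: L bayes risk is optimized over OP}, whose sketch reduces $\underline L$ to a feasible, bounded linear program, so the use of $\min$ in place of $\inf$ above is justified.
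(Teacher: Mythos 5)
Your proposal is correct and uses exactly the same chain of three results — \cref{main-theorem: L bayes risk is optimized over OP}, the bijectivity of $\varphi$ from \cref{main-theorem: bijection between OP representations}, and \cref{main-theorem: L and ell inner risk identity} — that the paper uses, merely written in the opposite direction (from $\underline{L}$ to $\underline{\ell}$ rather than from $\underline{\ell}$ to $\underline{L}$). This is essentially the paper's own proof.
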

\begin{proof}
  Observe that
  \[
    \underline{\ell}(p)
    \overset{(a)}{=}
    \min_{\mathbf{S} \in \mathcal{OP}_k} \langle p, \ell(\mathbf{S})\rangle
    \overset{(b)}{=}
    \min_{\mathbf{S} \in \mathcal{OP}_k} 
    \langle p, L(\varphi(\mathbf{S}))\rangle
    \overset{(c)}{=}
    \min_{v \in \SkCZ} 
    \langle p, L(v)\rangle
    \overset{(d)}{=}
    \underline{L}(p)
  \]
  where $(a)$ follows from definition of $\underline{\ell}$,
  $(b)$ from \cref{main-theorem: L and ell inner risk identity},
  $(c)$ from the fact that $\varphi : \mathcal{OP}_k \to \SkCZ$ is a bijection (\cref{main-theorem: bijection between OP representations}),
  and $(d)$ from \cref{main-theorem: L bayes risk is optimized over OP}.
\end{proof}
Having developed all the tools necessary, we turn toward the proof of our main result \cref{main-theorem: embedding result only}.

\subsection{Proof of \cref{main-theorem: embedding result only}}
  We check that the two conditions in \cref{main-definition: loss embedding} holds. 
  The first condition is that $L(\varphi(\mathbf{S})) = \ell(\mathbf{S})$ for all $\mathbf{S} \in \mathcal{OP}_k$, which follows from \cref{main-theorem: L and ell inner risk identity}.
  To see this, note that for all $i \in [k]$ the $i$-th elementary basis vector $e_i \in \Delta^k$. Thus, we have
  \[
  [L(\varphi(\mathbf{S}))]_i
  =
    \langle e_i, L(\varphi(\mathbf{S}))\rangle
  =
\langle e_i, \ell(\mathbf{S})\rangle
=
  [\ell(\mathbf{S})]_i
\] for all $i\in[k]$. This implies that $L(\varphi(\mathbf{S})) = \ell(\mathbf{S})$, which is the first condition of \cref{main-definition: loss embedding}.

  Next, we check the second condition.
  Let $p \in \Delta^k$.
Define
$
  \gamma(p)
  :=
  \argmin_{\mathbf{S} \in \mathcal{OP}_k} \langle p, \ell(\mathbf{S}) \rangle,
  $ and $
  \Gamma(p)
  :=
  \argmin_{
  v \in \mathbb{R}^{k}}
  \langle p,
  L(v)
  \rangle.
$
    Furthermore, by the definition of $\gamma$, 
    $\mathbf{S} \in \gamma(p)$
    if and only if
    $\langle p, \ell(\mathbf{S}) \rangle = \underline{\ell}(p)$.
    Likewise, $\varphi(\mathbf{S}) \in \Gamma(p)$ 
    if and only if
    $\langle p, L(\varphi(\mathbf{S})) \rangle = \underline{L}(p)$.
    By \cref{main-lemma: L and ell bayes risk are equal} and 
    \cref{main-theorem: L and ell inner risk identity}, 
    we have
    $\langle p, \ell(\mathbf{S}) \rangle 
    = \underline{\ell}(p)$ 
    if and only if
    $\langle p, L(\varphi(\mathbf{S})) \rangle 
    = \underline{L}(p)$.
Putting it all together, we get
  $
    \mathbf{S} \in \gamma(p)
    $
    if and only if
    $
    \varphi(\mathbf{S}) \in \Gamma(p)
    $, which is the second condition of \cref{main-definition: loss embedding}.

    \section{Maximally informative losses}
    \label{main-section: maximally informative losses}
Going forward, let $L: \mathbb{R}^d \to \mathbb{R}^k_+$ be a generic surrogate loss.
The WW-hinge loss is denoted by $L^{WW}$ and the CS-hinge loss by $L^{CS}$.
Likewise, let $\ell : \mathcal{R} \to \mathbb{R}^k_+$ be a generic discrete loss.
The ordered partition loss is denoted by $\ell^{\mathcal{OP}}$ and the 0-1 loss by $\ell^{zo}$.

    We define the ``dual'' notion to the convex calibration dimension \cite{ramaswamy2016convex}:
\begin{definition}
  Let $L: \mathbb{R}^d \to \mathbb{R}^k_+$ be a loss. Define the \emph{embedding cardinality} of $L$ as
  \[
    \mathrm{emb.card}(L)
    :=
    \min
    \left\{ n \in \{2,3,\dots\}
      \mid
      \substack{
      \mbox{there exists a discrete loss } \ell: [n] \to \mathbb{R}^k
      \\
    \mbox{such that $L$ embeds $\ell$}
  }
\right\}.
  \]
  A discrete loss $\ell: \mathcal{R} \to \mathbb{R}^k$ is said to be \emph{maximally informative} for $L$ if $|\mathcal{R}| = \mathrm{emb.card}(L)$ and $L$ embeds $\ell$.
\end{definition}

  For each $k\in \{3,\dots, 7\}$,
     we showed that by a computer search that
for all $\mathbf{S} \in \mathcal{OP}_k$,
there exists $p \in \Delta^k$ such that $\mathbf{S}$ is the \emph{unique} minimizer of $\min_{\mathbf{T} \in \mathcal{OP}_k} \langle p, \ell(\mathbf{T})\rangle$.
A consequence of this is that
\begin{proposition}
  \label{main-proposition: maximal informative}
  For $k\in \{3,\dots, 7\}$, 
  $\mathrm{emb.card}(L^{WW})
  =|\mathcal{OP}_k|$. In other words, the ordered partition loss is maximally informative for the WW-hinge loss.
\end{proposition}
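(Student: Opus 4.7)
The plan is to establish $\mathrm{emb.card}(L^{WW}) = |\mathcal{OP}_k|$ by proving the two inequalities separately. The upper bound $\mathrm{emb.card}(L^{WW}) \le |\mathcal{OP}_k|$ is immediate from \cref{main-theorem: embedding result only}, since $L^{WW}$ embeds $\ell^{\mathcal{OP}}$ and $\mathcal{OP}_k$ can be relabeled as $[|\mathcal{OP}_k|]$ via any bijection. All the substantive work lies in the matching lower bound.

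For the lower bound $\mathrm{emb.card}(L^{WW}) \ge |\mathcal{OP}_k|$, I would suppose that $L^{WW}$ embeds some discrete loss $\ell' : [n] \to \mathbb{R}^k$ via an embedding $\varphi' : [n] \to \mathbb{R}^k$, and then produce an injection $\mathcal{OP}_k \hookrightarrow [n]$. For each $\mathbf{S} \in \mathcal{OP}_k$, the computer-search hypothesis furnishes $p_\mathbf{S} \in \Delta^k$ with $\gamma_{\ell^{\mathcal{OP}}}(p_\mathbf{S}) = \{\mathbf{S}\}$. Since $[n]$ is finite and nonempty, $\gamma_{\ell'}(p_\mathbf{S})$ is nonempty, so I pick any $r_\mathbf{S} \in \gamma_{\ell'}(p_\mathbf{S})$; by condition~2 of \cref{main-definition: loss embedding} applied to the embedding of $\ell'$, $\varphi'(r_\mathbf{S}) \in \Gamma(p_\mathbf{S}) := \argmin_{v \in \mathbb{R}^k} \langle p_\mathbf{S}, L^{WW}(v)\rangle$. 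The claim is that $\mathbf{S} \mapsto r_\mathbf{S}$ is injective, which immediately yields $n \ge |\mathcal{OP}_k|$.

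To prove injectivity I would argue by contradiction: assume $r := r_\mathbf{S} = r_\mathbf{T}$ for distinct $\mathbf{S}, \mathbf{T}$, so $\varphi'(r) \in \Gamma(p_\mathbf{S}) \cap \Gamma(p_\mathbf{T})$. Writing $p_\lambda := \lambda p_\mathbf{S} + (1-\lambda) p_\mathbf{T}$ and using concavity of $\underline L$ as an infimum of linear functions of $p$, I obtain
\[
\lambda \underline L(p_\mathbf{S}) + (1-\lambda) \underline L(p_\mathbf{T}) = \langle p_\lambda, L^{WW}(\varphi'(r))\rangle \ge \underline L(p_\lambda) \ge \lambda \underline L(p_\mathbf{S}) + (1-\lambda) \underline L(p_\mathbf{T}),
\]
forcing equality throughout. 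Hence $\underline L$ is affine on the segment $\{p_\lambda : \lambda \in [0,1]\}$, and by \cref{main-lemma: L and ell bayes risk are equal}, so is $\underline \ell$.

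The hard part will be closing the contradiction by showing this affineness is incompatible with the two endpoint argmin sets being distinct singletons. Since $\underline\ell(p) = \min_{\mathbf{U} \in \mathcal{OP}_k} \langle p, \ell(\mathbf{U})\rangle$ is a minimum of finitely many affine functions of $p$, affineness on the segment should force a single $\mathbf{U}^*$ to realize the minimum throughout $[0,1]$: any candidate $\mathbf{U}$ whose affine piece touches the envelope at an interior $\lambda_0$ must share slope with the envelope (otherwise the piece strictly violates the envelope inequality on one side of $\lambda_0$) and thus coincides with it on all of $[0,1]$. Specializing $\mathbf{U}^*$ to $\lambda = 1$ and $\lambda = 0$, where uniqueness of the minimizer pins the envelope to $\mathbf{S}$ and $\mathbf{T}$ respectively, forces $\mathbf{U}^* = \mathbf{S} = \mathbf{T}$, contradicting $\mathbf{S} \ne \mathbf{T}$ and completing the proof.
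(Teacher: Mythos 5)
Your proof is correct, but it takes a genuinely different route from the paper's. Both proofs start identically: the upper bound follows from \cref{main-theorem: embedding result only}, and for the lower bound one constructs a map from $\mathcal{OP}_k$ into the prediction space of any embedded loss via the unique-minimizer witnesses $p_\mathbf{S}$, then proves that map injective. Where you diverge is in the injectivity argument. The paper's proof is geometric: it develops the semiorder hyperplane arrangement $\{H_{(i,j)} = \{v : v_i - v_j = 1\}\}$, proves the substantial structural fact that $L^{WW}$ is affine on each closed cell and that every cell meets $\SkCZ$ (\cref{lemma: semiorder HA key property}), and then shows that the cell containing the embedded optimizer $v^{(i)}$ intersects $\SkCZ$ in exactly $\{\varphi(\mathbf{S}^i)\}$, so equal optimizers force equal cells force equal ordered partitions. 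Your proof instead works entirely in $\Delta^k$: it uses concavity of $\underline L$ (as an infimum of linear functionals in $p$), the equality $\underline L = \underline\ell$ from \cref{main-lemma: L and ell bayes risk are equal}, and the elementary fact that when a pointwise minimum of finitely many affine functions is itself affine on a segment, one branch must achieve the minimum throughout --- which at the endpoints contradicts uniqueness. Your argument is shorter and more elementary, sidestepping the entire hyperplane-arrangement apparatus that the paper introduces solely for this proposition; the paper's approach buys more structural insight into the geometry of $L^{WW}$ (each linearity region of the loss contains exactly one embedded ordered partition that is relevant to a given unique-minimizer distribution), which could be of independent interest even though it is not reused elsewhere in the paper. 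One small point worth making explicit in your write-up: $\gamma_{\ell'}(p_\mathbf{S})$ is nonempty because $n \ge 2$ in the definition of $\mathrm{emb.card}$, not merely because $[n]$ is finite.
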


\section{The argmax link}
\label{main-section: the argmax link}

Define
$
  \gamma_\ell(p)
  :=
  \argmin_{r \in \mathcal{R}}
  \langle p , \ell(r)\rangle$
   and
  $\Gamma_L(p)
  :=
  \argmin_{v \in \mathbb{R}^d}
  \langle p, 
  L(v)
  \rangle.
$
For multiclass classification into $k$ classes, most multiclass SVMs typically output a vector of scores $v \in \mathbb{R}^k$ which is converted to a class label by taking $\argmax v$.
In this section, we analyze the $\argmax$ as a ``link'' function.
Recall from \cref{main-section: notations}, $\argmax$ is a set-valued function.
Define
\[
  \Omega_L
  :=
  \{ p \in \Delta^k : 
    |\argmax p| = 1,\,
  \argmax v = \argmax p,\, \forall v \in \Gamma_L(p)\}.
\]
When $L$ is calibrated with respect to $\ell^{zo}$, we have that $\Omega_L = \{p \in \Delta^k: |\argmax p| = 1\}$. Hence, $\Delta^k \setminus \Omega_L$ has measure zero.
For other $L$ not necessarily calibrated with respect to $\ell^{zo}$, it is desirable that $\Omega_L$ be as large as possible.
Below, we will prove that $\Omega_{L^{CS}}$ is a proper subset of $\Omega_{L^{WW}}$.

Recall that $\mathcal{X}$ is a sample space and $P$ is a distribution on $\mathcal{X} \times [k]$.
For each $x \in \mathcal{X}$, define the \emph{class conditional distribution} $\eta_P(x) \in \Delta^k$ by
$[\eta_P(x)]_y = \Pr_{X,Y \sim P} (Y = y | X=x)$.
  \begin{proposition}
    \label{main-theorem: 0-1 loss Bayes optimality}
    Let $P$ be a joint distribution on $\mathcal{X} \times [k]$ such that $\eta_P(x) \in \Omega_L$ for all $x$ and $L : \mathbb{R}^d \to \mathbb{R}^k_+$ be a loss.
    Let $g^* : \mathcal{X} \to \mathbb{R}^k$ be such that $g^*(x) \in \Gamma_L(\eta_P(x))$ for all $x \in \mathcal{X}$.
    Then $\argmax \circ g^*$ is Bayes optimal with respect to the 0-1 loss.
  \end{proposition}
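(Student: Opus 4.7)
The plan is to exploit the definition of $\Omega_L$ directly and then invoke the standard characterization of the Bayes-optimal $0$-$1$ classifier, so the whole proof reduces to a pointwise bookkeeping argument.

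First I would unpack the hypothesis $\eta_P(x) \in \Omega_L$ at each sample point $x$: by definition of $\Omega_L$, this says that $|\argmax \eta_P(x)| = 1$ and that every $v \in \Gamma_L(\eta_P(x))$ satisfies $\argmax v = \argmax \eta_P(x)$. Since we are given $g^*(x) \in \Gamma_L(\eta_P(x))$ for every $x \in \mathcal{X}$, this immediately yields the pointwise identity
\[
\argmax g^*(x) = \argmax \eta_P(x) \qquad \text{for all } x \in \mathcal{X}.
\]

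Next I would invoke the standard fact that the $0$-$1$ inner risk satisfies $\langle p, \ell^{zo}(y)\rangle = 1 - p_y$, whose minimizers over $y \in [k]$ are exactly $\argmax p$. Integrating against $P$ and conditioning on $X$, a classifier $f: \mathcal{X} \to [k]$ is Bayes optimal for the $0$-$1$ loss if and only if $f(x) \in \argmax \eta_P(x)$ for $P_X$-almost every $x$. Combining this characterization with the pointwise identity above, $\argmax \circ g^*$ satisfies the optimality condition at every $x$, so it is Bayes optimal.

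The only things that might look like an obstacle are cosmetic. One is that $\argmax$ is a priori set-valued, so $\argmax g^*(x)$ has to be read as a singleton; but the definition of $\Omega_L$ forces $\argmax v = \argmax \eta_P(x)$ to be that unique singleton for every $v \in \Gamma_L(\eta_P(x))$, so $\argmax g^*(x)$ is automatically single-valued and no choice function is needed. The other is measurability, which we do not have to address, since the proposition takes $g^*$ as given and only asserts that the induced classifier attains the Bayes $0$-$1$ risk.
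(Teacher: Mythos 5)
Your proof is correct and takes essentially the same approach as the paper, which simply notes that the definition of $\Omega_L$ forces $\argmax g^*(x) = \argmax \eta_P(x)$ for all $x$. You additionally spell out why this pointwise identity implies Bayes optimality for the $0$-$1$ loss, which the paper leaves implicit.
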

  \begin{proof}
    By definition of $\Omega_L$, we have $\argmax \circ g^*(x) = \argmax \eta_P(x)$ for all $x \in \mathcal{X}$.
  \end{proof}

  The following theorem asserts that for any $v \in \Gamma_{L^{WW}}(p)$, the $\argmax v$ is contained in the top bucket $S_1$ for some $\mathbf{S} \in \gamma_{\ell^{\mathcal{OP}}}(p)$.
\begin{theorem}
  \label{main-theorem: argmax v is contained in S1}
  Let $p \in \Delta^k$ be such that $\max p > \frac{1}{k}$ and $v \in \Gamma_{L^{WW}}(p)$. Then there exists $\mathbf{S} = (S_1,\dots, S_l) \in \gamma_{\ell^{\mathcal{OP}}}(p)$ such that
  $\argmax v \subseteq S_1$.
\end{theorem}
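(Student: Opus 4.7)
The plan is to produce, for the given $v \in \Gamma_{L^{WW}}(p)$, an optimal $v^\dagger \in \SkCZ$ satisfying $\argmax v \subseteq \argmax v^\dagger$. Once this is done, \cref{main-theorem: bijection between OP representations} combined with \cref{main-lemma: L and ell bayes risk are equal} will imply that $\mathbf{S} := \tilde\psi(v^\dagger)$ lies in $\gamma_{\ell^{\mathcal{OP}}}(p)$, and its top bucket is $S_1 = \argmax v^\dagger \supseteq \argmax v$, as needed.

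First, exploiting the translation invariance $L^{WW}(v + c\mathbf{1}) = L^{WW}(v)$, I will normalize $v$ so that $\max v = 0$, and set $A = \argmax v$. The hypothesis $\max p > \frac{1}{k}$ is used precisely to rule out the uniform case, in which $v = 0$ is itself optimal with $A = [k]$, a set not contained in $S_1$ for any $\mathbf{S} \in \mathcal{OP}_k$: writing $F_p := \langle p, L^{WW}(\cdot)\rangle$, one computes $\nabla F_p(0) = \mathbf{1} - k p$, which vanishes iff $p$ is uniform, so under our hypothesis $v$ must be nonconstant. Letting $u_1 > \cdots > u_m$ (with $u_1 = 0$, $m \ge 2$) enumerate the distinct values of $v$ and $A_i = \{j : v_j = u_i\}$, I define $v^\dagger_j = -(i-1)$ for $j \in A_i$. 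The distinct entries of $v^\dagger$ are exactly $\{0, -1, \ldots, -(m-1)\}$, so $v^\dagger \in \SkCZ$, and $\argmax v^\dagger = A_1 = A$ by construction.

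The main obstacle is showing that $v^\dagger \in \Gamma_{L^{WW}}(p)$, i.e., $F_p(v^\dagger) = F_p(v)$. My approach is to group the WW loss by level: writing $P_a := \sum_{j \in A_a} p_j$ and $N_a := |A_a|$, one has $F_p(v) = \sum_{a \ne b} P_a N_b \, h(u_a - u_b) + \sum_a P_a N_a - 1$, and the analogous formula for $v^\dagger$ uses $u^\dagger_a = -(a-1)$ (so $h(u^\dagger_a - u^\dagger_b) = 0$ when $a < b$ and $1 + a - b$ when $a > b$). The first-order optimality conditions on $v$ with respect to shifts of each $u_a$, carefully handling the kinks of $h$, give linear relations on the $\{P_a N_b\}$ that force $F_p(v) - F_p(v^\dagger) = 0$; this is already transparent at $m = 2$, where the difference is $P_1 N_2 \, h(u_1 - u_2) + P_2 N_1 (u_1 - u_2 - 1)$ and vanishes in each of the optimality regimes $u_1 - u_2 < 1$, $= 1$, $> 1$, and general $m$ then follows by induction. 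An alternative, more structural route uses \cref{main-theorem: L bayes risk is optimized over OP}: the slice $\Gamma_{L^{WW}}(p) \cap \{\max v = 0\}$ is a polyhedron whose extreme points lie in $\SkCZ$ by the LP total unimodularity and compression of that proof; writing $v$ as a convex combination $\sum_s \lambda_s v^{(s)}$ of such extreme points, the identity $\argmax v = \bigcap_s \argmax v^{(s)}$, which holds because each $v^{(s)}$ has $\max = 0$, lets me take $v^\dagger$ to be any active summand.
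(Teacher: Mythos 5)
Your high-level plan — produce an optimal $v^\dagger \in \SkCZ$ with $\argmax v \subseteq \argmax v^\dagger$ and then apply \cref{main-theorem: bijection between OP representations} and \cref{main-lemma: L and ell bayes risk are equal} — is the right one, and your argument that $v$ must be nonconstant is fine (though the paper instead invokes \cref{lemma: the trivial ordered partition}). However, both routes you offer for the key step have genuine gaps, and the first one is actually false.

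\textbf{The compression route fails.} Your proposed $v^\dagger$ (collapsing distinct levels of $v$ to $0,-1,\dots,-(m-1)$) is \emph{not} in general optimal when $v$ is. Concretely, take $k=4$, $p=(1/3,\,1/4,\,1/4,\,1/6)$ (so $\max p = 1/3 > 1/k$), and $v=(0,\,-0.4,\,-0.8,\,-1.2)$. One checks $L(v)=(0.8,\,2.2,\,3.8,\,5.4)$, so $\langle p,L(v)\rangle = 8/3$, and this equals $\underline L(p)$ (e.g., $(0,-1,-1,-1)$ also achieves $8/3$). Your compression gives $v^\dagger=(0,-1,-2,-3)$, with $L(v^\dagger)=(0,\,2,\,5,\,9)$ and $\langle p,L(v^\dagger)\rangle = 13/4 > 8/3$. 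So $v^\dagger$ is strictly worse. The $m=2$ computation you sketch is correct, but the ``induction to general $m$'' cannot go through: once three or more levels sit within a unit interval, compressing them rescales non-adjacent gaps past the kink of $h$, and the first-order conditions no longer force the difference to vanish — in fact, carrying out your first-order calculation at $m=3,4$ with $u_1-u_m<1$ gives a \emph{strictly negative} difference $F_p(v)-F_p(v^\dagger)$, confirming the example above.

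\textbf{The convex-combination route is a plausible idea but is not established.} You assert that $\Gamma_{L^{WW}}(p)\cap\{\max v=0\}$ is a polyhedron whose extreme points lie in $\SkCZ$, citing ``the LP total unimodularity and compression of that proof.'' This needs care on several fronts: $\{\max v=0\}$ is not convex, so you must work with a linear slice such as $H_0 = \{v : \mathbf{1}'v=0\}$ or fix a coordinate; the TUM argument in the proof of \cref{main-theorem: L bayes risk is optimized over OP} shows \emph{some} optimal vertex is integral, not that every vertex of the optimal face is (the subsequent margin-tightening step in \cref{lemma: margin tightening} is also needed); and when $p$ lacks full support the sliced optimal face may be unbounded, in which case $v$ need not be a convex combination of vertices. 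None of these is insurmountable, but none is addressed.

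\textbf{What the paper does instead.} The paper avoids constructing $v^\dagger$ from $v$ altogether. It sets $n := |\argmax v|$, observes that $\underline L(p) = \underline L^n(p)$ where $\underline L^n$ is the infimum over $\{w : |\argmax w|\ge n\}$, and then proves (\cref{lemma: L bayes risk is optimized over integers}) that the \emph{constrained} LP — with the extra condition that the first $n$ coordinates of $w$ vanish after sorting — is still totally unimodular. This produces an optimal $w^*\in\mathcal{C}_{\mathbb Z}$ with $\argmax w^* \supseteq [n]$, and then $\mathbf S = \tilde\psi(w^*)$ works. The crucial maneuver is folding the $\argmax$ constraint into the LP rather than trying to reach a good $\SkCZ$ point by postprocessing a given real-valued optimum.
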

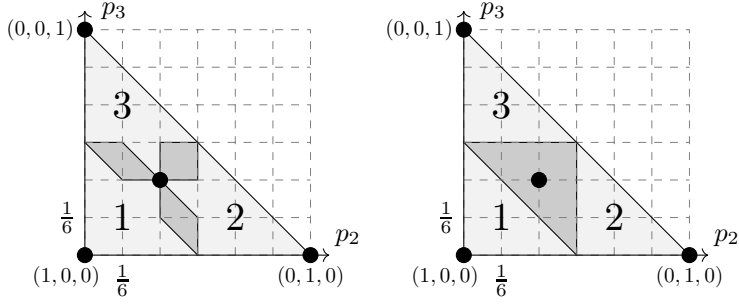
\begin{figure}
  \centering
  \begin{tikzpicture}[scale = 0.5]
    \fill[black!5!white] (0,0) -- (6,0) -- (0,6) -- cycle;
    \fill[black!20!white] (2,2) -- (2,1) -- (3,0) -- (3,1) -- cycle;
    \fill[black!20!white] (2,2) -- (3,2) -- (3,3) -- (2,3) -- cycle;
    \fill[black!20!white] (2,2) -- (1,3) -- (0,3) -- (1,2) -- cycle;
    \draw (2,2) -- (2,1) -- (3,0) -- (3,1) -- cycle;
    \draw (2,2) -- (3,2) -- (3,3) -- (2,3) -- cycle;
    \draw (2,2) -- (1,3) -- (0,3) -- (1,2) -- cycle;
    \draw[help lines, dashed] (0,0) grid (6,6);
    \draw (0,6) -- (6,0);
    \node[draw,circle,inner sep=2pt,fill] at (0,0) {};
    \node[below, scale = 0.8] at (-0.5,-0.1) {$(1,0,0)$};
    \node[draw,circle,inner sep=2pt,fill] at (0,6) {};
    \node[left, scale = 0.8] at (-0.1,6) {$(0,0,1)$};
    \node[draw,circle,inner sep=2pt,fill] at (2,2) {};
    \node[draw,circle,inner sep=2pt,fill] at (6,0) {};
    \node[below, scale = 0.8] at (6,-.1) {$(0,1,0)$};
    \node[below] at (1,0) {$\frac{1}{6}$};
    \node[above] at (7,0) {$p_2$};
    \node[right] at (0.2,6.5) {$p_3$};
    \draw [<->] (0,6.5) -- (0,0) -- (6.5,0);
    \node[left] at (0,1) {$\frac{1}{6}$};
    \node[scale = 1.5] at (1,1) {1};
    \node[scale = 1.5] at (4,1) {2};
    \node[scale = 1.5] at (1,4) {3};
  \end{tikzpicture} 
  \begin{tikzpicture}[scale = 0.5]
    \fill[black!5!white] (0,0) -- (6,0) -- (0,6) -- cycle;
    \fill[black!20!white] (3,0) -- (3,3) -- (0,3) -- cycle;
    \draw (3,0) -- (3,3) -- (0,3) -- cycle;
    \draw[help lines, dashed] (0,0) grid (6,6);
    \draw (0,6) -- (6,0);
    \node[draw,circle,inner sep=2pt,fill] at (0,0) {};
    \node[below, scale = 0.8] at (-0.5,-0.1) {$(1,0,0)$};
    \node[draw,circle,inner sep=2pt,fill] at (0,6) {};
    \node[left, scale = 0.8] at (-0.1,6) {$(0,0,1)$};
    \node[draw,circle,inner sep=2pt,fill] at (2,2) {};
    \node[draw,circle,inner sep=2pt,fill] at (6,0) {};
    \node[below, scale = 0.8] at (6,-.1) {$(0,1,0)$};
    \node[below] at (1,0) {$\frac{1}{6}$};
    \node[above] at (7,0) {$p_2$};
    \node[right] at (0.2,6.5) {$p_3$};
    \draw [<->] (0,6.5) -- (0,0) -- (6.5,0);
    \node[left] at (0,1) {$\frac{1}{6}$};
    \node[scale = 1.5] at (1,1) {1};
    \node[scale = 1.5] at (4,1) {2};
    \node[scale = 1.5] at (1,4) {3};
  \end{tikzpicture} 
  \caption{
    The gray triangle represents the probability simplex $\Delta^3$, where $(p_1,p_2,p_3) \in \Delta^3$ is plotted as $(p_2,p_3)$ in the plane.
    The light gray regions are  $\Omega_{L^{WW}}$ (left) and
     $\Omega_{L^{CS}}$ (right).
   For the derivation, see \cref{section: figure 2 detailed derivation}.}
  \label{main-figure: bayes decision regions of WW vs CS}
\end{figure}
Below, we consider two conditions on $p \in \Delta^k$ such that for \emph{all} $\mathbf{S}\in \gamma_{\ell^{\mathcal{OP}}}(p)$, the top bucket $S_1 = \argmax p$.
By \cref{main-theorem: argmax v is contained in S1}, for such $p \in \Delta^k$, we can recover $\argmax p$ from any $v \in \Gamma_{L^{WW}}(p)$.
The first condition covers $p \in \Delta^k$ such that the top class has a majority:
\begin{proposition}
  \label{main-proposition: majority case}
  Let $p \in \Delta^k$ satisfy the ``majority condition'': $\max p > 1/2$.
  Then for all $\mathbf{S} = (S_1,\dots, S_l) \in \gamma_{\ell^{\mathcal{OP}}}(p)$, we have $|S_1| = 1$ and $S_1 = \argmax p$.
\end{proposition}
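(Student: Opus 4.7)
My plan is to prove the proposition by two exchange arguments applied to any putative minimizer $\mathbf{S} \in \gamma_{\ell^{\mathcal{OP}}}(p)$. First I would unpack the expected loss using \cref{main-equation: ordered partition inner risk}, writing $c_j := |S_1 \cup \cdots \cup S_j|$ and $q_j := \Pr_{Y\sim p}\{Y \notin S_1 \cup \cdots \cup S_j\}$, so that the objective reads $c_1 - 1 + \sum_{j=1}^{l-1} c_{j+1}\, q_j$. The hypothesis $\max p > 1/2$ forces $\argmax p$ to be a singleton $\{y^*\}$, since any two coordinates exceeding $1/2$ would sum above $1$.

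The first exchange argument handles placement. Assume toward contradiction that $y^* \in S_i$ for some $i \ge 2$, pick any $y' \in S_1$, and swap $y^*$ with $y'$ between their buckets to obtain $\mathbf{S}'$. This preserves all bucket sizes, so every $c_j$ is unchanged. Meanwhile, only the $q_j$ with $1 \le j < i$ change, and each changes by exactly $p_{y'} - p_{y^*}$, which is strictly negative by uniqueness of the argmax. Hence $\mathbf{S}'$ strictly improves on $\mathbf{S}$, contradicting minimality and forcing $y^* \in S_1$ at every minimizer.

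The second exchange argument handles the size. Assuming $|S_1| \ge 2$, I would split $S_1$ into the singleton $\{y^*\}$ followed by $S_1 \setminus \{y^*\}$, yielding $\mathbf{S}' = (\{y^*\},\, S_1 \setminus \{y^*\},\, S_2,\ldots, S_l) \in \mathcal{OP}_k$. A direct reindexing shows that the $c$- and $q$-sequences of $\mathbf{S}'$ are those of $\mathbf{S}$ prepended by $c_1' = 1$ and $q_1' = 1 - p_{y^*}$, so the difference between the two objectives collapses to $1 - |S_1|\, p_{y^*}$. The majority hypothesis together with $|S_1| \ge 2$ gives $|S_1|\, p_{y^*} > 1$, making this strictly negative and again contradicting minimality. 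Combining the two steps yields $S_1 = \{y^*\} = \argmax p$.

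The main obstacle I anticipate is the bookkeeping in the split step: one needs to verify carefully that after inserting a new leading bucket, the tail of the new sum equals the entire original sum, so that the difference collapses to the two boundary terms $|S_1|(1 - p_{y^*}) - (|S_1| - 1)$. Once that routine identity is nailed down, the threshold $1/2$ emerges as the exact break-even point at which splitting the top bucket stops paying off, so the proposition is sharp in exactly the expected way.
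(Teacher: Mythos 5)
Your proof is correct and takes a partially different route from the paper's. The split step (your second exchange) is essentially identical to what the paper does: both split the top bucket as $S_1 \mapsto (\{y^*\}, S_1\setminus\{y^*\})$, and your identity $\langle p,\ell(\mathbf{S}')\rangle - \langle p,\ell(\mathbf{S})\rangle = 1 - |S_1|\,p_{y^*}$ matches the paper's $\langle p,\ell(\mathbf{S})\rangle - \langle p,\ell(\mathbf{S}')\rangle = |S_1| - 1 - |S_1|\Pr(Y\ne y^*)$ after rearrangement. Where you diverge is in the placement step. The paper first reduces to $p \in \Delta^k_\downarrow$ via a chain of $\Perm{k}$-equivariance lemmas, and then proves $\argmax p \subseteq S_1$ indirectly through the embedding: it passes to $v = \varphi(\mathbf{S})$ and invokes the order-reversing property of the WW-hinge loss $L$ together with a transposition. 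You instead give a direct exchange argument on the ordered partition loss itself, swapping $y^*$ with an element $y' \in S_1$ and tracking how each $q_j$ for $j < i$ shifts by exactly $p_{y'} - p_{y^*} < 0$ while all $c_j$ stay fixed. Your version is more elementary and self-contained — it never leaves the prediction space $\mathcal{OP}_k$ or touches the surrogate $L$, and it requires no symmetry reduction — which is a genuine simplification for this particular proposition. The paper's detour through $L$ and the equivariance machinery is the cost of reusing infrastructure it needs elsewhere (e.g.\ \cref{main-proposition: spiked case}). One small remark: your strict inequality $p_{y'} < p_{y^*}$ requires that $y^*$ is the \emph{unique} argmax, which you correctly derive from the majority condition; it is worth keeping that observation explicit, since the first exchange would only give a non-strict inequality without it.
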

While \cref{main-proposition: majority case} does not guarantee that $\gamma_{\ell^{\mathcal{OP}}}(p)$ is a singleton, all $\mathbf{S} \in \gamma_{\ell^{\mathcal{OP}}}(p)$ have the same top bucket.
The second condition covers $p \in \Delta^k$ whose top class may not have a majority, yet $\argmax p$ can still be recovered from any $v \in \Gamma_{L^{WW}}(v)$ by taking $\argmax v$:
\begin{proposition}
  \label{main-proposition: spiked case}
  Fix a number $\alpha$ such that $1> \alpha > \frac{1}{k}$.
  Let $p \in \Delta^k$ satisfy the ``symmetric label noise (SLN) condition'':
    there exists $j^* \in [k]$ so that
    $p_{j^*} = \alpha$ and $p_j = \frac{1-\alpha}{k-1}$ for all $j \ne j^*$.
  Then 
  $
      (\{j^*\},\, [k] \setminus \{j^*\})
  $
  is the unique element of $\gamma_{\ell^{\mathcal{OP}}}(p)$.
\end{proposition}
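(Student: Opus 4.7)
The plan is to show that any ordered partition $\mathbf{S}$ different from $\mathbf{S}^* := (\{j^*\}, [k] \setminus \{j^*\})$ admits a local modification $\mathbf{S}'$ with strictly smaller inner risk $f(\mathbf{S}) := \langle p, \ell(\mathbf{S})\rangle$, forcing $\mathbf{S}^*$ to be the unique element of $\gamma_{\ell^{\mathcal{OP}}}(p)$. Writing $A_j = S_1 \cup \cdots \cup S_j$, $a_j = |A_j|$, and $q_j = \Pr_{Y \sim p}(Y \in A_j)$, \cref{main-equation: ordered partition inner risk} becomes
\[
  f(\mathbf{S}) = (a_1 - 1) + \sum_{j=1}^{l-1} a_{j+1}(1 - q_j).
\]
The only quantitative hypothesis I will use is that $\alpha > 1/k$ is equivalent to $p_{j^*} = \alpha > (1-\alpha)/(k-1) = p_i$ for each $i \neq j^*$, so the spiked label is strictly more probable than every other label.

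\textbf{Step 1 (the top bucket contains $j^*$).} Suppose a minimizer $\mathbf{S}$ has $j^* \in S_{j_0}$ with $j_0 \geq 2$. Pick any $i \in S_1$ (so $i \neq j^*$) and let $\mathbf{S}'$ be obtained by transposing $i$ and $j^*$. All set-sizes $a_j$ are preserved, while for each $j < j_0$ the mass $q_j$ grows by $p_{j^*} - p_i = (k\alpha - 1)/(k-1) > 0$. Consequently $f(\mathbf{S}) - f(\mathbf{S}') = \sum_{j=1}^{j_0 - 1} a_{j+1}(p_{j^*} - p_i) > 0$, contradicting optimality.

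\textbf{Step 2 (the top bucket is a singleton).} With $j^* \in S_1$ and $|S_1| \geq 2$, pick $i \in S_1 \setminus \{j^*\}$ and push $i$ into $S_2$ to form $\mathbf{S}'$. Only $a_1$ changes (it drops by one) and only $q_1$ changes (it decreases by $p_i$), giving
\[
  f(\mathbf{S}') - f(\mathbf{S}) = -1 + a_2\, p_i = -1 + a_2\,\frac{1-\alpha}{k-1}.
\]
Since $a_2 \leq k$ and $(1-\alpha)/(k-1) < 1/k$ (by $\alpha > 1/k$), the right-hand side is strictly negative, contradicting optimality.

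\textbf{Step 3 (the partition has length two).} Finally, assume $\mathbf{S} = (\{j^*\}, S_2, \ldots, S_l)$ with $l \geq 3$, and let $\mathbf{S}'$ merge the last two buckets into $S_{l-1} \cup S_l$. Because $j^* \notin S_{l-1} \cup S_l$, one has $\Pr(Y \in S_j) = |S_j|(1-\alpha)/(k-1)$ for $j \in \{l-1, l\}$, and a direct computation (keeping careful track of the index shift in $\sum_j a_{j+1}(1 - q_j)$ caused by the length dropping from $l$ to $l-1$) yields
\[
  f(\mathbf{S}) - f(\mathbf{S}') = s_l(k - s_l - s_{l-1})\,\frac{1-\alpha}{k-1}, \qquad s_j := |S_j|.
\]
This is strictly positive since $s_l \geq 1$, $\alpha < 1$, and $s_{l-1} + s_l \leq k - (l-2) < k$ when $l \geq 3$ (each of the $l - 2$ earlier buckets contributing at least one element). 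Combining the three steps forces any minimizer to be $\mathbf{S}^*$. The main place where care is needed is the bookkeeping in Step 3, because shortening the partition shifts the summation indices of $f$; the other two steps are essentially one-line perturbation estimates.
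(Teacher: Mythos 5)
Your proof is correct, and it takes a genuinely different (and arguably more self-contained) route than the paper, so let me compare. The paper first reduces to the sorted case $p\in\Delta^k_\downarrow$ by invoking its $\Perm{k}$-equivariance machinery (\cref{lemma: gamma is Sigma equivariant}, \cref{lemma: boolean function equivariant extension}) and then uses \cref{lemma: argmax p is contained in S1} --- which is proved via the $L$-side identity and \cref{lemma: L is order reversing} --- to place the top index in $S_1$; you instead argue directly on the $\ell$-formula by swapping $j^*$ with an arbitrary element of $S_1$ and watching every $q_j$ for $j<j_0$ strictly increase. For the remaining two constraints the order is reversed: the paper first forces $l=2$ by merging $S_2$ and $S_3$ and using the $k\ge s_1+s_2+s_3$ slack to extract a strictly positive $s_1s_3$ term, \emph{then} shows $|S_1|=1$ by writing the length-two inner risk as an affine function of $|S_1|$ with positive slope; you first shrink $S_1$ (push one non-$j^*$ element into $S_2$, giving the clean $-1 + a_2\,p_i<0$ estimate that works for any length), and then merge the \emph{last} two buckets, where the index shift is a little more delicate but yields the compact factorization $s_l(k-s_l-s_{l-1})\frac{1-\alpha}{k-1}>0$. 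Each approach has its advantages: the paper's version reuses general lemmas it already needs elsewhere, whereas yours is elementary, works bucket-by-bucket entirely on the $\ell$ side, and makes the $\alpha>1/k$ threshold appear exactly once, in Step 2, in the transparent inequality $k\cdot\frac{1-\alpha}{k-1}<1$. All three of your perturbation calculations check out, and the nonemptiness of the earlier buckets does give $s_{l-1}+s_l\le k-(l-2)<k$ as claimed, so the argument is complete.
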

In particular, when $\alpha <1/2$, $p$ violates the majority condition.
Under SLN, we have $\argmax p = \{j^*\}$ since
$
  \alpha - \frac{1-\alpha}{k-1}
  =
  \frac{(k-1)\alpha - 1 + \alpha}{k-1}
  =
  \frac{k\alpha - 1}{k-1}
  >
  \frac{1- 1}{k-1}
  =0.
$
In light of 
  \cref{main-theorem: argmax v is contained in S1}, we have 
  \begin{corollary}
    \label{main-corollary: Omega for the WW-hinge loss}
    If $p \in \Delta^k$ satisfies the majority or the SLN condition, then $p \in \Omega_{L^{WW}}$.
  \end{corollary}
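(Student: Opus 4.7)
The plan is to stitch together the three results immediately preceding the corollary, namely \cref{main-theorem: argmax v is contained in S1}, \cref{main-proposition: majority case}, and \cref{main-proposition: spiked case}. Fix $p \in \Delta^k$ satisfying either the majority condition or the SLN condition. I first verify the two defining conditions for membership in $\Omega_{L^{WW}}$: that $|\argmax p| = 1$, and that $\argmax v = \argmax p$ for every $v \in \Gamma_{L^{WW}}(p)$.

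For the first condition, under the majority condition $\max p > 1/2$ rules out two distinct maximizers (their probabilities would sum to more than $1$), so $|\argmax p| = 1$. Under the SLN condition the short computation $\alpha - \frac{1-\alpha}{k-1} = \frac{k\alpha - 1}{k-1} > 0$ already given in the text shows $\argmax p = \{j^*\}$, a singleton.

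For the second condition, I would invoke \cref{main-theorem: argmax v is contained in S1}. Its hypothesis $\max p > 1/k$ holds in the majority case because $\max p > 1/2 > 1/k$ (since $k \ge 3$), and holds in the SLN case directly by the assumption $\alpha > 1/k$. Thus, for every $v \in \Gamma_{L^{WW}}(p)$, there exists $\mathbf{S} = (S_1, \dots, S_l) \in \gamma_{\ell^{\mathcal{OP}}}(p)$ with $\argmax v \subseteq S_1$. Now I pin down $S_1$: in the majority case, \cref{main-proposition: majority case} forces $|S_1| = 1$ and $S_1 = \argmax p$ for every such $\mathbf{S}$; in the SLN case, \cref{main-proposition: spiked case} says $\gamma_{\ell^{\mathcal{OP}}}(p)$ has the unique element $(\{j^*\}, [k]\setminus\{j^*\})$, so again $S_1 = \{j^*\} = \argmax p$. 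In either situation $S_1 = \argmax p$ is a singleton, and since $\argmax v$ is nonempty and contained in $S_1$, we conclude $\argmax v = \argmax p$. This is exactly what is needed for $p \in \Omega_{L^{WW}}$.

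There is no real obstacle here: all the structural work has been discharged by the preceding results. The only thing to be careful about is checking the hypothesis $\max p > 1/k$ of \cref{main-theorem: argmax v is contained in S1} in both the majority and SLN cases, and noting that the uniqueness of the top bucket (from either \cref{main-proposition: majority case} or \cref{main-proposition: spiked case}) applies to \emph{every} minimizing ordered partition, which is exactly the form required to conclude $\argmax v = \argmax p$ for all $v \in \Gamma_{L^{WW}}(p)$ rather than merely for some.
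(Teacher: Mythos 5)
Your proof is correct and takes essentially the same route the paper intends: the paper presents this corollary as an immediate consequence of \cref{main-theorem: argmax v is contained in S1} together with \cref{main-proposition: majority case} and \cref{main-proposition: spiked case}, and your write-up correctly fills in the details, including the verification of the hypothesis $\max p > 1/k$ in both cases and the observation that the ``for all $\mathbf{S} \in \gamma_{\ell^{\mathcal{OP}}}(p)$'' quantifier in the two propositions is what makes the existential conclusion of the theorem usable.
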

  This supports the observation by \citet{dogan2016unified} that the WW-SVM performs well even without the majority condition.
  For the CS-hinge loss, it is known that $\Omega_{L^{CS}} = \{p \in \Delta^k : p \mbox{ satisfies the majority condition}\}$ \cite[Lemma 4]{liu2007fisher}.
  In particular, $\Omega_{L^{CS}}$ is a proper subset of $\Omega_{L^{WW}}$.
  For $k=3$, we show in \cref{main-figure: bayes decision regions of WW vs CS}
  the regions $\Omega_{L^{WW}}$ and $\Omega_{L^{CS}}$.

  \section{Conclusion and future work}
  We proved that the Weston-Watkins hinge loss is calibrated with respect to the ordered partition loss, which we argue is maximally informative for the WW-hinge loss.
  Furthermore, we showed the advantage of WW-hinge loss over the Crammer-Singer hinge loss when the popular ``argmax'' link is used.
  An interesting direction is to apply the ordered partition loss to other multiclass learning problems such as partial label and multilabel learning.

  \newpage

\bibliographystyle{unsrtnat}
\bibliography{references}

\newpage

\appendixpage
\appendix
\section{Organization of contents}
In 
\cref{section: notations and conventions},
we introduce notations in addition to those already defined in in the main article's \cref{main-section: notations}.

In \cref{section: main results,section: maximally informative losses,section: the argmax link}
, we present the proofs and supporting theory for all results from 
\cref{main-section: main results,main-section: maximally informative losses,main-section: the argmax link}, respectively.

In \cref{section: derivation of the figures}, we discuss how \cref{main-figure: Bayes optimal classifier of the OP loss,main-figure: bayes decision regions of WW vs CS} are obtained.

\section{Additional notations}
\label{section: notations and conventions}
\begin{itemize}
  \item Below, $L$ always denotes the WW-hinge loss (\cref{main-definition: full WW hinge loss}) and $\ell$ always denotes the ordered partition loss (\cref{main-definition: ordered partition loss}).
  \item All vectors are column vectors unless stated otherwise.
  \item $\mathbb{R}_+$ and $\mathbb{Z}_+$ denotes the set of non-negative reals and integers, respectively.
  \item Define $\mathbb{R}^k_\uparrow 
    =\{v \in \mathbb{R}^{k}: v_1 \le v_2 \le \cdots \le v_k\}.$ Likewise, define $\mathbb{R}^k_\downarrow$.
  \item For a positive integer $n$, we let $[n] := \{1,\dots, n\}$. By convention, $[0] = \emptyset$.
  \item Let $\mathbf{1}^k \in \mathbb{R}^{k}$ denote the vector all ones.
  \item 
For a number $t \in \mathbb{R}$, let $[t]_+ = \max\{0,t\}$. For a vector $v$, we denote by $[v]_+$ the vector resulting from applying $[\cdot]_+$ entrywise to $v$. The \emph{hinge loss} $h: \mathbb{R} \to \mathbb{R}_+$ is defined by $h(x) = [1-x]_+$.
\item For a vector $v \in \mathbb{R}^k$, we use $[v]_i$ to denote the $i$-th entry of $v$ in conjunction with the usual notation $v_i$.

\item 
Given a vector $v \in \mathbb{R}^k$, we define
\[
  \max v := \max_{i \in [k]} v_i
  \quad \mbox{and} \quad
  \argmax v := \{i \in [k]: v_i = \max v\}
\]
Define $\min v$ and $\argmin v$ likewise.
  \item Probability simplex
    \[
      \Delta^k = \{p = (p_1,\dots, p_k) \in \mathbb{R}^{k}_{+} : p_1 + \cdots + p_k = 1\}
    \]
    and \emph{non-increasing} probability simplex
    \[
      \Delta^k_{\downarrow}
      =
      \{p \in \Delta^k : p_1 \ge p_2 \ge \cdots \ge p_k\}
      =
      \Delta^k \cap \mathbb{R}^k_\downarrow.
    \]
  \item For $p \in \Delta^k$, we write $Y\sim p$ to denote a discrete random variable $Y \in [k]$ whose probability mass function is $p$.
  \item For each $i,j \in [k]$, $\sigma_{(i,j)} \in \mathbb{R}^{k\times k}$ is the permutation matrix that switches the $i$-th and $j$-th index.
    By convention, if $i = j$, then $\sigma_{(i,j)}$ is the identity. Also, for brevity, define $\sigma_i = \sigma_{(1,i)}$.
  \item According to the definition above, $\sigma_{(i,j)}$ acts on $\mathbb{R}^k$. 
    However, we abuse notation and allow $\sigma_{(i,j)}$ to act on $[k]$ in the obvious way.
    In such cases, we write $\sigma_{(i,j)}(\ell)$ for $\ell \in [k]$.
\end{itemize}

\section{Main results}
\label{section: main results}

  \begin{lemma}
    \label{lemma: L is translation invariant}
    For all $v \in \mathbb{R}^{k}$ and $c \in \mathbb{R}$, we have $L(v) = L(v + c\mathbf{1}^k)$.
  \end{lemma}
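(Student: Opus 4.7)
The plan is to prove this directly from the definition of the WW-hinge loss by observing that only differences $v_y - v_i$ appear in the formula, and these differences are invariant under translation by any scalar multiple of $\mathbf{1}^k$.

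More concretely, I would fix $y \in [k]$ and compute the $y$-th entry of $L(v + c\mathbf{1}^k)$. By \cref{main-definition: full WW hinge loss},
\[
[L(v + c\mathbf{1}^k)]_y = \sum_{i \in [k],\, i \ne y} h\bigl((v_y + c) - (v_i + c)\bigr) = \sum_{i \in [k],\, i \ne y} h(v_y - v_i) = [L(v)]_y,
\]
since the constants $c$ cancel in every pairwise difference. As this holds for each $y \in [k]$, the vectors $L(v)$ and $L(v + c\mathbf{1}^k)$ agree entrywise, completing the proof.

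There is no real obstacle here: the lemma is a one-line consequence of the fact that the WW-hinge loss depends on $v$ only through the pairwise margins $v_y - v_i$. The lemma is presumably recorded because translation invariance is the key structural property exploited later — it is precisely what makes the quotient map $\pi : \mathbb{R}^k \to \mathbb{R}^{k-1}$ used in the sketch of \cref{main-theorem: L bayes risk is optimized over OP} well-defined on inner risks, and it justifies introducing the reduced WW-hinge loss $\text{\L}$ to eliminate the extra degree of freedom.
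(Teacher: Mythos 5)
Your proof is correct and follows exactly the same approach as the paper: expand the $y$-th entry by the definition of $L$ and observe that the additive constant $c$ cancels in every pairwise difference $v_y - v_i$. (You even write the intermediate expression correctly as $h((v_y+c)-(v_i+c))$, whereas the paper's displayed formula contains a small sign typo, $h(v_y + c - (v_i - c))$, though the conclusion is unaffected.)
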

  \begin{proof}
    For all $y \in [k]$, we have that
\[
  [L(v + c\mathbf{1})]_y = 
  \sum_{i \in [k]\,:\, i \ne y} 
  h(v_y + c - (v_i-c))=
  \sum_{i \in [k]\,:\, i \ne y} 
  h(v_y  - v_i)
  =
  [L(v)]_y.
\]
  \end{proof}

  \begin{lemma}
    \label{lemma: L is transposition equivariant}
    For all $j \in [k]$,
    we have
    $L(\sigma_j v) = \sigma_j L(v)$.
  \end{lemma}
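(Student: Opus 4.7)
The plan is to expand the definition of $L$ at $\sigma_j v$ and reindex the summation using the fact that $\sigma_j$ acts on $[k]$ as a bijection (in fact an involution, since it is the transposition $\sigma_{(1,j)}$). The identity will then drop out of a one-line substitution, so the proof is essentially a bookkeeping exercise with the two roles of $\sigma_j$: as a permutation matrix acting on $\mathbb{R}^k$, and as a permutation of indices in $[k]$.

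Concretely, I would fix an arbitrary $y \in [k]$ and compute $[L(\sigma_j v)]_y$ directly from \cref{main-definition: full WW hinge loss}. Using the identity $[\sigma_j v]_i = v_{\sigma_j(i)}$ (as introduced in \cref{main-section: notations}), this yields
\[
[L(\sigma_j v)]_y \;=\; \sum_{i \in [k]\,:\, i \ne y} h\bigl(v_{\sigma_j(y)} - v_{\sigma_j(i)}\bigr).
\]
Next I would reindex: because $\sigma_j : [k] \to [k]$ is a bijection, as $i$ ranges over $[k] \setminus \{y\}$ the image $\sigma_j(i)$ ranges over $[k] \setminus \{\sigma_j(y)\}$. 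Substituting $m = \sigma_j(i)$ therefore gives
\[
[L(\sigma_j v)]_y \;=\; \sum_{m \in [k]\,:\, m \ne \sigma_j(y)} h\bigl(v_{\sigma_j(y)} - v_m\bigr) \;=\; [L(v)]_{\sigma_j(y)} \;=\; [\sigma_j L(v)]_y,
\]
where the last equality is again the definition of the action of $\sigma_j$ on $\mathbb{R}^k$. Since $y$ was arbitrary, the vectors agree coordinatewise and $L(\sigma_j v) = \sigma_j L(v)$.

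There is no real obstacle here; the only subtlety is keeping the two instances of ``$\sigma_j$'' straight (matrix acting on the vector argument, versus permutation relabeling the indices), and noting that the bijectivity of $\sigma_j$ on $[k]$ is what justifies the change of summation variable. In fact the same argument, with $\sigma_j$ replaced by any $\sigma \in \Perm{k}$, shows the stronger equivariance $L(\sigma v) = \sigma L(v)$, of which the stated lemma is the special case for transpositions of the first coordinate.
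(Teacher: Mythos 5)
Your proof is correct, and it takes a genuinely cleaner route than the paper's. The paper proves this lemma by a three-way case analysis on $y$ (whether $y \notin \{1,j\}$, $y = 1$, or $y = j$), explicitly tracking how $\sigma_j = \sigma_{(1,j)}$ permutes the summands in each case. Your argument instead performs a single change of summation variable $m = \sigma_j(i)$, using only that $\sigma_j$ is a bijection on $[k]$ sending $[k]\setminus\{y\}$ onto $[k]\setminus\{\sigma_j(y)\}$; no case split is needed. As you observe, this argument applies verbatim to any $\sigma \in \Perm{k}$, so it actually proves \cref{corollary: L is Sigma equivariant} directly, whereas the paper reaches that corollary by combining the transposition case with \cref{lemma: transposition triple identity} and \cref{corollary: symmetric group is generated by transpositions}. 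Your approach buys a shorter proof and eliminates two intermediate lemmas; the paper's approach, by working one transposition at a time, is perhaps more explicit about the combinatorics but is strictly more work. Both are valid.
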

  \begin{proof}
    If $j = 1$, then the result is trivial.
    Hence, let $j > 1$.
    We  prove
    \begin{equation*}
      [L(\sigma_j v)]_y = [L(v)]_{\sigma_j(y)}
    \end{equation*} 
    for the following three cases: $y \not \in \{1,j\}$, $y = 1$ and $y = j$.
    Before we go through the cases, note that
    \[
      [L(\sigma_j v)]_y
      =
      \sum_{i \in [k] : i \ne y}
      h([\sigma_j v]_y - [\sigma_j v]_i)
      =
      \sum_{i \in [k] : i \ne y}
      h(v_{\sigma_j(y)} - v_{\sigma_j(i)}).
    \]
    Now, for the first case, suppose that $y \not \in \{1,j\}$. Then $\sigma_j(y) = y$ and so
    \begin{align*}
      [L(\sigma_j v)]_y
      &=
      \sum_{i \in [k] : i \ne y}
      h(v_{y} - v_{\sigma_j(i)})
      \\
      &=
    h(v_y - v_{\sigma_j(1)})
      +
    h(v_y - v_{\sigma_j(j)})
      +
      \sum_{i \in [k] \, :\, i \not \in \{1,j,y\}}
      h(v_{y} - v_{\sigma_j(i)})
      \\
      &=
    h(v_y - v_{j})
      +
    h(v_y - v_{1})
      +
      \sum_{i \in [k] \, :\, i \not \in \{1,j,y\}}
      h(v_{y} - v_{i})
      \\
      &=
      \sum_{i \in [k] \, :\, i \not \in \{y\}}
      h(v_{y} - v_{i})
      \\
      &= 
      [L(v)]_y
      =
      [L(v)]_{\sigma_j(y)}.
    \end{align*}
    Next, suppose that $y = 1$. Thus, we have $\sigma_j(y) = \sigma_j(1) = j$. So
    \begin{align*}
      [L(\sigma_j v)]_y
      =
      [L(\sigma_j v)]_1
      &=
      \sum_{i \in [k] : i \ne 1}
      h(v_{j} - v_{\sigma_j(i)})
      \\
      &=
      \sum_{i \in [k] : i \ne j}
      h(v_{j} - v_{i})
      \\
      &=
      [L(v)]_j = [L(v)]_{\sigma_j(y)}.
    \end{align*}
    Finally, if $y = j$, $\sigma_j(y) = 1$
    \begin{align*}
      [L(\sigma_j v)]_y
      =
      [L(\sigma_j v)]_j
      &=
      \sum_{i \in [k] : i \ne j}
      h(v_{1} - v_{\sigma_j(i)})
      \\
      &=
      \sum_{i \in [k] : i \ne 1}
      h(v_{j} - v_{i})
      \\
      &=
      [L(v)]_1 = [L(v)]_{\sigma_j(j)}=[L(v)]_{\sigma_j(y)}.
    \end{align*}
    
  \end{proof}

\begin{lemma}
  \label{lemma: transposition triple identity}
  Let $i,j \in \{2,\dots, k\}$ be distinct. Then
  $\sigma_{i} \sigma_{j} \sigma_{i} = \sigma_{(i,j)}$.
\end{lemma}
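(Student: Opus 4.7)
The plan is to verify the identity $\sigma_i \sigma_j \sigma_i = \sigma_{(i,j)}$ by checking that the two permutations agree when applied to every element of $[k]$. Since both sides act as permutations, it suffices to evaluate each side on $\ell \in [k]$ and confirm equality, and because $\sigma_{(i,j)}$ fixes $[k] \setminus \{i,j\}$ while swapping $i$ and $j$, we only need to track where $\sigma_i \sigma_j \sigma_i$ sends the elements $1$, $i$, $j$, and a generic $\ell \notin \{1,i,j\}$.

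Concretely, I will split into four cases (using the abuse-of-notation convention that $\sigma_m$ acts on $[k]$ by the underlying transposition $(1\,m)$, which is justified in the additional notations section). Applied from the innermost $\sigma_i$ outward:
\begin{itemize}
\item $\ell = 1$: $\sigma_i(1) = i$, then $\sigma_j(i) = i$ (since $i \ne 1, j$), then $\sigma_i(i) = 1$, giving $1 = \sigma_{(i,j)}(1)$.
\item $\ell = i$: $\sigma_i(i) = 1$, then $\sigma_j(1) = j$, then $\sigma_i(j) = j$ (since $j \ne 1,i$), giving $j = \sigma_{(i,j)}(i)$.
\item $\ell = j$: $\sigma_i(j) = j$, then $\sigma_j(j) = 1$, then $\sigma_i(1) = i$, giving $i = \sigma_{(i,j)}(j)$.
\item $\ell \notin \{1,i,j\}$: each $\sigma_i$ and $\sigma_j$ fixes $\ell$, so the composition fixes $\ell$, matching $\sigma_{(i,j)}(\ell) = \ell$.
\end{itemize}
This case analysis exhausts $[k]$ and matches $\sigma_{(i,j)}$ pointwise, so $\sigma_i \sigma_j \sigma_i = \sigma_{(i,j)}$ as permutations (equivalently, as permutation matrices).

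There is no real obstacle here: the statement is the standard fact that in $\Perm{k}$ the transposition $(i\,j)$ is the conjugate $(1\,i)(1\,j)(1\,i)$, and the proof is a direct four-line verification. The only mild care needed is to fix the convention for the order of composition (so that $\sigma_i \sigma_j \sigma_i$ as a product of matrices acts on vectors consistently with the induced action on indices), but the hypothesis $i,j \in \{2,\dots,k\}$ with $i \ne j$ ensures that $1, i, j$ are three distinct indices, which is all the case analysis requires.
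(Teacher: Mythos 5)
Your proof is correct and takes essentially the same approach as the paper: an exhaustive case analysis over $\ell \in \{1, i, j\}$ and $\ell \notin \{1,i,j\}$, applying the transpositions from the innermost $\sigma_i$ outward and confirming pointwise agreement with $\sigma_{(i,j)}$.
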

\begin{proof}
  This is simply an exhaustive case-by-case proof over all inputs $y \in [k]$.
  First, let $y=1$. Then $\sigma_{(i,j)}(1) = 1$ since $1 \not \in \{i,j\}$. On the other hand
  $
    \sigma_{i} \sigma_{j} \sigma_{i} (1)
    = 
    \sigma_{i} \sigma_{j} (i)
    =
    \sigma_{i} (i)
    =1.
  $
  Now, let $y \in \{2,\dots, k\}$.  If $y \not \in \{i,j\}$, then $\sigma_{(i,j)}(y) = y$ and
  $
    \sigma_{i} \sigma_{j} \sigma_{i} (y)
    = 
    \sigma_{i} \sigma_{j} (y)
    =
    \sigma_{i} (y)
    =y.
  $
  If $y = i$, then $\sigma_{(i,j)}(i) = j$ and
  $
    \sigma_{i} \sigma_{j} \sigma_{i} (i)
    = 
    \sigma_{i} \sigma_{j} (1)
    =
    \sigma_{i} (j)
    =j.
  $
  If $y = j$, then $\sigma_{(i,j)}(j) = i$ and
  $
    \sigma_{i} \sigma_{j} \sigma_{i} (j)
    = 
    \sigma_{i} \sigma_{j} (j)
    =
    \sigma_{i} (1)
    =i.
  $
\end{proof}

\begin{corollary}
  \label{corollary: symmetric group is generated by transpositions}
  Every $\sigma \in \Perm{k}$ can be written as a product $\sigma = \sigma_{i_1} \sigma_{i_2} \cdots  \sigma_{i_l}$.
\end{corollary}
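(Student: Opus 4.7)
The plan is to reduce the claim to the well-known fact that $\Perm{k}$ is generated by the set of all transpositions, and then use \cref{lemma: transposition triple identity} to rewrite every transposition as a product of $\sigma_i$'s.

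First, I would recall (or briefly justify by induction on $k$, placing a chosen element into its target position one step at a time) that every $\sigma \in \Perm{k}$ admits a factorization $\sigma = \tau_1 \tau_2 \cdots \tau_m$ where each $\tau_s$ is a transposition, i.e., $\tau_s = \sigma_{(a_s,b_s)}$ for some distinct $a_s, b_s \in [k]$. This reduces the problem to showing that each $\sigma_{(a,b)}$ is expressible as a product of generators $\sigma_2, \ldots, \sigma_k$.

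Next, I would handle the transpositions case by case. If one of $a, b$ equals $1$, say $b = 1$, then $\sigma_{(a,b)} = \sigma_{(1,a)} = \sigma_a$ is already a generator (using the convention that $\sigma_{(a,1)} = \sigma_{(1,a)}$). If $a, b \in \{2, \ldots, k\}$ are distinct, then by \cref{lemma: transposition triple identity},
\[
\sigma_{(a,b)} = \sigma_a \sigma_b \sigma_a,
\]
which is a product of three generators. Substituting these expressions for each $\tau_s$ into the factorization of $\sigma$ yields the desired product $\sigma = \sigma_{i_1} \sigma_{i_2} \cdots \sigma_{i_l}$.

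There is no real obstacle here; the only thing worth stating carefully is the generation of $\Perm{k}$ by transpositions, which can be done via a short inductive argument: given $\sigma \in \Perm{k}$, if $\sigma(k) = k$ apply the inductive hypothesis to the restriction to $[k-1]$; otherwise let $a = \sigma^{-1}(k)$ and consider $\sigma \cdot \sigma_{(a,k)}$, which fixes $k$ and again reduces to the inductive case. Combining this with the substitutions from \cref{lemma: transposition triple identity} completes the proof.
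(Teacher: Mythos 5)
Your proof is correct and follows essentially the same route as the paper: reduce to the standard fact that transpositions generate $\Perm{k}$, then use \cref{lemma: transposition triple identity} to express each $\sigma_{(a,b)}$ with $a,b \in \{2,\dots,k\}$ as $\sigma_a\sigma_b\sigma_a$, while $\sigma_{(1,a)} = \sigma_a$ is already a generator. The only addition is your brief inductive justification that transpositions generate $\Perm{k}$, which the paper simply cites as standard.
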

\begin{proof}
  We prove the equivalent statement that the set $\mathcal{S} := \{\sigma_i: i \in \{2,\dots, k\}\}$ generates the group $\Perm{k}$.
  A standard result in group theory states that the set of transpositions $\mathcal{T}$ generates $\Perm{k}$.
  By \cref{lemma: transposition triple identity}, transpositions between labels in $\{2,\dots, k\}$ can be generated by $\mathcal{S}$.
  Furthermore, $\sigma_i = \sigma_{(1,i)}$ by definition, so transposition between $1$ and elements of $\{2,\dots, k\}$ can be generated by $\mathcal{S}$ as well.
  Hence, all of $\mathcal{T}$ can be generated by $\mathcal{S}$.
\end{proof}

\begin{corollary}
  \label{corollary: L is Sigma equivariant}
  For all $v \in \mathbb{R}^{k}$ and $\sigma \in \Perm{k}$, we have
  \[
    L(\sigma v) = \sigma L(v).
  \]
\end{corollary}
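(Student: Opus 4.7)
The plan is to combine the two immediately preceding results: Lemma (\emph{L is transposition equivariant}), which handles the generators $\sigma_j$, and Corollary (\emph{symmetric group is generated by transpositions}), which expresses an arbitrary $\sigma \in \Perm{k}$ in terms of those generators. Thus I would prove the corollary by induction on the length of a decomposition $\sigma = \sigma_{i_1}\sigma_{i_2}\cdots\sigma_{i_l}$ guaranteed by the generating-set corollary.

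First I would fix $v \in \mathbb{R}^k$ and $\sigma \in \Perm{k}$, and invoke the generating-set corollary to write $\sigma = \sigma_{i_1}\sigma_{i_2}\cdots\sigma_{i_l}$ with each $i_j \in \{2,\ldots,k\}$. The base case $l = 1$ is exactly the transposition-equivariance lemma. For the inductive step, set $\tau = \sigma_{i_2}\cdots\sigma_{i_l}$, so that $\sigma = \sigma_{i_1}\tau$ and $\tau$ admits a shorter decomposition. Then I would compute
\[
L(\sigma v) = L(\sigma_{i_1}(\tau v)) = \sigma_{i_1} L(\tau v) = \sigma_{i_1} \tau L(v) = \sigma L(v),
\]
where the second equality uses the transposition lemma on the vector $\tau v$, and the third uses the induction hypothesis applied to $\tau$.

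There is no substantive obstacle here: both ingredients are in hand, and the argument is purely formal manipulation of compositions. The only mild care needed is to make sure the lemma is applied to the correct vector (namely $\tau v$ rather than $v$) in the inductive step, and to observe that the decomposition length strictly decreases so the induction is well-founded.
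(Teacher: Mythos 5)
Your proposal is correct and matches the paper's argument: the paper likewise decomposes $\sigma$ via \cref{corollary: symmetric group is generated by transpositions} and peels off one transposition at a time using \cref{lemma: L is transposition equivariant}, writing the telescoping chain of equalities explicitly rather than phrasing it as a formal induction. The two presentations are mathematically identical.
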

\begin{proof}
  By \cref{corollary: symmetric group is generated by transpositions}, we may write $\sigma = \sigma_{i_1} \sigma_{i_2} \cdots \sigma_{i_m}$.  Hence,
  \begin{align}
     L(\sigma v) 
    &=
     L(\sigma_{i_1} \sigma_{i_2} \cdots \sigma_{i_m}v) 
    \\ &=
    \sigma_{i_1}  L( \sigma_{i_2} \cdots \sigma_{i_m}v) 
    \label{equation: Sigma-equivariance 3}
    \\ &\,\,\,\vdots \nonumber
    \\ &=
    \sigma_{i_1}  \sigma_{i_2} \cdots \sigma_{i_m} L(v) 
    \label{equation: Sigma-equivariance 4}
    \\ &=
    \sigma L(v),
  \end{align}
  where for \cref{equation: Sigma-equivariance 3} to \cref{equation: Sigma-equivariance 4} we used \cref{lemma: L is transposition equivariant}.
\end{proof}

\begin{lemma}
  \label{lemma: L is order reversing}
  Let $v \in \mathbb{R}^{k}$ and $j,j' \in [k]$ be distinct such that $v_j \ge v_{j'}$. Then $[L(v)]_j \le [L(v)]_{j'}$.
  Furthermore, if $v_j > v_{j'}$, then 
$[L(v)]_j < [L(v)]_{j'}$.
\end{lemma}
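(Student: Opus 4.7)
The plan is to proceed directly from the definition of the WW-hinge loss. For each $y \in [k]$ one has
\[
[L(v)]_y = \sum_{i \in [k]\,:\, i \ne y} h(v_y - v_i),
\]
so I would compute the difference $[L(v)]_{j'} - [L(v)]_j$ by pairing up the terms in the two sums. Every index $i \notin \{j,j'\}$ appears in both sums, contributing $h(v_{j'} - v_i) - h(v_j - v_i)$ to the difference. The only other terms are $h(v_{j'} - v_j)$ (from $[L(v)]_{j'}$) and $h(v_j - v_{j'})$ (from $[L(v)]_j$), which I would set aside as a ``flip pair.''

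The first key fact is that $h$ is non-increasing on $\mathbb{R}$, since $h(x) = \max\{0, 1-x\}$. Given $v_j \ge v_{j'}$, for each $i \notin \{j,j'\}$ we have $v_j - v_i \ge v_{j'} - v_i$, hence
\[
h(v_{j'} - v_i) - h(v_j - v_i) \ge 0.
\]
For the flip pair, $v_{j'} - v_j \le v_j - v_{j'}$, so again monotonicity gives $h(v_{j'} - v_j) \ge h(v_j - v_{j'})$. Summing, $[L(v)]_{j'} - [L(v)]_j \ge 0$, which is the first assertion.

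For the strict inequality, the main (and really only) subtlety is that $h$ is not strictly decreasing — it is flat and equal to $0$ on $[1,\infty)$ — so most of the paired inequalities above may be equalities. The trick I would use is to isolate the flip pair: when $v_j > v_{j'}$, the argument $v_j - v_{j'}$ is strictly positive while $v_{j'} - v_j$ is strictly negative, so
\[
h(v_j - v_{j'}) = \max\{0,\, 1-(v_j - v_{j'})\} < 1 < 1 + (v_j - v_{j'}) = h(v_{j'} - v_j).
\]
Combined with the non-negativity of the remaining paired differences, this forces $[L(v)]_{j'} - [L(v)]_j > 0$, proving the strict statement. I expect the whole argument to be a short direct computation; the only place where care is needed is this last observation that the flip pair always contributes a strict positive gap whenever $v_j > v_{j'}$, regardless of the magnitude of the difference, because one side of $h$ is evaluated in the strictly affine regime $x < 1$.
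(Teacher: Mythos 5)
Your proof is correct and follows essentially the same route as the paper: decompose the difference into the ``flip pair'' $h(v_j - v_{j'})$, $h(v_{j'} - v_j)$ plus the paired terms $h(v_j-v_i)$, $h(v_{j'}-v_i)$ for $i\notin\{j,j'\}$, and use that $h$ is non-increasing. Your handling of the strict case is in fact a bit more careful than the paper's one-line remark that the inequality ``becomes strict,'' since you explicitly isolate the flip pair and observe that $h(v_j-v_{j'})<1<1+(v_j-v_{j'})=h(v_{j'}-v_j)$ regardless of whether $h$ has saturated at $0$, which is precisely the point that needs an argument given that $h$ is not strictly decreasing.
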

\begin{proof}
  We have
  \begin{align*}
    &[L(v)]_j
    -
    [L(v)]_{j'}
    \\
    &=
    \sum_{i \in [k]: i \ne j} h(v_j - v_i)
    \\
    & \qquad 
    -
    \sum_{i \in [k]: i \ne j'} h(v_{j'} - v_i)
    \\
    &=
    h(v_j - v_{j'})+
    \sum_{i \in [k]: i \not\in  \{j, j'\}} h(v_j - v_i)
      \\
    & \qquad 
    -
    h(v_{j'} - v_j)
    - 
    \sum_{i \in [k]: i \not\in  \{j,j'\}} h(v_{j'} - v_i)
    \\
    &=
    h(v_j - v_{j'})
    -
    h(v_{j'} - v_j)
    \\
    & \qquad
    +
    \sum_{i \in [k]: i \not\in  \{j, j'\}}
    h(v_j - v_i)
    -
    h(v_{j'} - v_i).
  \end{align*}
  Since  and $h$ is monotonically non-increasing, we have
\begin{equation}
  \label{equation: monotonicity lemma}
  v_j - v_{j'} \ge 0 \ge  v_{j'} - v_j
  \implies 
  h(v_j - v_{j'}) - h(v_{j'} - v_j) \le 0
\end{equation}
  For the same reason, we have
    $h(v_j - v_i)
    -
    h(v_{j'} - v_i) \le 0.$ Putting it all together, we have $ 
    [L(v)]_j
    -
    [L(v)]_{j'}
    \le 0$, as desired.

    For the ``furthermore'' part, note that under the assumption $v_j > v_{j'}$, all inequalities in \cref{equation: monotonicity lemma} becomes strict.
\end{proof}

For reasons that will become clear later, we define for each $n \in [k-1]$
\begin{equation}
  \label{equation: WW hinge n-bayes risk}
  \underline L^n(p) := \inf_{v \in \mathbb{R}^{k} \, :\, |\argmax v| \ge n}\langle p, L(v) \rangle.
\end{equation}
Since $\argmax v$ is always nonempty, the condition that $|\argmax v| \ge 1$ is always true. Thus, we have $\underline L^1 = \underline L$.

\begin{lemma}
  \label{lemma: L bayes risk is Sigma invariant}
  For all $n \in [k-1]$, $p \in \Delta^k$ and $\sigma \in \Perm{k}$, we have $\underline{L}^n(p) = \underline{L}^n(\sigma p)$.
\end{lemma}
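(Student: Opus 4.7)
The plan is to reduce the identity $\underline{L}^n(p) = \underline{L}^n(\sigma p)$ to the equivariance property $L(\sigma v) = \sigma L(v)$ already established in \cref{corollary: L is Sigma equivariant}, together with the elementary observation that permutation matrices are orthogonal and preserve the cardinality of the $\argmax$.

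First I would unpack the right-hand side. Starting from the definition, write
\[
\underline{L}^n(\sigma p) = \inf_{v \in \mathbb{R}^k \,:\, |\argmax v| \ge n} \langle \sigma p, L(v)\rangle,
\]
and use $\langle \sigma p, L(v)\rangle = \langle p, \sigma^{-1} L(v)\rangle = \langle p, L(\sigma^{-1} v)\rangle$, where the first equality uses that permutation matrices satisfy $\sigma^\top = \sigma^{-1}$, and the second uses \cref{corollary: L is Sigma equivariant} in the form $L(\sigma^{-1}v) = \sigma^{-1} L(v)$ (apply $\sigma^{-1}$ to both sides of $L(\sigma w) = \sigma L(w)$ with $w = \sigma^{-1} v$).

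Next I would perform the change of variable $v' = \sigma^{-1} v$, so the infimum is taken over $v = \sigma v'$. The key auxiliary step is checking that the feasible set is preserved: since $[\sigma v']_i = v'_{\sigma(i)}$, the multiset of entries of $\sigma v'$ equals that of $v'$, so $\max \sigma v' = \max v'$, and moreover $\argmax(\sigma v') = \sigma^{-1}(\argmax v')$. In particular $|\argmax(\sigma v')| = |\argmax v'|$, so the constraint $|\argmax v| \ge n$ becomes exactly $|\argmax v'| \ge n$. As $v$ ranges over its feasible set, $v'$ ranges over the same set, yielding
\[
\underline{L}^n(\sigma p) = \inf_{v' \,:\, |\argmax v'| \ge n} \langle p, L(v')\rangle = \underline{L}^n(p).
\]

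There is no real obstacle here; the whole argument is a bookkeeping exercise built on equivariance. The one place to be careful is the direction of the permutation (writing $L(\sigma^{-1}v)$ rather than $L(\sigma v)$) so that the change of variable cleanly identifies the two infima, and to note explicitly that $|\argmax|$ is permutation invariant so the two constraint sets coincide.
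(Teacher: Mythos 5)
Your proposal is correct and follows essentially the same route as the paper's proof: both combine the equivariance $L(\sigma v) = \sigma L(v)$ from \cref{corollary: L is Sigma equivariant}, the orthogonality $\sigma^\top = \sigma^{-1}$ of permutation matrices, and the permutation-invariance of $|\argmax|$ (so the constraint set $\{v : |\argmax v|\ge n\}$ is preserved) before a change of variables in the infimum. The only cosmetic difference is that the paper writes the chain starting from $\underline{L}^n(p)$ with substitution $u=\sigma v$, whereas you start from $\underline{L}^n(\sigma p)$ and substitute $v'=\sigma^{-1}v$.
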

\begin{proof}
  Define $\mathcal{R}^{k,n} := \{v \in \mathbb{R}^k : |\argmax v| \ge n\}$.
  Since $|\argmax v | = |\argmax \sigma v|$, we have $\sigma \mathcal{R}^{k,n} = \mathcal{R}^{k,n}$.
  Introducing the change of variables $u = \sigma v$, we have
  \begin{align*}
    \underline L^n(p) &= \inf_{v \in \mathcal{R}^{k,n}}\langle p, L(v) \rangle
    \\
                    &=
                    \inf_{\sigma'u \in \mathcal{R}^{k,n}}\langle p, L(\sigma'u) \rangle
                    \quad \because \mbox{Definition of $u$}
    \\
                    &=
                    \inf_{u \in \sigma\mathcal{R}^{k,n}}\langle p, L(\sigma'u) \rangle
                    \quad \because \mbox{$\sigma^{-1} = \sigma'$}
  \\
                    &=
                    \inf_{u \in \mathcal{R}^{k,n}}\langle p, L(\sigma'u) \rangle
                    \quad \because \mbox{$\sigma \mathcal{R}^{k,n} = \mathcal{R}^{k,n}$}
  \\
                    &=
                    \inf_{u \in \mathcal{R}^{k,n}}
\langle p, \sigma'L(u) \rangle
\quad \because \mbox{\cref{corollary: L is Sigma equivariant}}
  \\
                    &=
                    \inf_{u \in \mathcal{R}^{k,n}}
\langle \sigma p, L(u) \rangle
\\
                    &= \underline{L}^n(\sigma p).
  \end{align*}
\end{proof}

  \begin{lemma}
    \label{lemma: bubble sort}
    Let $p \in \mathbb{R}^k_{\downarrow}$, $q \in \mathbb{R}^k$ be arbitrary and $\sigma \in \Perm{k}$ be such that $\sigma q \in \mathbb{R}^{k}_{\uparrow}$. Then $\langle p, q\rangle \ge \langle p, \sigma q\rangle$.
  \end{lemma}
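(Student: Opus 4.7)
The plan is to prove this by the standard bubble sort argument underlying the rearrangement inequality, which explains the name of the lemma. The key observation is that an adjacent transposition which moves $q$ toward non-decreasing order can only decrease the inner product $\langle p, q \rangle$ when $p$ is non-increasing.

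First, I would isolate the elementary swap step as a sublemma: if $p \in \mathbb{R}^k_\downarrow$, $q \in \mathbb{R}^k$, and for some index $i \in [k-1]$ we have $q_i > q_{i+1}$, then letting $q' \in \mathbb{R}^k$ be obtained from $q$ by swapping entries $i$ and $i+1$, we have $\langle p, q \rangle \ge \langle p, q' \rangle$. This reduces to a one-line computation:
\[
\langle p, q \rangle - \langle p, q' \rangle = (p_i - p_{i+1})(q_i - q_{i+1}) \ge 0,
\]
using $p_i \ge p_{i+1}$ (since $p \in \mathbb{R}^k_\downarrow$) and $q_i - q_{i+1} > 0$ by assumption.

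Next, I would apply bubble sort to $q$. Concretely, starting from $q^{(0)} := q$, as long as $q^{(t)}$ is not non-decreasing there is some $i$ with $q^{(t)}_i > q^{(t)}_{i+1}$; swap entries $i$ and $i+1$ to form $q^{(t+1)}$. By the swap step, $\langle p, q^{(t)} \rangle \ge \langle p, q^{(t+1)} \rangle$. Standard analysis of bubble sort guarantees termination after finitely many steps in some $q^{(T)} \in \mathbb{R}^k_\uparrow$. Since any non-decreasing rearrangement of a finite multiset is uniquely determined as a vector, we have $q^{(T)} = \sigma q$. Chaining the inequalities then gives $\langle p, q \rangle \ge \langle p, \sigma q \rangle$.

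The main (and only) subtlety is the uniqueness-of-sorted-rearrangement observation used to identify $q^{(T)}$ with $\sigma q$ when $q$ has ties; this is immediate because the multiset of entries determines the non-decreasing vector entrywise, so any two non-decreasing rearrangements produce the same vector in $\mathbb{R}^k$. No genuine obstacle is expected — this is a textbook rearrangement-inequality argument and the heart of the proof is the single algebraic identity $\langle p, q \rangle - \langle p, q' \rangle = (p_i - p_{i+1})(q_i - q_{i+1})$.
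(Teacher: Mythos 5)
Your proposal is correct and follows essentially the same bubble sort argument as the paper's proof, right down to the key algebraic identity $(p_i - p_{i+1})(q_i - q_{i+1}) \ge 0$ and the final observation that the sorted output must coincide with $\sigma q$ as a vector even if the sorting permutation differs from $\sigma$.
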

  \begin{proof}
    Consider the ``bubble sort'' algorithm applied to $q$:
    \begin{enumerate}
      \item Initialize $q^{(0)} = q$, $t \gets 0$
      \item While there exists $i \in [k-1]$ such that $q_i^{(t)} > q_{i+1}^{(t)}$, do
        \begin{enumerate}
          \item 
        $q^{(t+1)} \gets \sigma_{(i,i+1)} q^{(t)}$
      \item $t \gets t +1$
        \end{enumerate}
      \item Output monotone non-decreasing vector $q^{(t)}$
    \end{enumerate}
    We claim that at every step, we have $\langle p, q^{(t)}\rangle \ge \langle p, q^{(t+1)}\rangle$. Let 
$a = q_i^{(t)}$ and $b= q_{i+1}^{(t)}$ as in step 2 above.
Let $c = p_i$ and $d = p_{i+1}$. Hence, we have $a > b$ and $c \ge d$. Observe that
\[
  \langle p, q^{(t)}\rangle -\langle p, q^{(t+1)}\rangle
=
ac + bd - (ad+bc)
=
(a-b)(c-d) \ge 0
\]
which proves the claim. 
Thus, we have
    \[
      \langle p, q\rangle
      =
      \langle p, q^{(0)} \rangle
      \ge 
      \langle p, q^{(1)} \rangle
      \ge 
      \cdots 
      \ge
      \langle p, q^{(t)}\rangle.
    \]
    By construction, there exists $\tau \in \Perm{k}$ such that $\tau q = q^{(t)}$.
    We must have $\tau q = \sigma q$ since both vectors are monotone non-increasing, although $\tau$ may not equal $\sigma$.
  \end{proof}

  Define the matrix $T \in \mathbb{R}^{k\times k}$
  \begin{equation}
    \label{equation: T matrix}
    T_{ij} = 
    \begin{cases}
      1 & i \ge j \\
      0 & \mbox{otherwise}.
    \end{cases}
  \end{equation}
  Also, define $D \in \mathbb{R}^{k\times k}$
  \[
    D_{ij}
    = 
    \begin{cases}
      1 &: i =j \\
      -1 &: i = j+1\\
      0 &: \mbox{otherwise.}
    \end{cases}
  \]
  In other words, $D$ is the matrix with $1$s on the main diagonal, $-1$s on the subdiagonal below the main diagonal, and $0$ everywhere else.
  We have
  \[
    [Dv]_i =
    \begin{cases}
      v_1 &: i = 1\\
      v_{i} - v_{i-1} &: i > 1.
    \end{cases}
  \]
  \begin{lemma}
    \label{lemma: T is the inverse of D}
    $D^{-1} = T$.
  \end{lemma}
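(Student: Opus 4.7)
The plan is to verify directly that $TD = I$, which (since both matrices are square) immediately implies $D^{-1} = T$. The cleanest route is to apply $TD$ to an arbitrary vector and exploit telescoping, rather than computing the matrix product entrywise.

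Concretely, I would proceed as follows. Fix an arbitrary $v \in \mathbb{R}^k$ and set $w = Dv$, so that $w_1 = v_1$ and $w_i = v_i - v_{i-1}$ for $i \ge 2$ by the description of $D$ given just before the lemma. Applying $T$ to $w$, and using the fact that $[Tw]_i = \sum_{j=1}^{i} w_j$ (since the $i$-th row of $T$ has a $1$ in columns $1,\dots,i$ and $0$ elsewhere), I compute
\begin{equation*}
[TDv]_i = [Tw]_i = w_1 + \sum_{j=2}^{i} w_j = v_1 + \sum_{j=2}^{i} (v_j - v_{j-1}) = v_1 + (v_i - v_1) = v_i,
\end{equation*}
where the middle equality is a telescoping sum. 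For $i=1$ the sum is empty and the identity reads $[TDv]_1 = v_1$, which also holds. Hence $TDv = v$ for every $v \in \mathbb{R}^k$, so $TD = I_k$. Since $T$ and $D$ are both $k \times k$, this forces $D$ to be invertible with $D^{-1} = T$.

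There is no real obstacle here; the only thing to be careful about is the edge case $i=1$ and the correct handling of the telescoping cancellation (making sure the summation range matches the definition of $D$, which has its nonzero off-diagonal entries on the subdiagonal rather than the superdiagonal). If a more formal matrix-product argument is preferred, I could equivalently write $(TD)_{ij} = \sum_{\ell=1}^{k} T_{i\ell} D_{\ell j} = \sum_{\ell \ge j,\, \ell \le i} D_{\ell j}$, and observe that the only nonzero contributions come from $\ell = j$ (giving $+1$) and $\ell = j+1$ (giving $-1$, provided $j+1 \le i$), yielding $1$ when $i = j$ and $0$ otherwise. Either presentation is a short and routine calculation.
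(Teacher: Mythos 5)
Your proof is correct and, in substance, the same as the paper's: both establish the lemma by checking that the product of the two matrices is the identity. The paper does this for $D'T'$ and appeals to Gaussian elimination with "it is easy to see," whereas you verify $TD = I_k$ directly via the telescoping computation $[TDv]_i = v_1 + \sum_{j=2}^{i}(v_j - v_{j-1}) = v_i$; since $D'T' = (TD)'$, the two verifications are equivalent. Your version simply supplies the short calculation that the paper leaves implicit, and the alternative entrywise argument you sketch at the end is an equally valid way to present it.
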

  \begin{proof}
    Using Gaussian elimination for inverting a matrix, it is easy to see that $D'T'$ is the identity.
  \end{proof}

\begin{definition}
  Define the following sets:
  \[
    \mathcal{M}
    =
    \{
      v \in \mathbb{R}^{k} : v_1 = 0\mbox{ and } 0 \le v_i - v_{i+1},\, \forall [k-1]
    \},
  \]
  \[
    \mathcal{C}
    =
    \{
      v \in \mathbb{R}^{k} : v_1 = 0, v_k \le -1, \mbox{ and } 0 \le v_i - v_{i+1} \le 1, \, \forall [k-1]
    \},
  \]
  $\mathcal{M}_{\mathbb{Z}} = \mathcal{M} \cap \mathbb{Z}^k$  and $\mathcal{C}_{\mathbb{Z}} = \mathcal{C} \cap \mathbb{Z}^k$.
\end{definition}

  \begin{lemma}
    \label{lemma: cake layering representation}
    We have the following equality of sets:
    \begin{align*}
      \mathcal{M}_{\mathbb{Z}}&=
      \{
        -Tc : c \in  \mathbb{Z}_+^{k}, \, c_1 = 0
      \}
      \\
      \mathcal{C}_{\mathbb{Z}}&=
      \{
        -Ts : s \in  \{0,1\}^{k}, \, s_1 = 0,\mbox{ and }
     \exists i \in \{2,\dots, k\}: s_i = 1
      \}
    \end{align*}
  \end{lemma}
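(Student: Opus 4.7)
The plan is to exploit the identity $D^{-1}=T$ established in \cref{lemma: T is the inverse of D} to perform the change of variables $c = -Dv$, so that $v = -Tc$, and then translate each defining inequality on $v$ into an equivalent condition on the entries of $c$.

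For the first equality, I would proceed by double inclusion. Given $v \in \mathcal{M}_{\mathbb{Z}}$, set $c = -Dv \in \mathbb{Z}^k$. Using the formula $[Dv]_1 = v_1$ and $[Dv]_i = v_i - v_{i-1}$ for $i>1$, the condition $v_1 = 0$ becomes $c_1 = 0$, and the condition $v_i \ge v_{i+1}$ for $i \in [k-1]$ becomes $c_{i+1} \ge 0$. Hence $c \in \mathbb{Z}_+^k$ with $c_1 = 0$, and $v = -Tc$ by \cref{lemma: T is the inverse of D}. Conversely, given such a $c$, a direct computation shows that $v := -Tc$ satisfies $v_1 = -c_1 = 0$ and $v_i - v_{i+1} = c_{i+1} \ge 0$, so $v \in \mathcal{M}_{\mathbb{Z}}$.

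For the second equality I would repeat the same change of variables and simply add the two extra constraints. The inequality $v_i - v_{i+1} \le 1$ combined with integrality of $c_{i+1}$ and the already-derived $c_{i+1} \ge 0$ forces $c_{i+1} \in \{0,1\}$, so $s := c \in \{0,1\}^k$. The constraint $v_k \le -1$ translates to $[Tc]_k = c_1 + c_2 + \cdots + c_k \ge 1$, which, since $c_1 = 0$ and each $c_i \in \{0,1\}$, is equivalent to the existence of some $i \in \{2,\dots,k\}$ with $s_i = 1$. The reverse inclusion follows by reading the same equivalences backward.

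There is no substantive obstacle here; the lemma is essentially bookkeeping that re-expresses $\mathcal{M}_{\mathbb{Z}}$ and $\mathcal{C}_{\mathbb{Z}}$ in terms of their first-difference coordinates, and the only thing to be careful about is matching indices correctly in $Dv$ (note the off-by-one between the $v_i - v_{i+1}$ appearing in the definitions of $\mathcal{M}$ and $\mathcal{C}$ and the $c_{i+1} = v_{i+1} - v_i$ appearing as an entry of $Dv$, which is why the nonzero entries of the corresponding $c$ or $s$ live in positions $2,\dots,k$).
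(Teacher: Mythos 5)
Your proof is correct and follows essentially the same route as the paper's: apply the change of variables $c = -Dv$ (equivalently $v = -Tc$ via \cref{lemma: T is the inverse of D}) and translate each defining constraint on $v$ into the corresponding constraint on the entries of $c$. If anything, your write-up is slightly more explicit about checking both inclusions and about the off-by-one in the indices, where the paper handles the reverse inclusion with a brief ``clearly all $v \in \mathcal{C}_{\mathbb{Z}}$ arise this way.''
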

  \begin{proof}
    If $v \in \mathcal{C}_{\mathbb{Z}}$, then we have $v_i \in \mathbb{Z}_+$ and $v_i - v_{i+1} \in [0,1]$.
    These two conditions together implies that $v_{i} - v_{i+1} \in \{0,1\}$ for all $i \in [k-1]$.
    Hence, $-D v \in \{0,1\}^{k-1}$ with $[Dv]_1 = -v_1 = 0$.
    Let $-Dv = s$.
    Then \cref{lemma: T is the inverse of D} implies that $-Ts = TDv = v$.
    By construction, $s_1 = 0$.
    Furthermore, if $s_i = 0$ for all $i \in [k]$, then we would have $v = 0$ as well, which contradicts the fact that $v_k \le -1$.
    Hence, there must exists $i \in \{2,\dots, k\}$ such that $s_i = 1$.
    Clearly, all $v \in \mathcal{C}_{\mathbb{Z}}$ arise this way.
    The statement about $\mathcal{M}_{\mathbb{Z}}$ is similar.
  \end{proof}

  \begin{lemma}
    \label{lemma: margin tightening}
    Let $c \in \mathbb{Z}^k_+$ and define $s \in \{0,1\}^k$ entrywise where for each $i \in [k]$, $s_i = \mathbb{I}\{c_i \ge 1\}$.
    Then we have $[L(-Tc)]_y \ge [L(-Ts)]_y$ for all $y \in [k]$.
  \end{lemma}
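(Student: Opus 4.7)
The plan is to compute $[L(-Tc)]_y$ explicitly by expanding the hinge sum, split the indices $i \ne y$ into those with $i<y$ and those with $i>y$, and then compare term-by-term with $[L(-Ts)]_y$. Two easy structural facts will drive everything: (i) $s_j \le c_j$ entrywise, and (ii) since $c \in \mathbb{Z}_+^k$, for any set of indices $A \subseteq [k]$ we have $\sum_{j \in A} c_j \ge 1$ if and only if $\sum_{j \in A} s_j \ge 1$ (both are equivalent to at least one $c_j$ being nonzero).

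First I would compute the pairwise margin $[-Tc]_y - [-Tc]_i$ using $[Tv]_m = \sum_{\ell=1}^m v_\ell$. This gives $[-Tc]_y - [-Tc]_i = -\sum_{\ell=i+1}^{y} c_\ell$ when $i<y$ (a nonpositive quantity) and $[-Tc]_y - [-Tc]_i = \sum_{\ell=y+1}^{i} c_\ell$ when $i>y$ (a nonnegative quantity). Substituting into $h(x) = \max\{0,1-x\}$:
\begin{itemize}
\item For $i<y$: the argument of $h$ is $1 + \sum_{\ell=i+1}^{y} c_\ell \ge 1$, so the hinge term equals $1 + \sum_{\ell=i+1}^{y} c_\ell$.
\item For $i>y$: the hinge term equals $\max\bigl\{0,\, 1 - \sum_{\ell=y+1}^{i} c_\ell\bigr\}$, which (because the $c_\ell$ are nonnegative integers) equals $1$ if $c_{y+1}=\cdots=c_i=0$ and equals $0$ otherwise.
\end{itemize}

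Next I perform the identical computation for $s$. The $i<y$ terms become $1 + \sum_{\ell=i+1}^{y} s_\ell$, which is at most the corresponding $c$-term by the entrywise inequality $s \le c$. The $i>y$ terms become $\max\bigl\{0,\,1-\sum_{\ell=y+1}^{i} s_\ell\bigr\}$; by fact (ii) above, this sum vanishes iff the corresponding $c$-sum vanishes, so these terms are \emph{equal} to the $c$-terms, not merely dominated. Summing the two ranges of $i$ gives $[L(-Tc)]_y \ge [L(-Ts)]_y$ for every $y\in[k]$.

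There is no real obstacle beyond careful bookkeeping: the only subtlety is recognizing that the $i>y$ contributions match exactly (rather than requiring a one-sided bound), and this hinges on the integrality of $c$ together with the equivalence in fact (ii). Everything else is direct substitution into \cref{main-definition: full WW hinge loss} using the formula for $T$ in \cref{equation: T matrix}.
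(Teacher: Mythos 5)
Your proof is correct and follows essentially the same approach as the paper: split the sum over $i\ne y$ into $i<y$ and $i>y$, note that the $i>y$ hinge terms agree \emph{exactly} (since $\sum_{\ell=y+1}^i c_\ell = 0$ iff $\sum_{\ell=y+1}^i s_\ell = 0$ when $c$ has nonnegative integer entries), and that the $i<y$ terms are dominated because $s\le c$ entrywise. The only cosmetic difference is that you evaluate the $i<y$ hinge values explicitly while the paper appeals directly to the monotonicity of $h$.
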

  \begin{proof}
    By definition, we have
    \begin{align*}
      &[L(-Tc)]_y -[L(-Ts)]_y
      \\&= 
    \sum_{i \in [k]: i \ne y }
    h([-Tc]_y - [-Tc]_i)
    -
    h([-Ts]_y - [-Ts]_i)
    \\
                            &=
    \sum_{i \in [k]: i \ne y }
    h([Tc]_i - [Tc]_y)
    -
    h([Ts]_i - [Ts]_y)
  \end{align*}
  It suffices to show that $
    h([Tc]_i - [Tc]_y)
    -
    h([Ts]_i - [Ts]_y)
    \ge 0$ for all $i \in [k]$ such that $i \ne y$.

  First, consider when $i > y$. We have
  \[
[Tc]_i - [Tc]_y
=
\sum_{j=y+1}^{i} c_j
\]
Similarly, we have
  \[
[Ts]_i - [Ts]_y
=
\sum_{j=y+1}^{i} s_j  
=
\sum_{j=y+1}^i \mathbb{I}\{c_j \ge 1\}.
\]
From this, we see that 
\begin{align*}
[Ts]_i - [Ts]_y \ge 1 \implies 
[Tc]_i - [Tc]_y \ge 1
\\
[Ts]_i - [Ts]_y = 0 \implies 
[Tc]_i - [Tc]_y = 0.
\end{align*}
For $i > y$, we have 
$h([Ts]_i - [Ts]_y) 
=
h([Tc]_i - [Tc]_y)$.

Next, let $i < y$. We have
  \[
[Tc]_i - [Tc]_y
=
\sum_{j=i+1}^{y} -c_j.
\]
Similarly, we have
  \[
[Ts]_i - [Ts]_y
=
\sum_{j=i+1}^{y} -\mathbb{I}\{c_j \ge 1\}.
\]

Since $c_j \ge \mathbb{I}\{c_j \ge 1\}$, we have
$
[Ts]_i - [Ts]_y \ge 
[Tc]_i - [Tc]_y
$
which implies that
$h([Ts]_i - [Ts]_y) \le 
h([Tc]_i - [Tc]_y)$.  
\end{proof}

  \begin{definition}
    \label{definition: the pi mapping}
    Let $v = (v_1,\dots, v_k) \in \mathbb{R}^k$.  Define the linear map $\pi: \mathbb{R}^{k} \to \mathbb{R}^{k-1}$ 
\[
  \pi(v)  = (v_1-v_2, v_1 - v_3,\dots, v_1 - v_k).
\]
  \end{definition}

We observe that for each $i \in [k-1]$, we have
\[
  [\pi v]_{i} 
  = v_1 - v_{i+1}.
\]

\begin{definition}
  \label{definition: the rho matrices}
  Given $k \ge 2$, define the following $(k-1)$-by-$(k-1)$ square matrices $\rho_1,\rho_2,\dots, \rho_k \in \mathbb{R}^{(k-1) \times (k-1)}$:
\begin{enumerate}
  \item $\rho_1$ is the identity,
  \item Let $z = (z_1,\dots, z_{k-1}) \in \mathbb{R}^{k-1}$ be a  vector.
    For each $i >1$, define $\rho_{i}(z) \in \mathbb{R}^{k-1}$ entrywise for each $j \in [k-1]$ by
  \begin{equation}
    \label{equation: action of rho i plus 1}
    \left[\rho_{i}(z)\right]_{j} = 
  \begin{cases}
    z_j - z_{i-1} &: j \ne i-1\\
    -z_{i-1} &: j = i-1.
  \end{cases}
\end{equation}
\end{enumerate}
\end{definition}

\begin{lemma}
  [Commuting relations]
  \label{lemma: rho pi sigma relation}
  For all $i \in [k]$, we have
  $
      \pi\sigma_{i}
      =
      \rho_{i} \pi.
      $
\end{lemma}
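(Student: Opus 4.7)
The plan is to verify the identity $\pi \sigma_i v = \rho_i \pi v$ pointwise for each coordinate $j \in [k-1]$, by a direct calculation. The case $i=1$ is immediate, since $\sigma_1 = \sigma_{(1,1)}$ is the identity on $\mathbb{R}^k$ by the convention stated in the additional notations, and $\rho_1$ is the identity on $\mathbb{R}^{k-1}$ by definition. So the interesting case is $i \in \{2,\dots,k\}$, and I will fix such an $i$ and an arbitrary $v \in \mathbb{R}^k$.

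First, I will unpack the left-hand side. The transposition $\sigma_i$ swaps positions $1$ and $i$, so $(\sigma_i v)_1 = v_i$, $(\sigma_i v)_i = v_1$, and $(\sigma_i v)_m = v_m$ for $m \notin \{1,i\}$. Using the formula $[\pi u]_j = u_1 - u_{j+1}$ from the comment after \cref{definition: the pi mapping}, I get
\[
[\pi \sigma_i v]_j = v_i - (\sigma_i v)_{j+1}.
\]
This splits into two subcases according to whether $j+1 = i$ or not: if $j \ne i-1$ then $(\sigma_i v)_{j+1} = v_{j+1}$, giving $[\pi \sigma_i v]_j = v_i - v_{j+1}$; if $j = i-1$ then $(\sigma_i v)_{j+1} = (\sigma_i v)_i = v_1$, giving $[\pi \sigma_i v]_{i-1} = v_i - v_1$.

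Next, I will unpack the right-hand side. Write $z = \pi v$, so $z_m = v_1 - v_{m+1}$ for all $m \in [k-1]$; in particular $z_{i-1} = v_1 - v_i$. Applying \cref{equation: action of rho i plus 1}: for $j \ne i-1$,
\[
[\rho_i z]_j = z_j - z_{i-1} = (v_1 - v_{j+1}) - (v_1 - v_i) = v_i - v_{j+1},
\]
and for $j = i-1$, $[\rho_i z]_{i-1} = -z_{i-1} = v_i - v_1$. Matching these against the two subcases computed for $\pi \sigma_i v$ yields the claimed equality.

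There is no real obstacle here: the definition of $\rho_i$ in \cref{definition: the rho matrices} is engineered precisely so that subtracting $z_{i-1}$ off of every coordinate of $\pi v$ (and flipping sign on the $(i-1)$-st coordinate) has the effect of replacing the reference entry $v_1$ in the definition of $\pi$ by $v_i$, which is exactly what $\sigma_i$ does before $\pi$ is applied. The only care needed is bookkeeping the shift between $\mathbb{R}^k$-indices and $\mathbb{R}^{k-1}$-indices, i.e., remembering that position $i$ in $v$ corresponds to position $i-1$ in $\pi v$.
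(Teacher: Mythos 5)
Your proof is correct and matches the paper's argument essentially line for line: handle $i=1$ trivially, then for $i>1$ compute both $[\pi\sigma_i v]_j$ and $[\rho_i \pi v]_j$ coordinatewise, splitting into the cases $j \ne i-1$ and $j = i-1$, and observe the expressions coincide. The closing remark about $\rho_i$ being engineered to re-reference $v_1$ to $v_i$ is a nice conceptual summary but does not change the substance.
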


\begin{proof}
  If $i = 1$, then $\sigma_i$ and $\rho_i$ are both identity matrices and there is nothing to show. Otherwise, suppose that $i > 1$.
  Consider $v \in \mathbb{R}^k$.
  We first calculate $\pi\sigma_{i}v$.
 For each $j \in [k-1]$, we have
  \begin{equation}
    \label{equation: commuting relations step 1}
    [\pi \sigma_i v]_{j}
    =
    [\sigma_iv]_1 - 
    [\sigma_iv]_{j+1}
    =v_i - v_{\sigma_i(j+1)}
    =
    \begin{cases}
      v_i - v_{j+1} &: i \ne j + 1
      \\
      v_i - v_1 &: i = j+1.
    \end{cases}
  \end{equation}
  Now, we compute
      $\rho_{i} \pi v$.
 Likewise, for each $j \in [k-1]$,
  \[
    [\rho_i \pi v]_j
    =
    \begin{cases}
      [\pi v]_j - [\pi v]_{i-1} &: j \ne i-1 \\
      -[\pi v]_{i-1} &: j =i-1.
    \end{cases}
  \]
  Consider the two cases above separately: for $j \ne i - 1$, we have
  \[
      [\pi v]_j - [\pi v]_{i-1} 
      =
      (v_1 - v_{j+1} )
      -
      (v_1 - v_{i})
      =
      v_i - v_{j+1}.
  \]
  On the other hand, for $i = j +1$, we have
  \[
      -[\pi v]_{i-1} 
      = -(v_1 - v_{i})
      =v_i - v_1.
    \]
    Thus, we have 
    $[\pi \sigma_i v]_{j}
    =
    [\rho_i \pi v]_j
    $
    for all $j$ which implies that 
    $\pi \sigma_i v
    =
    \rho_i \pi v
    $. Since $v$ was arbitrary, we have $\pi \sigma_i = \rho_i \pi$.
\end{proof}

\begin{definition}
  The \emph{reduced WW hinge function} $H: \mathbb{R}^{k-1} \to \mathbb{R}_{\ge 0}$ is defined as
  \[
    H(z)
    =\sum_{i=1}^{k-1} h(z_i).
  \]
\end{definition}

  \begin{definition}
    \label{definition: reduced WW hinge loss}
    For $z \in \mathbb{R}^{k-1}$, the \emph{reduced WW hinge loss} $\LL(z) \in \mathbb{R}^{k}$ is defined entrywise for each $y \in [k]$ by
\[
  [\LL(z)]_y = H(\rho_yz).
\]
  \end{definition}

  \begin{lemma}
    \label{lemma: reduced and full hinge are equivalent}
    For all $v \in \mathbb{R}^k$, we have $\LL(\pi v) = L(v)$.
  \end{lemma}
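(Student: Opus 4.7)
The plan is to verify the identity $\LL(\pi v) = L(v)$ coordinate-by-coordinate in $y \in [k]$, by reducing the $y$-th coordinate to the first coordinate via the transposition $\sigma_y = \sigma_{(1,y)}$. The only tools needed are the commuting relation $\pi \sigma_i = \rho_i \pi$ from \cref{lemma: rho pi sigma relation} and the transposition equivariance $L(\sigma_i v) = \sigma_i L(v)$ from \cref{lemma: L is transposition equivariant}.

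First I would establish the base case $y = 1$ directly. Since $\rho_1$ is the identity, a one-line computation gives
\[
  [\LL(\pi v)]_1 = H(\pi v) = \sum_{i=1}^{k-1} h(v_1 - v_{i+1}) = \sum_{j \ne 1} h(v_1 - v_j) = [L(v)]_1.
\]
The key observation, which drives everything else, is that the same calculation shows $H(\pi w) = [L(w)]_1$ for \emph{every} $w \in \mathbb{R}^k$, not just $w = v$.

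For general $y \in [k]$, I would chain three identities. Starting from the definition, $[\LL(\pi v)]_y = H(\rho_y \pi v)$; applying \cref{lemma: rho pi sigma relation} rewrites this as $H(\pi \sigma_y v)$; the base-case observation applied to $w = \sigma_y v$ turns it into $[L(\sigma_y v)]_1$; finally, \cref{lemma: L is transposition equivariant} gives $[L(\sigma_y v)]_1 = [\sigma_y L(v)]_1 = [L(v)]_{\sigma_y(1)} = [L(v)]_y$, where the last equality uses that $\sigma_y = \sigma_{(1,y)}$ maps $1 \mapsto y$. Since $y$ was arbitrary, $\LL(\pi v) = L(v)$.

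The argument is essentially mechanical once the commuting relation and the equivariance are in hand, so there is no substantial obstacle; the only conceptual content is recognizing that $H \circ \pi$ computes the first coordinate of $L$, after which every other coordinate is recovered by conjugating with the symmetry that moves index $y$ to position $1$.
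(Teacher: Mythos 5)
Your proof is correct. It shares the commuting relation $\pi\sigma_y = \rho_y\pi$ with the paper's proof, but after that the two arguments diverge in which second ingredient they invoke. The paper evaluates $H(\pi\sigma_y v)$ directly by plugging in the explicit entry formula for $[\pi\sigma_y v]_i$ (computed inside the proof of \cref{lemma: rho pi sigma relation}) and matching it term by term with $\sum_{i\ne y} h(v_y - v_i)$. You instead isolate the observation that $H \circ \pi$ computes the \emph{first} coordinate of $L$, i.e.\ $H(\pi w) = [L(w)]_1$ for every $w$, and then recover the $y$-th coordinate by conjugating with $\sigma_y$ and invoking the transposition equivariance of $L$ (\cref{lemma: L is transposition equivariant}), a lemma the paper's proof of this statement does not use. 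Your route is conceptually tidier — it replaces a second entry-level calculation with a reuse of an equivariance result already in hand, and it makes transparent why the reduced loss $\LL$ must agree with $L$: both are pinned down by their first coordinate and their behavior under $\Perm{k}$. The paper's route is more self-contained in the sense that it re-derives everything from the formula without appealing to the equivariance lemma, at the cost of repeating some index bookkeeping. Either is acceptable; yours is arguably the more modular presentation.
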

  \begin{proof}
    We first check for all $y\in [k]$ that
\begin{equation}
  \label{equation: loss reduction step 1}
  \sum_{i \in [k]\,:\, i \ne y} h(v_y - v_i)
=
H(\pi \sigma_i v).
\end{equation}
Unpacking the definition, we have
$
  H(\pi \sigma_y v) 
  =
\sum_{i \in [k-1]} h([\pi \sigma_y v]_i).
$
Now, if $y = 1$, then $[\pi v]_i = v_1 - v_{i+1}$ for all $i \in [k-1]$. Hence, \cref{equation: loss reduction step 1} holds.
If $y > 1$. Then \cref{equation: loss reduction step 1} follows from the expression for $[\pi \sigma_y v]_i$ computed in \cref{equation: commuting relations step 1}. Thus, we have proven \cref{equation: loss reduction step 1} for all $y \in [k]$.
To conclude, we have
    \begin{align}
      [L(v)]_y &= \sum_{i \in [k]\,:\, i \ne y} h(v_y - v_i)
      \\
                &= 
                H(\pi \sigma_y v)
      \\
                &=
                H(\rho_y\pi v)
      \label{equation: loss reduction application of reciprocity}
      \\
                &=
                [\LL(\pi v)]_y
    \end{align}
    where in \cref{equation: loss reduction application of reciprocity}, we applied \cref{lemma: rho pi sigma relation}.  
  \end{proof}

  \begin{lemma}
    \label{lemma: L bayes risk is optimized over integers}
    Let $n \in [k-1]$.
    If $p \in \Delta^k_\downarrow$, then
    \[
      \underline L^n(p) = \min_{v \in \mathcal{C}_{\mathbb{Z}}\,:\, v_n = 0} \langle p, L(v) \rangle.
    \]
  \end{lemma}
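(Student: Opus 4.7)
The claim is an equality, and I would prove both inequalities separately. For $\underline L^n(p) \le \min_{v \in \mathcal{C}_{\mathbb{Z}},\, v_n = 0} \langle p, L(v)\rangle$, I would note that any $v \in \mathcal{C}_{\mathbb{Z}}$ with $v_n = 0$ is monotone non-increasing with $v_1 = \cdots = v_n = 0 = \max v$, so $|\argmax v| \ge n$; hence $v$ lies in the feasible region defining $\underline L^n(p)$ and the inequality follows.

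For the reverse inequality, the first step is to restrict the infimum in $\underline L^n(p)$ to monotone non-increasing vectors with $v_1 = 0$, that is, to $v \in \mathcal{M}$ with $v_n = 0$. Given any $v$ with $|\argmax v| \ge n$, let $\sigma \in \Perm{k}$ sort $v$ in non-increasing order. Combining \cref{corollary: L is Sigma equivariant}, \cref{lemma: L is order reversing}, and \cref{lemma: bubble sort} (using $p \in \Delta^k_\downarrow$) gives $\langle p, L(\sigma v)\rangle \le \langle p, L(v)\rangle$. Since $\sigma v$ is non-increasing with $|\argmax \sigma v| \ge n$, its top $n$ entries coincide, and \cref{lemma: L is translation invariant} lets me shift so that $v_1 = 0$, placing $v$ in $\mathcal{M}$ with $v_n = 0$.

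Next I would show the infimum over $\mathcal{M}$ with $v_n = 0$ is attained at an integer vector. Using \cref{lemma: reduced and full hinge are equivalent} to rewrite the objective as a sum of hinge terms and introducing slack variables $\xi_{yi}$, the minimization becomes a linear program with constraints $\xi_{yi} \ge 0$, $\xi_{yi} + v_y - v_i \ge 1$ for $i \ne y$, plus $v_i - v_{i+1} \ge 0$ and $v_1 = v_n = 0$. The coefficients on the $v$-variables have a network-matrix form (each row has at most two non-zero entries of opposite sign on $v$), so the constraint matrix is totally unimodular; consequently an integer optimum $v^* \in \mathcal{M}_{\mathbb{Z}}$ with $v^*_n = 0$ exists.

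Finally I would descend from $\mathcal{M}_{\mathbb{Z}}$ to $\mathcal{C}_{\mathbb{Z}}$. Writing $v^* = -Tc$ via \cref{lemma: cake layering representation} and defining $s_i = \mathbb{I}\{c_i \ge 1\}$, \cref{lemma: margin tightening} yields $\langle p, L(v^*)\rangle \ge \langle p, L(-Ts)\rangle$. The constraint $v^*_n = 0$ gives $\sum_{j=1}^n c_j = 0$, hence $c_2 = \cdots = c_n = 0$, so $s_2 = \cdots = s_n = 0$ and $[-Ts]_n = 0$. If $s \ne 0$ then $-Ts \in \mathcal{C}_{\mathbb{Z}}$ by \cref{lemma: cake layering representation}, completing the argument; otherwise $c = 0$, so $\underline L^n(p) = \langle p, L(0)\rangle = k - 1$, and I would check that the explicit vector $(0,\ldots,0,-1) \in \mathcal{C}_{\mathbb{Z}}$ (whose $n$-th coordinate vanishes since $n \le k-1$) achieves objective $k - 2 + k p_k \le k - 1$ (using $p_k \le 1/k$ for $p \in \Delta^k_\downarrow$), matching the bound. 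The main obstacle is the total unimodularity step, requiring a careful LP formulation and verification of the network structure; the remaining pieces are direct applications of the equivariance, translation-invariance, and margin-tightening lemmas already established.
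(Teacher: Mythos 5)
Your proposal is essentially correct and follows the same strategy as the paper: reduce to a normalized class of vectors, formulate the inner risk minimization as a linear program, invoke total unimodularity to extract an integral optimum, then use the cake-layering/margin-tightening lemmas to pass from $\mathcal{M}_{\mathbb{Z}}$ to $\mathcal{C}_{\mathbb{Z}}$, handling the degenerate $s=0$ case with the vector $(0,\ldots,0,-1)$.

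The one genuine difference of route is in the LP formulation. The paper applies the map $\pi$ to eliminate the translation degree of freedom, working with $z \in \mathbb{R}^{k-1}_+$ and the reduced loss $\LL$, and it imposes no monotonicity in the LP -- instead it sorts the resulting optimum $v^*$ post-hoc via \cref{lemma: bubble sort} and \cref{lemma: L is order reversing}. You instead stay in $v$-space, pre-sort using those same lemmas, and add the explicit monotonicity constraints $v_i - v_{i+1} \ge 0$ to the LP. Both approaches yield a totally unimodular constraint matrix, but the argument for yours needs a little more care than the stated one-sentence justification: the hinge rows $\xi_{yi} + v_y - v_i \ge 1$ have two $+1$ entries (one on $\xi$, one on $v_y$), so Hoffman's row criterion does not apply directly to the full matrix. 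The clean way to see TUM is in stages: the $v$-submatrix (hinge rows plus monotonicity rows) has one $+1$ and one $-1$ per row and is hence TUM; the $\xi$-columns each contain a single $+1$ in the hinge block and a single $+1$ in the $\xi \ge 0$ block, but if you first append the $\xi$-columns to the $v$-submatrix (unit columns, preserving TUM) and then append the $\xi \ge 0$ and $v = 0$ rows (unit rows, preserving TUM), you recover total unimodularity. You should state this decomposition explicitly rather than appealing to a generic ``network-matrix form.'' One small further simplification: since you keep the $v$-variables, the objective $\langle p, L(v)\rangle = \sum_y p_y \sum_{i\ne y} h(v_y - v_i)$ is already a sum of hinge terms, so \cref{lemma: reduced and full hinge are equivalent} is not actually needed in your version.
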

\begin{proof}
  Define
  \begin{equation*}
    \mathcal{N}^n = \{v \in \mathbb{R}^{k}: v_1 = \cdots = v_n  =0, \, v_i \le 0,\, \forall i \in [k]\}.
  \end{equation*}
  We first claim that
  \begin{equation}
    \label{equation: negativity}
  \underline L^n(p) = \inf_{v \in \mathcal{N}^n} \langle p, L(v) \rangle.
  \end{equation} 
  Since $\mathcal{N}^n \subseteq \{ v \in \mathbb{R}^k : |\argmax v| \ge n\}$, the ``$\le$'' part of \cref{equation: negativity} is obvious.
  For the ``$\ge$'' part, let $v \in \mathbb{R}^{k}$ be such that $|\argmax v| \ge n$.
  Then $w = v - \mathbf{1}^k\max_{i \in [k]} v_i$ is such that $w \in \mathcal{N}^n$. Furthermore, by \cref{lemma: L is translation invariant}, we have $\langle p, L(v) \rangle = \langle p ,L(w) \rangle$. Thus, we have proven the claim.
  
   Next, observe that if $v \in \mathcal{N}^n$, then
   \[
     [\pi v]_i
     =
     v_1 - v_{i+1}
     \begin{cases}
       =0 &: i \le n-1\\
       \ge 0 &: i \ge n.
     \end{cases}
  \]
  Therefore, we have
  \[
    \pi(\mathcal{N}^n)
    =
    \{ z \in \mathbb{R}^{k-1}:
    z \ge 0,\, z_i = 0, \forall i \in [n-1]\}
  \]
  where $[0] = \emptyset$.
Introducing the change of variable $z = \pi v \in \mathbb{R}^{k-1}$, we have
  \begin{align}
    \inf_{v \in \mathcal{N}^n} \langle p, L(v) \rangle
   &=
   \inf_{v \in \mathcal{N}^n} \langle p, \LL(\pi v) \rangle
    \quad
    \because
    \mbox{\cref{lemma: reduced and full hinge are equivalent}}
    \\
   &=
   \inf_{z \in \pi(\mathcal{N})} \langle p, \LL(z) \rangle\\
   &=
   \inf_{\substack{z \in \mathbb{R}^{k-1}: z \ge 0\\ z_i = 0,\, \forall i \in [n-1]}} \langle p, \LL(z) \rangle
   \label{equation: reduced LP}
  \end{align}
  
  Below, let $\mathbf{1} := \mathbf{1}^{k-1}$.
Unwinding the definition, we have 
  \[
    \langle p, \LL(z) \rangle = \sum_{i \in [k]} p_i H(\rho_i z) = \sum_{i\in [k]} p_i \mathbf{1}' \left[ \mathbf{1} - \rho_{i}z\right]_+.
  \]
  Using slack variables $\xi_i \ge \left[ \mathbf{1} - \rho_{i}z\right]_+$,  we can rewrite \cref{equation: reduced LP} as the following linear program:
  \begin{align}
 \min_{ z \in \mathbb{R}^{k-1}}\min_{(\xi_1,\dots, \xi_k) \, : \,\xi_i \in \mathbb{R}^{k-1} } \quad &\sum_{i} p_i \mathbf{1}' \xi_i
    \label{equation: WW hinge bayes risk LP}
    \\
s.t.                                        \quad                                &\xi_i \ge \mathbf{1} - \rho_{i}z\\
                                                                                 &\xi_i \ge 0, \quad \forall i \in [k] \\
                                                                                 &z \ge 0,\\
                                                                                 & z_i = 0,\, \forall i \in [n-1].
  \end{align}
  By \citet[Corollary 3.2]{bertsimas1997introduction}, for a linear programming minimization problem over a nonempty polyhedron, one of the following must be true: 1) the optimal cost is $-\infty$ or 2) a feasible minimum exists.
  Since \cref{equation: WW hinge bayes risk LP} is nonnegative and the feasible region is nonempty, a feasible minimum exists.
  Let \[
    R = \begin{bmatrix}
      \rho_1 \\ \rho_2 \\\vdots \\ \rho_k
    \end{bmatrix}
    \in \mathbb{R}^{k(k-1) \times (k-1)},
    \quad
    X = \begin{bmatrix}
      \xi_1\\ \xi_2 \\\vdots \\ \xi_k
    \end{bmatrix}
    \in \mathbb{R}^{k(k-1)},
    \quad
p\otimes \mathbf{1}
= 
\begin{bmatrix}
  p_1 \mathbf{1}\\
  p_2 \mathbf{1}\\
  \vdots\\
  p_k \mathbf{1}
\end{bmatrix}
\in\mathbb{R}^{k(k-1)}.
  \]

  We claim that
  \begin{equation}
    \label{equation: reduced LP WTS}
    \underline L^n(p) = \min_{z \in \mathbb{R}^{k-1}_+ : z_i = 0\,\forall i \in [n-1]} \langle p, \LL(z)\rangle.
  \end{equation}
  We first consider the case when $n = 1$ where we have $\underline L^1 = \underline L$.
  In this case, the linear program \cref{equation: WW hinge bayes risk LP} can be rewritten as
  \begin{align*}
    \underline{L}(p)
    = \min_{z \in \mathbb{R}^{k-1}}\min_{X \in \mathbb{R}^{k(k-1)}} \quad &(p\otimes \mathbf{1})' X
    \\
    s.t. \quad & X + R z \ge \mathbf{1}
    \\
               &X \ge 0
               \\
               & z \ge 0.
  \end{align*}
  For a positive integer $m$, let $I_m$ denote the $m\times m$ identity matrix.
  Thus,
  \begin{align}
    \min_{z \in \mathbb{R}^{k-1},\,X \in \mathbb{R}^{k(k-1)}} \quad &
    (p\otimes \mathbf{1})'X
    \label{equation: WW hinge bayes risk LP 2}
    \\
                    s.t. \quad  &
                    \underbrace{
                    \begin{bmatrix}
                      R & I_{k(k-1)} \\
                      I_{k-1}& 0 \\
                      0 & I_{k(k-1)}
                    \end{bmatrix}
                  }_{=: A}
                    \begin{bmatrix}
                      z\\
                      X
                    \end{bmatrix}
                    \ge 
                    \begin{bmatrix}
                      \mathbf{1}\\
                      0\\
                      0
                    \end{bmatrix}.
    \label{equation: WW hinge bayes risk LP 3}
  \end{align}
    We prove that $A$ is totally unimodular (TUM).
    The matrix $R$ has the property that every row has at most one $1$ and at most one $-1$, with all other entries being zeros.
    Hence, $R$ is TUM by the Hoffman's sufficient condition \citet{lawler2001combinatorial}.
    Thus, (horizontally) concatenating $R$ with an identity matrix, i.e.,
    $
    R_0 := 
      \begin{bmatrix}
        R & I_{k(k-1)}
      \end{bmatrix}
    $
    results in another TUM matrix $R_0$. Finally, $A$ is the (vertical) concatenation of $R_0$ with another identity matrix, i.e., $A = \begin{bmatrix}
      R_0 \\
      I_{k(k-1)}
    \end{bmatrix}
    $. Hence, $A$ is also TUM.

    By a well-known result in combinatorial optimization \citet{lawler2001combinatorial}, there exists an integral solution $(X^*,z^*)$ to \cref{equation: WW hinge bayes risk LP 2}.
    In particular, $z^* \in \mathbb{Z}^{k-1}_+$.
    Thus, we have proven that
    \[
      \underline{L}(p) = \langle p, \LL(z^*)\rangle = \min_{z \in \mathbb{Z}^{k-1}_+} \langle p, \LL(z) \rangle.
    \]
    This proves \cref{equation: reduced LP WTS} for the case when $n= 1$. For $n > 1$, we define the matrix $J \in \mathbb{R}^{(n-1) \times (k-1)}$ to be the first $n-1$ rows of the $(k-1)$-by-$(k-1)$ identity matrix. In other words, for $i \in [n-1]$ and $j \in [k-1]$,
    \[
      J_{ij} = \begin{cases}
        1 &: i =j \\
        0 &: i \ne j
      \end{cases}.
    \]
    Thus, we have
  \begin{align*}
      \underline L^n(p)
      =
    \min_{z \in \mathbb{R}^{k-1},\,X \in \mathbb{R}^{k(k-1)}} \quad &
    (p\otimes \mathbf{1})'X
    \\
                    s.t. \quad  &
                    \underbrace{
                    \begin{bmatrix}
                      R & I_{k(k-1)} \\
                      I_{k-1}& 0 \\
                      0 & I_{k(k-1)} \\
                      -J & 0
                    \end{bmatrix}
                  }_{=: B}
                    \begin{bmatrix}
                      z\\
                      X
                    \end{bmatrix}
                    \ge 
                    \begin{bmatrix}
                      \mathbf{1}\\
                      0\\
                      0\\
                      0
                    \end{bmatrix}.
  \end{align*}
  The matrix $B$ is formed by duplicating rows of $A$ and multiplying the duplicated row by $-1$.
  Thus, $B$ is also TUM.
  This proves \cref{equation: reduced LP WTS}.

  Below, let $z^*$ be a solution to \cref{equation: reduced LP WTS}. Define $v^* = \begin{pmatrix}
      0\\
      -z^*
    \end{pmatrix}$.
    Furthermore, $\pi(v^*) = z^*$ and so
    \begin{align*}
      \underline{L}^n(p) 
      &= \langle p, \LL(z^*)\rangle
      \\
      &= \langle p, \LL(\pi(v^*))\rangle
      \\
      &= \langle p, L(v^*)\rangle.
    \end{align*}
    Pick $\sigma \in \Perm{k}$ such that $\sigma v^* \in \mathbb{R}^k_\downarrow$.
    First we note that $L(\sigma v^*) \in \mathbb{R}^k_\uparrow$ by \cref{lemma: L is order reversing}.
    Next, by \cref{corollary: L is Sigma equivariant}, $L(\sigma v^*) = \sigma L(v^*)$.
    Hence, by \cref{lemma: bubble sort} 
    \[
      \langle p, L(v^*)\rangle \ge 
      \langle p, \sigma L(v^*)\rangle
      =
      \langle p, L(\sigma v^*)\rangle
    \]
    which implies that $\sigma v^*$ is optimal.
    Also, we observe that $\sigma v^* \in \mathcal{M}_{\mathbb{Z}}$. 
    By \cref{lemma: cake layering representation}, we can write $\sigma v^* = -Tc$ for some $c \in \mathbb{Z}^k_+$. 
    Note that since $z_1^* = \cdots = z_{n-1}^* = 0$, the vector $v^*$ has at least $n$ entries equal to $0$.
    Since $v^* \le 0$, we must have that $v_1 = \cdots = v_n^* = 0$.
    Thus, $c_1 = \cdots c_n = 0$ as well.
    Let $s \in \{0,1\}^k$ be as defined in \cref{lemma: margin tightening}. Then we have
    \[
      \underline{L}^{n}(p)
      \ge
      \langle p, L(\sigma v^*)\rangle
      =
      \langle p, L(-Tc) \rangle
      \ge
      \langle p, L(-Ts) \rangle.
    \]
    Hence, we have $\underline{L}(p) =  \langle p, L(-Ts) \rangle.$
    Since $s_i = \mathbb{I}\{c_i \ge 1\}$, we have $s_1 = \cdots = s_n = 0$ which implies that 
$[-Ts]_1 = \cdots = [-Ts]_n = 0$.
Consider the case when there exists some $i \in \{n+1,\dots,k \}$ such that $s_i = 1$, then we have $-Ts \in \mathcal{C}_{\mathbb{Z}}$ which completes the proof of \cref{lemma: L bayes risk is optimized over integers}.
Now, consider the case where there does not exists such $i$. Then we must have $s = 0$ and also $-Ts = 0$. Therefore, we have $\underline{L}^n(p) = \langle p, L(0)\rangle$.
Define $\tilde v \in \mathbb{R}^k$ entrywise by
    \[
      [\tilde v]_i = \begin{cases}
        0 &: i \ne k\\
        -1 &: i = k
      \end{cases}
    \]
    Noting that $k \in \argmin_{i \in [k]} p_i$ by the assumption that $p \in \Delta^k_\downarrow$.
    By \cref{lemma: the trivial ordered partition} below, we get that
    $\langle p, L(\tilde v) \rangle \le \langle p, L(0) \rangle$ which implies that $\langle p, L(\tilde v)\rangle= \underline{L}^n(v)$. Clearly, $\tilde v \in \mathcal{C}_{\mathbb{Z}}$ and $\tilde v_n = 0$, which implies that $\tilde v$ is feasible for the optimization in \cref{lemma: L bayes risk is optimized over integers}.
  \end{proof}

  \begin{lemma}
    \label{lemma: the trivial ordered partition}
    Let $p \in \Delta^k$ and $i^* \in \argmin_{i \in [k]} p_i$. 
    Consider the vector $\tilde v \in \mathbb{R}^k$ defined by
    \[
      [\tilde v]_i = \begin{cases}
        0 &: i \ne i^*\\
        -1 &: i = i^*
      \end{cases}
    \]
    Then 
    \begin{enumerate}
      \item $p_{i^*} \le \frac{1}{k}$
      \item $p_i = \frac{1}{k}$ for all $i$ if and only if $p_{i^*} = \frac{1}{k}$
      \item $\langle p, L(0)\rangle
        \ge \langle p, L(\tilde v)\rangle$
        with equality if and only if $p_{i^*} = \frac{1}{k}$.
    \end{enumerate}
  \end{lemma}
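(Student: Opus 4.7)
My plan is to dispatch the three claims in order, with the bulk of the work being a direct computation of $\langle p, L(0)\rangle$ and $\langle p, L(\tilde v)\rangle$ via the definition of the WW-hinge loss.

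For part (1), the minimum entry of a probability vector satisfies $kp_{i^*} \le \sum_{i \in [k]} p_i = 1$, so $p_{i^*} \le 1/k$. For part (2), one direction is trivial; for the converse, if $p_{i^*} = 1/k$ then every coordinate $p_i \ge p_{i^*} = 1/k$ by minimality, and since these sum to $1$ each must equal $1/k$ exactly.

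For part (3), I compute the two inner products directly. Using $h(0) = 1$, one gets $[L(0)]_y = (k-1)h(0) = k-1$ for every $y$, so $\langle p, L(0)\rangle = k-1$. For $L(\tilde v)$, I split into the two cases $y \ne i^*$ and $y = i^*$. When $y \ne i^*$, the summation $\sum_{i\ne y} h(0 - \tilde v_i)$ picks up the term $h(1) = 0$ from index $i^*$ and $h(0) = 1$ from the other $k-2$ indices, giving $[L(\tilde v)]_y = k-2$. When $y = i^*$, every one of the $k-1$ summands is $h(-1) = 2$, giving $[L(\tilde v)]_{i^*} = 2(k-1)$. Assembling,
\begin{equation*}
\langle p, L(\tilde v)\rangle = (k-2)(1 - p_{i^*}) + 2(k-1)p_{i^*} = (k-2) + k\,p_{i^*}.
\end{equation*}
Subtracting yields $\langle p, L(0)\rangle - \langle p, L(\tilde v)\rangle = 1 - k\,p_{i^*}$, which by part (1) is nonnegative, with equality exactly when $p_{i^*} = 1/k$; combined with part (2) this is equivalent to $p$ being uniform on $[k]$.

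There is no real obstacle here beyond bookkeeping. The only mildly subtle point is remembering that $h$ is evaluated at both a positive and a negative argument in the two cases, which is why the index $i^*$ contributes $0$ to most coordinates of $L(\tilde v)$ but boosts $[L(\tilde v)]_{i^*}$ to $2(k-1)$; the cleanest presentation just tabulates $[L(\tilde v)]_y$ for the two cases and then collects the sum weighted by $p_y$.
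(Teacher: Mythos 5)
Your proof is correct and follows essentially the same approach as the paper: both derive $p_{i^*}\le 1/k$ from the sum constraint, compute $\langle p, L(0)\rangle = k-1$ and $\langle p, L(\tilde v)\rangle = (k-2)+kp_{i^*}$ by the same case split on $y=i^*$ versus $y\ne i^*$, and read off the equality condition.
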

  \begin{proof}
    If $p_{i^*} > \frac{1}{k}$, then we would have $\sum_{i} p_i \ge k p_{i^*} > 1$, a contradiction.
    This proves that $p_{i^*} \le \frac{1}{k}$.
    For the second item, the ``only if'' direction is obvious. For the ``if'' direction, note that if $p_i > \frac{1}{k}$ for any $i$, then we again obtain $\sum_i p_i > 1$, a contradiction.
    For the third item, first observe that
\[
  [L(0)]_i
  =
  \sum_{j \in [k]: j \ne i}
  h(0)
  =
  k-1.
\]
Thus,
$L(0) = (k-1)\mathbf{1}^k$ and 
$
\langle p, L(0) \rangle
= k-1.
$
Next, we only $L(\tilde v)$. For $i \ne {i^*}$, we have
\[[L(\tilde v)]_i = \sum_{j\in [k] : j \ne i}
h(\tilde v_i - \tilde v_j)
=
  h(1)+
  \sum_{j\in [k] : j \ne i, j \ne {i^*}}
h(0)
=
k-2.
\]
When $i = {i^*}$, we have
\[
  [L(\tilde v)]_{i^*} =
  \sum_{j\in [k] : j \ne {i^*}}
h(\tilde v_{i^*} - \tilde v_j)
=
  \sum_{j\in [k] : j \ne {i^*}}
h(-1)
=2(k-1)
=
k-2 +k.
\]
From this, we deduce that
\[
  \langle p, L(\tilde v)\rangle
  =
  k-2 + kp_{{i^*}}.
\]
Therefore, we have  $p_{i^*} \le \frac{1}{k}$ and so
\[
  \langle p, L(\tilde v)\rangle
  =
  k-2 + kp_{{i^*}}
  \le k-2 + 1 = k-1 = \langle p, L(0) \rangle.
\]
Note if if $p_{i^*} < \frac{1}{k}$, then the inequality above is strict.
  \end{proof}

  \subsection{Proof of \cref{main-theorem: L bayes risk is optimized over OP}}
  \begin{proof}[Proof of \cref{main-theorem: L bayes risk is optimized over OP}]
    Recall that 
    $
  \underline L(p) = \min_{v \in \mathbb{R}^{k}} \langle p, L(v) \rangle
    $.
    Since $\mathbb{R}^{k} \supseteq \SkCZ$, we immediately have
      $\underline L(p) \le \min_{v \in \SkCZ} \langle p, L(v) \rangle$.
      Below, we focus on the other inequality.

    Pick $\sigma \in \Perm{k}$ such that $\sigma p \in \Delta^k_\downarrow$.
    By \cref{lemma: L bayes risk is optimized over integers} where $n=1$, we have
    \[
\underline{L}(\sigma p)
=
\min_{v \in \mathcal{C}_{\mathbb{Z}}}
\langle \sigma p, L(v)\rangle.
    \]
    Now, by \cref{corollary: L is Sigma equivariant}, we have
    \[
\langle \sigma p, L(v)\rangle
=
\langle p, \sigma' L(v)\rangle
=
\langle p,  L(\sigma'v)\rangle.
    \]
    Thus,
    \begin{align*}
      \underline{L}(p) &= 
\underline{L}(\sigma p)
\quad \because  \mbox{\cref{lemma: L bayes risk is Sigma invariant}}
\\
&=
\min_{v \in \mathcal{C}_{\mathbb{Z}}}
\langle p,  L(\sigma'v)\rangle
\\
&=
\min_{v \in \sigma'\mathcal{C}_{\mathbb{Z}}}
\langle p,  L(v)\rangle
\quad \because \mbox{change of variables}
\\
&\ge
\min_{v \in \SkCZ}
\langle p,  L(v)\rangle
    \end{align*}
    where for the last equality, we used the fact that $\sigma'\mathcal{C}_{\mathbb{Z}} \subseteq \SkCZ$.
  \end{proof}

\begin{lemma}
    \label{lemma: DL formula}
  Let $s \in \{0,1\}^{k}$ be such that $s_1 = 0$. Then
  \begin{equation}
    \label{equation: difference of WW loss formula}
    [DL(-Ts)]_y
    =
    \begin{cases}
      \min\{i \in [k]: s_i = 1\} - 2 &: y = 1\\
      \min\{i \in [k]: s_i = 1,\, i > y\} - 1 &: s_y = 1,\,y > 1\\
      0 &: s_y = 0,\,y > 1\\
    \end{cases}
  \end{equation}
\end{lemma}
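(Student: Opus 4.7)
The plan is a direct computation exploiting the recurrence $[DL(v)]_1 = [L(v)]_1$ and $[DL(v)]_y = [L(v)]_y - [L(v)]_{y-1}$ for $y > 1$, which follows from the bidiagonal structure of $D$. Setting $v := -Ts$, the structural fact driving everything is $v_i = -\sum_{j \le i} s_j$: the vector $v$ is a nonincreasing sequence of nonpositive integers with $v_1 = 0$ and consecutive differences $v_{i-1} - v_i = s_i \in \{0,1\}$. Because $v$ is integer valued with steps of size at most one, every hinge argument $v_a - v_b$ that arises lies in either the constant-zero region or the strictly linear region of $h$, never straddling the kink at $1$; this is the reason the formula comes out so cleanly.

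For $y = 1$, one writes $[L(v)]_1 = \sum_{i=2}^k h(-v_i)$ and splits on $m := \min\{i : s_i = 1\}$ (with convention $m = k+1$ if this set is empty): for $2 \le i < m$ we have $v_i = 0$ so $h(-v_i) = 1$, while for $i \ge m$ we have $v_i \le -1$ so $h(-v_i) = 0$. This yields $m - 2$. For $y > 1$ with $s_y = 0$, the values $v_y$ and $v_{y-1}$ coincide, so the two sums $[L(v)]_y$ and $[L(v)]_{y-1}$ agree on all shared indices, and the two remaining cross-terms are both $h(0)$; hence the difference is zero.

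The substantive case is $y > 1$ with $s_y = 1$, where $v_{y-1} = v_y + 1$. I would split $[L(v)]_y - [L(v)]_{y-1}$ into three index ranges. For $i < y - 1$ we have $v_i \ge v_{y-1}$, placing both hinge arguments in the strictly linear regime, and each term contributes $v_{y-1} - v_y = 1$, totaling $y - 2$. The cross-terms at $i = y$ paired with $i = y-1$ contribute $h(v_y - v_{y-1}) - h(v_{y-1} - v_y) = h(-1) - h(1) = 2$. For $i > y$, $v_i \le v_y$ forces $h(v_{y-1} - v_i) = 0$, while $h(v_y - v_i)$ equals $1$ precisely when $v_i = v_y$ and $0$ otherwise; this count equals $m_y - y - 1$ where $m_y := \min\{i > y : s_i = 1\}$ (again with convention $m_y = k+1$ when this set is empty). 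Summing gives $(y - 2) + 2 + (m_y - y - 1) = m_y - 1$.

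The main obstacle is purely bookkeeping: one must carefully verify that the hinge function never lands on its kink, which is exactly where the integer-step property of $v$ earns its keep, and one must treat the boundary case where no $1$-entry of $s$ lies to the right of $y$ via a consistent $\min \emptyset = k+1$ convention. No deeper ideas are required beyond careful case analysis driven by the structure of $v$.
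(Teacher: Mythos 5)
Your proof is correct and follows essentially the same route as the paper: rewrite $[DL(-Ts)]_y$ as $[L(-Ts)]_y - [L(-Ts)]_{y-1}$, use the integer, monotone, unit-step structure of $-Ts$ to place every hinge argument cleanly on one side of the kink, and split the $y>1$, $s_y=1$ case into the ranges $i<y-1$, the cross-terms $i\in\{y-1,y\}$, and $i>y$, obtaining $y-2$, $2$, and $\min\{j>y:s_j=1\}-y-1$ respectively. The only cosmetic difference is that you make explicit the $\min\emptyset=k+1$ convention for the boundary cases, which the paper leaves implicit.
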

\begin{proof}
  By the definition of $T$, we have
  \begin{equation}
    \label{equation: Ts formula}
    [Ts]_j = \sum_{i=1}^j s_i.
  \end{equation}
  First, consider the case when $y = 1$. Then by \cref{equation: Ts formula} we have $[-Ts]_1 = 0$. Furthermore,
  \begin{align*}
    [DL(-Ts)]_1
    &=
    [L(-Ts)]_1
    \\
    &=
    \sum_{i \in [k]: i \ne 1}
    h([-Ts]_1 - [-Ts]_i)
    \\
    &=
    \sum_{i \in [k]: i \ne 1}
    h( [Ts]_i)
    \end{align*}
    Note that by \cref{equation: Ts formula}, we have $[Ts]_i \ge 1$ if $i \ge \min \{j: s_j = 1\}$ and $[Ts]_i = 0$ otherwise.
    Hence, we get
    \begin{align*}
      [DL(-Ts)]_1
      &= 
      \sum_{i \in [k]: 1 < i < \min \{j: s_j = 1\}}
    h( [Ts]_i)
    \\
      &= 
      \sum_{i \in [k]: 1 < i < \min \{j: s_j = 1\}}
      1
      \\
      &=
      \min \{j \in [k]: s_j = 1\}
-2.
    \end{align*}
    This proves the first case of \cref{equation: difference of WW loss formula}.
    Below, let $y > 1$. We have
    \begin{align}
      &[DL(-Ts)]_y
      \\
      &=
      \sum_{i \in [k]: i \ne y}
      h([-Ts]_y - [-Ts]_i)
      -
      \sum_{i \in [k]: i \ne y-1}
      h([-Ts]_{y-1} - [-Ts]_{i})
      \\
      &=
      \sum_{i \in [k]: i \ne y}
      h([Ts]_i - [Ts]_y)
      -
      \sum_{i \in [k]: i \ne y-1}
      h([Ts]_i - [Ts]_{y-1})
      \\
      &=
      \sum_{i \in [k]: i < y-1}
      h([Ts]_i - [Ts]_y)
      -
      h([Ts]_i - [Ts]_{y-1})
      \label{equation: difference of WW loss formula 1}
      \\
      &\quad+
      h([Ts]_{y-1} - [Ts]_y)
      -
      h([Ts]_{y} - [Ts]_{y-1})
      \label{equation: difference of WW loss formula 2}
      \\
      &\quad+
      \sum_{i \in [k]: i > y}
      h([Ts]_i - [Ts]_y)
      -
      h([Ts]_i - [Ts]_{y-1})
      \label{equation: difference of WW loss formula 3}
    \end{align}
    If $s_y = 0$, then $[Ts]_y = [Ts]_{y-1}$ and so we have $[DL(-Ts)]_y = 0$.
    This proves the last case of \cref{equation: difference of WW loss formula}.

    Below, assume the setting of the second case, i.e., $y > 1$ and $s_y = 1$. 
    We first evaluate \cref{equation: difference of WW loss formula 1}.  
    Since $i < y-1$, we have
   \[ 
      ([Ts]_i - [Ts]_y)
      -
      ([Ts]_i - [Ts]_{y-1})
      =
      [Ts]_{y-1} - [Ts]_y
      =-1
    \]
    and
    \[
      ([Ts]_i - [Ts]_{y-1}) \le 0.
    \]

    The two preceding facts together imply that
    \[
      h([Ts]_i - [Ts]_y)
      -
      h([Ts]_i - [Ts]_{y-1})
      =1
    \]
    and so
    \[
      \sum_{i \in [k]: i < y-1}
      h([Ts]_i - [Ts]_y)
      -
      h([Ts]_i - [Ts]_{y-1})
      = y-2.
    \]
    Next, we evaluate \cref{equation: difference of WW loss formula 2}
    \[
      h([Ts]_{y-1} - [Ts]_y)
      -
      h([Ts]_{y} - [Ts]_{y-1})
      =
      h(-1)
      -
      h(1)
      =2.
    \]
    Finally, we evaluate \cref{equation: difference of WW loss formula 3}. Since $i > y$, we have
    \[
      [Ts]_i - [Ts]_y
      =
      \sum_{j= y+1}^i s_i.
    \]
    From this, we see that
    \[
      [Ts]_i - [Ts]_y
      \begin{cases}
        = 0 &: i < \min\{j \in [k]: j > y,\, s_j = 1\}\\
        \ge 1 &: \mbox{otherwise}.
      \end{cases}
    \]
    Hence,
    \[
      h([Ts]_i - [Ts]_y)
      \begin{cases}
        = 1 &: i < \min\{j \in [k]: j > y,\, s_j = 1\}\\
        = 0 &: \mbox{otherwise}.
      \end{cases}
    \]
    On the other hand,
    $
    [Ts]_i - [Ts]_{y-1}
    =
    \sum_{j=y}^i s_i \ge s_y = 1
    $
    and so $h(
    [Ts]_i - [Ts]_{y-1}) = 0$.
    Therefore,
    \begin{align*}
      &\sum_{i \in [k]: i > y}
      h([Ts]_i - [Ts]_y)
      -
      h([Ts]_i - [Ts]_{y-1})
      \\
      &=
\min\{j \in [k]: j > y,\, s_j = 1\}
-y
-1
    \end{align*}
    Putting it all together, we have
    \begin{align*}
      [DL(-Ts)]_y
      &=
      y-2 + 2
      +
\min\{j \in [k]: j > y,\, s_j = 1\}
-y
-1
\\
      &=
\min\{j \in [k]: j > y,\, s_j = 1\}
-1.
    \end{align*}
  
\end{proof}

\subsection{Proof of \cref{main-theorem: L and ell inner risk identity}}
  \begin{proof}
    [Proof of \cref{main-theorem: L and ell inner risk identity}]

Let $\mathbf{S} = (S_1,\dots, S_l) \in \mathcal{OP}_k$.
Pick $\sigma$ such that $\sigma \varphi(\mathbf{S})$ is monotonic non-increasing.  
Hence, we have 
  \[
    \sigma \varphi(\mathbf{S})
    =
    -
    [
    \underbrace{0, \dots, 0}_{|S_1|\mbox{-times}},
    \underbrace{1, \dots, 1}_{|S_2|\mbox{-times}},
    \dots,
    \underbrace{l-1, \dots, l-1}_{|S_l|\mbox{-times}}
    ].
  \]
  For each $i=1,\dots, l-1$, define $c_i(\mathbf{S}) = |S_1| + \cdots + |S_i|$.
  
  Note that 
  \begin{align}
    S_1\cup \cdots \cup S_i
    &=
    \{j \in[k]: 0 \ge [\varphi(\mathbf{S})]_j \ge -(i-1)
    \}
    \\
    &=
    \{\sigma(1),\sigma(2),\dots, \sigma(c_i(\mathbf{S}))\}.
    \label{equation: label set coherence}
  \end{align}
  Also, note that by definition, $c_i(\mathbf{S})$ is precisely the index in $[k-1]$ such that 
  \[
    \begin{cases}
      [\sigma \varphi(\mathbf{S})]_{c_i(\mathbf{S})} = -(i-1)
    \\
    [\sigma \varphi(\mathbf{S})]_{c_i(\mathbf{S})+1} = -i.
    \end{cases}
  \]
  Motivated by this, we define $\zeta(\mathbf{S}) \in \{0,1\}^{k}$ where
  \[
    [\zeta(\mathbf{S})]_j = 
    \begin{cases}
      1 &: j = c_i(\mathbf{S})+1 \mbox{ for some } i = 1,\dots, l-1\\
      0 &: \mbox{otherwise}.
    \end{cases}
  \]
  Then
  \begin{equation}
    \label{equation: towards writing pi v S as a T of s}
    \sigma \varphi(\mathbf{S}) = -T \zeta(\mathbf{S}).
  \end{equation}
  Next, note that
  \begin{align}
    \langle p, L(\varphi(\mathbf{S}))\rangle
    &=
    \langle p, L(\sigma' \sigma \varphi(\mathbf{S}))\rangle
  \label{equation: loss embedding theorem 1}
    \\
    &=
    \langle p, \sigma' L(\sigma \varphi(\mathbf{S}))\rangle
  \label{equation: loss embedding theorem 2}
    \\
    &=
    \langle \sigma p, L(\sigma \varphi(\mathbf{S}))\rangle
  \label{equation: loss embedding theorem 3}
    \\
    &=
    \langle T' (\sigma p), DL(\sigma \varphi(\mathbf{S}))\rangle
  \label{equation: loss embedding theorem 4}
    \\
    &=
    \langle T' (\sigma p), DL(-T \zeta(\mathbf{S}))\rangle
  \label{equation: loss embedding theorem 5}
  \end{align}
  where 
  \cref{equation: loss embedding theorem 1} is by $\sigma' = \sigma^{-1}$,
  \cref{equation: loss embedding theorem 2} is by \cref{corollary: L is Sigma equivariant},
  \cref{equation: loss embedding theorem 3} is a basic property of the dot product,
  \cref{equation: loss embedding theorem 4} is by \cref{lemma: T is the inverse of D},
  \cref{equation: loss embedding theorem 5} is by \cref{equation: towards writing pi v S as a T of s}.

  We first calculate $DL(-T \zeta(\mathbf{S}))$ by applying \cref{equation: difference of WW loss formula} from
    \cref{lemma: DL formula}
    to $s = \zeta(\mathbf{S})$. For the case $y=1$ of \cref{equation: difference of WW loss formula}, we have
  \begin{align*}
    [DL(-T \zeta(\mathbf{S}))]_1 
    &= 
    \min\{j \in [k-1]: [\zeta(\mathbf{S})]_j = 1\} - 2
    \\
    &= 
    c_1(\mathbf{S})+1 - 2
    \\
    &= |S_1| - 1.
  \end{align*}
  By definition, for $y > 1$, we note that $[\zeta(\mathbf{S})]_{y} = 1$ if and only if $y = c_i(\mathbf{S})+1$ for some $i\in \{ 1,\dots, l-1\}$.
  Thus,
  \begin{align*}
    [DL(-T \zeta(\mathbf{S}))]_{c_i(\mathbf{S})+1}
    &=
    \min\{j \in [k] : [\zeta(\mathbf{S})]_{j} = 1,\, j > c_i(\mathbf{S})+1\}-1
    \\
    &=
    (c_{i+1}(\mathbf{S}) + 1) -1
    =
    c_{i+1}(\mathbf{S}).
  \end{align*}
  We summarize the above as follows:
  \begin{equation*}
    [DL(-T \zeta(\mathbf{S}))]_{y}
    =
    \begin{cases}
      |S_1| - 1 &: y = 1\\
      c_{i+1}(\mathbf{S}) &: y = c_i(\mathbf{S}) + 1 \mbox{ for some $i \in [l-1]$}\\
      0 &: \mbox{otherwise}.
    \end{cases}
  \end{equation*}
  Next, we calculate $T' (\sigma p)$.
  Note that
  \begin{align*}
    [T'(\sigma p)]_y
    &= p_{\sigma(y)} + p_{\sigma(y+1)} + \cdots + p_{\sigma(k)}\\
    &= 1 - \left(p_{\sigma(1)} + \cdots + p_{\sigma(y-1)}\right).
  \end{align*}
  In particular, $[T'(\sigma p)]_1 = 1$.  Hence,
  \begin{align*}
    &\langle p, L(\varphi(\mathbf{S}))\rangle \\
    &=\langle T' (\sigma p), DL(-T \zeta(\mathbf{S}))\rangle\\
    &=
    [T'(\sigma p)]_1 (|S_1| - 1)
    \\
    &\qquad +
    \sum_{i = 1}^{l-1}
    \left([T'(\sigma p)]_{c_i(\mathbf{S})+1}\right)
    c_{i+1}(\mathbf{S})
    \\
    &= |S_1| -1
    \\
    &\qquad +
    \sum_{i = 1}^{l-1}
    \left(1 - \left(p_{\sigma(1)} + \cdots + p_{\sigma(c_i(\mathbf{S}))}\right)\right)
    c_{i+1}(\mathbf{S}).
  \end{align*}
  Recall from \cref{equation: label set coherence}
  \[
    \{\sigma(1), \sigma(2),\dots, \sigma(c_i(\mathbf{S}))\} = S_1 \cup \cdots \cup S_i.
  \]
  Hence, 
  \[
    \left(1 - \left(p_{\sigma(1)} + \cdots + p_{\sigma(c_i(\mathbf{S}))}\right)\right)
    =
    \Pr_{Y \sim p}(Y \not \in S_1 \cup \dots \cup S_i).
  \]
  Putting it all together, we have
  \begin{align*}
    \langle p, L(\varphi(\mathbf{S}))\rangle &=
|S_1|-1
+\sum_{i=1}^{l_{\mathbf{S}}-1}
|S_1 \cup \dots \cup S_{i+1}|
\Pr_{Y \sim p} (Y \not \in S_1 \cup \cdots \cup S_i)\\
                                             &=
                                             \mathbb{E}_{Y\sim p} \left[[\ell(\mathbf{S})]_Y\right]\\
                                             &=
                                             \langle p, \ell(\mathbf{S})\rangle
  \end{align*}

  This concludes the proof of 
  \cref{main-theorem: L and ell inner risk identity}.
  \end{proof}

  \section{Maximally informative losses}
  \label{section: maximally informative losses}
  We first introduce some basic properties of hyperplane arrangements that will be needed later.
  \begin{definition}
    \label{definition: hyperplane}
    A \emph{hyperplane} in $\mathbb{R}^d$ is a subset $H \subseteq \mathbb{R}^d$ of the form
    $
      H = \{ v \in \mathbb{R}^k : b- \langle a, v\rangle = 0\}
    $
    for some (column) vector $a \in \mathbb{R}^k$ and $b \in \mathbb{R}$.
  \end{definition}

  \begin{definition}
    \label{definition: hyperplane arrangement}
    Define the following:
    \begin{enumerate}
      \item 
    A \emph{hyperplane arrangement} is a set of hyperplanes $\{H_n\}_{n \in I}$ indexed by a finite set $I$.
    Let the hyperplanes be written as 
    $H_n = \{v \in \mathbb{R}^k : b^{(n)} - \langle a^{(n)}, v \rangle =0 \}$
    for each $n \in I$.
  \item 
    Define $\signvect : \mathbb{R}^k \to \{-1, 0 , 1\}^I$ entrywise by
    \[
      [
      \signvect(v)
      ]_n
      =
      \mathrm{sgn}\left(b^{(n)} - \langle a^{(n)}, v \rangle \right),
      \quad
      \mbox{ where $\forall t\in \mathbb{R},\, \mathrm{sgn}(t) = \begin{cases}
          1 &: t > 0\\
          0 &: t = 0\\
          -1 &: t < 0
      \end{cases}$ .}
    \]
  \item 
    Define the set $\Theta := \signvect(\mathbb{R}^k) \subseteq  \{-1,0,1\}^I$.
  \item 
    For each $\theta \in \Theta$, define
    \[
      \tilde P_\theta :=
      \signvect^{-1}(\theta) =
      \{ v \in \mathbb{R}^k: \signvect(v) = \theta\}
      \quad \mbox{ and }
      P_\theta := \mathrm{cl}(\tilde P_\theta)
    \]
    where $\mathrm{cl}$ denotes the closure of a set in $\mathbb{R}^k$ with the Euclidean topology.
    \end{enumerate}
  \end{definition}

  \begin{definition}
    An \emph{affine subspace} of $\mathbb{R}^k$ is a set of the form $W + v$ where $W \subseteq \mathbb{R}^k$ is a linear subspace and $v \in \mathbb{R}^k$ is a vector.
    Let $C$ be a convex set.
    The \emph{affine hull} $\mathrm{Aff}(C)$ of $C$ is defined as the smallest affine subspace containing $C$.
    The \emph{relative interior} of $C$, denoted $\mathrm{relint}(C)$, is defined as the subset of $v \in C$ such that for all $\epsilon >0$ sufficiently small, we have that
    \[
      \mathrm{Aff}(C) \cap \{ w  \in \mathbb{R}^k: \|w - v \| < \epsilon\} \subseteq 
      C.
    \]
    In other words, $\mathrm{relint}(C)$ is an open subset of $\mathrm{Aff}(C)$.
    Here $\|\bullet\|$ is the Euclidean 2-norm on $\mathbb{R}^k$.
  \end{definition}

  The following result is ``folklore''. Since we cannot find its proof, we prove it here.
  \begin{lemma}
    \label{lemma: folklore result about hyperplane arrangement}
    Let $\{H_n\}_{n \in I}$ be an arrangement of hyperplanes. Adopt all notations from 
    \cref{definition: hyperplane arrangement}. The following are true:
    \begin{enumerate}
      \item \label{lemma-part: folklore open cell}
    For all $\theta \in \Theta$,
    $
      \tilde P_\theta
      =
      \left\{ v \in \mathbb{R}^k:
        \begin{cases}
          \theta_n (b^{(n)}- \langle a^{(n)}, v \rangle)  > 0 &: \theta_{n} \ne 0 \\
          b^{(n)}- \langle a^{(n)}, v \rangle  = 0 &: \theta_{n} = 0\\
        \end{cases}
        ,\, \forall n \in I
      \right\}
    $,
      \item \label{lemma-part: folklore closed cell}
    For all $\theta \in \Theta$,
    $
      P_\theta
      =
      \left\{ v \in \mathbb{R}^k:
        \begin{cases}
          \theta_n (b^{(n)}- \langle a^{(n)}, v \rangle)  \ge 0 &: \theta_{n} \ne 0 \\
          b^{(n)}- \langle a^{(n)}, v \rangle  = 0 &: \theta_{n} = 0\\
        \end{cases}
        ,\, \forall n \in I
      \right\}
    $,
      \item \label{lemma-part: folklore open cell is relint of closed cell}
    For all $\theta \in \Theta$,
      $\mathrm{relint}(P_\theta) = \tilde P_\theta$,
      \item \label{lemma-part: folklore disjoint union of open cell is Rk}
      $\bigsqcup_{\theta \in \Theta} \mathrm{relint}(P_\theta) = \mathbb{R}^{k}$ as a disjoint union.
    \end{enumerate}
  \end{lemma}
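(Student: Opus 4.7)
The plan is to prove the four parts in order, treating (1) and (4) as essentially definitional and focusing the real work on (2) and (3). For part (1), I would simply translate $[\signvect(v)]_n = \theta_n$ case-by-case on $\theta_n \in \{-1, 0, +1\}$: the two sign cases collapse into the single condition $\theta_n(b^{(n)} - \langle a^{(n)}, v\rangle) > 0$, and the zero case gives the stated equality. Then $\tilde P_\theta = \signvect^{-1}(\theta)$ is by definition the intersection of these conditions over $n \in I$.

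For part (2), let $Q_\theta$ denote the claimed set. The inclusion $P_\theta \subseteq Q_\theta$ is immediate because $Q_\theta$ is an intersection of closed half-spaces and hyperplanes, hence closed, and contains $\tilde P_\theta$ by part (1). For the reverse inclusion, fix any $u \in \tilde P_\theta$ (which exists because $\theta \in \Theta = \signvect(\mathbb{R}^k)$) and, given $v \in Q_\theta$, set $v_t := (1-t)v + tu$. I claim $v_t \in \tilde P_\theta$ for every $t \in (0, 1]$: the equalities for indices with $\theta_n = 0$ are preserved under convex combinations, and for $n$ with $\theta_n \neq 0$,
\[
\theta_n(b^{(n)} - \langle a^{(n)}, v_t\rangle) = (1-t)\,\theta_n(b^{(n)} - \langle a^{(n)}, v\rangle) + t\,\theta_n(b^{(n)} - \langle a^{(n)}, u\rangle) > 0,
\]
since the first term is $\ge 0$ and the second is strictly positive. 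Letting $t \to 0^+$ gives $v \in \mathrm{cl}(\tilde P_\theta) = P_\theta$.

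Part (3) is the main obstacle. I introduce the affine subspace $W_\theta := \bigcap_{n:\,\theta_n = 0} H_n$ cut out by the equality constraints. By part (1), $\tilde P_\theta$ is the intersection of $W_\theta$ with finitely many open half-spaces, so it is relatively open in $W_\theta$; since it is nonempty, it contains a relatively open ball in $W_\theta$, and combined with $P_\theta \subseteq W_\theta$ this forces $\mathrm{Aff}(P_\theta) = W_\theta$. The inclusion $\tilde P_\theta \subseteq \mathrm{relint}(P_\theta)$ is then immediate: any $v \in \tilde P_\theta$ has a small ball in $W_\theta$ on which all strict inequalities from part (1) still hold, hence contained in $P_\theta$. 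The reverse inclusion is the delicate step. Suppose for contradiction that $v \in \mathrm{relint}(P_\theta)$ yet there exists $n$ with $\theta_n \neq 0$ and $b^{(n)} - \langle a^{(n)}, v\rangle = 0$. Pick any $u \in \tilde P_\theta$ and set $v_\epsilon := (1+\epsilon)v - \epsilon u$, which lies in $W_\theta$ and satisfies $v_\epsilon \to v$ as $\epsilon \to 0^+$. A direct calculation yields
\[
\theta_n(b^{(n)} - \langle a^{(n)}, v_\epsilon\rangle) = (1+\epsilon)\,\theta_n(b^{(n)} - \langle a^{(n)}, v\rangle) - \epsilon\,\theta_n(b^{(n)} - \langle a^{(n)}, u\rangle) = -\epsilon\,\theta_n(b^{(n)} - \langle a^{(n)}, u\rangle) < 0,
\]
so $v_\epsilon \notin P_\theta$ by part (2), contradicting $v \in \mathrm{relint}(P_\theta)$.

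Finally, part (4) is a two-line corollary: $\signvect : \mathbb{R}^k \to \Theta$ is a well-defined function, so $\{\tilde P_\theta\}_{\theta \in \Theta} = \{\signvect^{-1}(\theta)\}_{\theta \in \Theta}$ partitions $\mathbb{R}^k$, and substituting $\tilde P_\theta = \mathrm{relint}(P_\theta)$ from part (3) gives the disjoint union. The only conceptual subtlety is the affine-hull computation in part (3); the key observation there is that $\tilde P_\theta$ is both nonempty and relatively open in $W_\theta$, which precludes any smaller affine subspace from containing $P_\theta$.
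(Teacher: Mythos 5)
Your proof is correct, and parts (1), (2), and (4) are essentially identical to the paper's argument (same line-segment limit for (2), same definitional unwinding for (1) and (4)). The genuine divergence is in the direction $\mathrm{relint}(P_\theta) \subseteq \tilde P_\theta$ of part (3). The paper simply cites the standard convex-analysis fact that a convex set and its closure share the same relative interior (quoting Ben-Tal and Nemirovski), so $\mathrm{relint}(P_\theta) = \mathrm{relint}(\mathrm{cl}(\tilde P_\theta)) = \mathrm{relint}(\tilde P_\theta) \subseteq \tilde P_\theta$. You instead give a self-contained perturbation argument: first you pin down $\mathrm{Aff}(P_\theta) = W_\theta$ exactly (the paper only proves the inclusion $W_\theta \supseteq \mathrm{Aff}(P_\theta)$, which suffices for the other direction), then for any point in $\mathrm{relint}(P_\theta)$ that violates a strict inequality you move slightly along the line through a point of $\tilde P_\theta$ in the \emph{negative} direction $v_\epsilon = (1+\epsilon)v - \epsilon u$ and observe that the affine constraint value turns strictly negative, exiting $P_\theta$ while remaining in $\mathrm{Aff}(P_\theta)$ and arbitrarily close to $v$, contradicting the relative-interior hypothesis. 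Both routes are valid; the paper's is shorter and leans on a textbook reference, while yours is elementary and fully self-contained, at the cost of an extra step (establishing the exact affine hull). The forward inclusion $\tilde P_\theta \subseteq \mathrm{relint}(P_\theta)$ is the same in both.
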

  \begin{proof}
    First, we note that \cref{lemma-part: folklore open cell} follows directly from definition.

    For \cref{lemma-part: folklore closed cell}, let $Q_\theta$ denote the set on the right hand side of the identity.
    We want to show that $P_\theta = Q_\theta$.
    Recall that $P_\theta = \mathrm{cl}(\tilde P_\theta)$ is by definition the smallest closed set containing $\tilde P_\theta$.
    Clearly, $Q_\theta$ is a closed set.
    Furthermore, by \cref{lemma-part: folklore open cell}, we have $\tilde P_\theta \subseteq Q_\theta$.
    Thus, we have the $P_\theta \subseteq Q_\theta$.

    Conversely, let $v \in Q_\theta$ and $w \in \tilde P_\theta$.
    Then by \cref{lemma-part: folklore open cell}, we have that $(1-\lambda) w + \lambda v \in \tilde P_\theta$ for all $\lambda \in [0,1)$.
    Now, $\lim_{\lambda \to 1} (1-\lambda) w + \lambda v = v$.
    Since $\mathrm{cl}(\tilde P_\theta)$ is closed, it contains all limits. Hence $v \in \mathrm{cl}(\tilde P_\theta) = P_\theta$, as desired.
    This proves that $Q_\theta \subseteq P_\theta$, as desired.

    Next, we prove \cref{lemma-part: folklore open cell is relint of closed cell}. From the first paragraph of \citet[Section 1.1.6.D]{ben2020optimization}, we have
    $
    \mathrm{relint}(\tilde P_\theta) \subseteq \tilde P_\theta \subseteq \mathrm{cl}(\tilde P_\theta)$.
    By \citet[Theorem 1.1.1 (iv)]{ben2020optimization}, we have
    $\mathrm{relint}(\tilde P_\theta) = 
    \mathrm{relint}(\mathrm{cl}(\tilde P_\theta))
    $. By definition $P_\theta = \mathrm{cl}(\tilde P_\theta)$. Putting it all together, we get
    $\mathrm{relint}(P_\theta) \subseteq \tilde P_\theta$.

    For the other inclusion, let $v \in \tilde P_\theta$.
    Let 
    \[W = \{ v \in \mathbb{R}^k : b^{(n)} - \langle a^{(n)}, v\rangle = 0,\, \forall n \in I \mbox{ such that } \theta_n = 0\}.\]
    Then by \cref{lemma-part: folklore closed cell}, $W$ is an affine subspace containing $P_\theta$.
    Thus, by definition of the affine hull, we have $W \supseteq \mathrm{Aff}(P_\theta)$.
    Furthermore, by \cref{lemma-part: folklore open cell}, we have, for all $\epsilon > 0$ sufficiently small, that
    $
      W \cap \{ w  \in \mathbb{R}^k: \|w - v \| < \epsilon\} \subseteq 
      P_\theta$. 
      This proves that $v \in \mathrm{relint}(P_\theta)$ and so $\tilde P_\theta \subseteq \mathrm{relint}(P_\theta)$.

      Finally, we prove \cref{lemma-part: folklore disjoint union of open cell is Rk}
    \begin{equation*}
    \bigsqcup_{\theta \in \Theta} \mathrm{relint}(P_\theta)
    =
    \bigsqcup_{\theta \in \Theta} \tilde P_\theta
    =
    \bigsqcup_{\theta \in \signvect(\mathbb{R}^k)} \signvect^{-1}(\theta)
    =
    \mathbb{R}^k,
  \end{equation*}
  where for the middle equality, we recall that $\Theta = \signvect(\mathbb{R}^k)$ by definition.
  \end{proof}

  \subsection{Semiordered hyperplane arrangement}
  
  Below, we apply the results of \cref{lemma: folklore result about hyperplane arrangement} to the ``semiorder hyperplane arrangement'', which is closely connected to the WW-hinge loss.

\begin{definition}
  \label{definition: semiorder HA}
  The \emph{semiorder hyperplane arrangement} is the hyperplane arrangement in $\mathbb{R}^k$ indexed by the finite set
  $I = \{(i,j) \in [k] \times [k]: i \ne j\}$ with
  the $(i,j)$-th hyperplane given by $H_{(i,j)} = \{v \in \mathbb{R}^k : 1- (v_i - v_j) = 0\}$.
\end{definition}

  \begin{lemma}
    \label{lemma: semiorder HA key property}
    Let $L: \mathbb{R}^k \to \mathbb{R}^k_+$ be the WW-hinge loss and $\SkCZ$ be as in \cref{main-definition: SkCZ}.
    Let $\{H_{(i,j)}\}_{(i,j) \in I}$ be the semiorder hyperplane arrangement as in \cref{definition: semiorder HA}.
    Adopt all notations from \cref{definition: hyperplane arrangement}.
    Then we have for all $\theta \in \Theta$ that
  \begin{enumerate}
  \item the restriction of $L$ to $P_\theta$, denoted $L|_{P_\theta}$, is an affine function,
\item 
$P_\theta \cap \SkCZ$ is nonempty.
  \end{enumerate}
  \end{lemma}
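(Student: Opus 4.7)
Part 1 is immediate from the piecewise-linear structure of $L$. On $P_\theta$, the sign of each $1 - (v_y - v_i)$ is fixed by the coordinate $\theta_{(y,i)}$, so the hinge summand $h(v_y - v_i) = \max\{0, 1 - (v_y - v_i)\}$ coincides with an affine function on $P_\theta$: identically $1 - (v_y - v_i)$ when $\theta_{(y,i)} \ge 0$, identically $0$ when $\theta_{(y,i)} \le 0$, and the two formulas agree on the boundary case $\theta_{(y,i)} = 0$. A finite sum of affine functions is affine, so $L|_{P_\theta}$ is affine in each coordinate.

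For Part 2 I plan a three-step construction: \emph{normalize}, \emph{round up}, then \emph{compress}. Since $\theta \in \Theta = \signvect(\mathbb{R}^k)$, the open cell $\tilde P_\theta$ is nonempty; pick $v_0 \in \tilde P_\theta$. Every defining hyperplane of the arrangement depends only on the differences $v_i - v_j$, so $P_\theta$ is translation invariant along $\mathbf{1}^k$, and translating we may arrange $\max_i v_{0,i} = 0$ and $v_0 \le 0$. Next set $v^\dagger_i := \lceil v_{0,i} \rceil$, an integer vector with $v^\dagger \le 0$ and $\max v^\dagger = 0$. Using $\lceil a + n \rceil = \lceil a \rceil + n$ for $n \in \mathbb{Z}$, one checks that the three regimes $\theta_{(i,j)} \in \{-1, 0, 1\}$---which correspond respectively to $v_{0,i} - v_{0,j} \ge 1$, $= 1$, and $\le 1$---are each inherited by $v^\dagger$, so $v^\dagger \in P_\theta$.

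Finally, let $A := \{v^\dagger_i : i \in [k]\}$ and let $\phi : A \to \{0, -1, \ldots, -m\}$ with $m := |A|-1$ be the order-preserving bijection; set $v^\ddagger_i := \phi(v^\dagger_i)$. The map $\phi$ preserves the sign of every difference and sends integer differences equal to $\pm 1$ in $v^\dagger$ to $\pm 1$ in $v^\ddagger$, because any two entries of $v^\dagger$ differing by $1$ must occupy adjacent levels in $A$. Re-checking the three regimes yields $v^\ddagger \in P_\theta$. If $m \ge 1$, then $v^\ddagger$ attains every value in $\{0, -1, \ldots, -m\}$, so some permutation of it lies in $\mathcal{C}_\mathbb{Z}$ and thus $v^\ddagger \in \SkCZ$. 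The residual case $m = 0$ forces $v^\dagger = 0$, so $v_0 \in (-1, 0]^k$ and $\theta$ must be the all-$(+1)$ pattern, in which case $(0, \ldots, 0, -1)$ is an explicit element of $P_\theta \cap \SkCZ$. The main obstacle is verifying that rounding and compression both respect $P_\theta$ in all three regimes---especially that the equality constraints $\theta_{(i,j)} = 0$ are preserved exactly---and this is precisely where the integer-arithmetic identity for the ceiling function and the adjacency of occupied levels in $A$ are essential.
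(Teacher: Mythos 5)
Your Part 1 is the same argument as the paper's.

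Your Part 2 is correct and takes a genuinely different route. The paper establishes nonemptiness of $P_\theta \cap \SkCZ$ by intersecting $P_\theta$ with the hyperplane $H_0 = \{v : \sum_i v_i = 0\}$ to kill the $\mathbf{1}^k$-invariance, arguing $H_0 \cap P_\theta$ contains no line and therefore (via the LP theory of Bertsimas--Tsitsiklis) has a basic feasible solution $x$; the active constraints at $x$ are encoded as an undirected graph, shown to be a spanning tree, and path-connectivity on the tree is used to show that the value set $\{x_\beta - \max x : \beta \in [k]\}$ is precisely $\{-D,\dots,-1,0\}$. Your construction sidesteps all of this: take any $v_0$ in the open cell $\tilde P_\theta$, translate so $\max v_0 = 0$, apply the entrywise ceiling, then compress the occupied integer levels to consecutive ones $\{0,-1,\dots,-m\}$. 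The only facts needed are $\lceil a+n\rceil = \lceil a\rceil + n$ for $n \in \mathbb{Z}$, the elementary bounds $a \le \lceil a \rceil < a+1$, and the observation that two integers differing by exactly one are necessarily adjacent in the sorted value set, so compression preserves differences equal to $1$ exactly while preserving the sign of all other differences. Because all nonzero constraints of the closed cell $P_\theta$ are weak inequalities of the form $v_i - v_j \gtrless 1$ and the $\theta=0$ constraints are $v_i - v_j = 1$, each operation keeps you inside $P_\theta$. Your argument is shorter, self-contained (no LP/BFS machinery, no spanning-tree combinatorics), and makes the underlying geometry explicit: the integer vectors in $\SkCZ$ are reached directly by rounding toward the fixed maximal coordinate and then closing gaps. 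The paper's BFS argument is heavier but ties the statement to a more general LP-theoretic fact, which might be preferable if one wanted an algorithmic extraction of the point. One small wording issue: when you write that $\theta_{(i,j)} = -1, 0, 1$ correspond to $v_{0,i}-v_{0,j} \ge 1$, $=1$, $\le 1$, the strict inequalities $>1$ and $<1$ are what actually hold for $v_0 \in \tilde P_\theta$; the non-strict versions are what you need for $v^\dagger \in P_\theta$ and are indeed what your ceiling estimates deliver.
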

  \begin{proof}
    For the first item, fix some $i \in [k]$ and note that
    \[
      [L(v)]_i
      =
      \sum_{j \in [k] : j \ne i}
      \max\{0, 1- (v_i - v_j)\}.
    \]
    Fix $(i,j) \in I$ where $I$ is as in \cref{definition: semiorder HA}.
    Then by
    \cref{lemma: folklore result about hyperplane arrangement} item 2,
    for all $v \in P_\theta$, we have
    \[
      \max\{0, 1- (v_i - v_j)\} 
      =
      \begin{cases}
        1 - (v_i - v_j)  &: \theta_{(i,j)} = 1\\
        0 &: \mbox{otherwise.}
      \end{cases}
    \]
    In either case,  $
      \max\{0, 1- (v_i - v_j)\}$ is affine over $P_\theta$.

      Next, we prove the second item.
      Define
      $H_0 = \{v \in \mathbb{R}^k : \sum_{i \in [k]} v_i = 0\}$.
      Then $H_0 \cap P_\theta$ is nonempty for all $\theta \in \Theta$.
      To see this, first note that $P_\theta$ is nonempty by construction.
      Furthermore, if $v \in P_\theta$ then $v + c\mathbf{1}^k \in P_\theta$ as well for any $c \in \mathbb{R}$.
      Thus, $v+ (-(1/k)\sum_{i \in [k]} v_i ) \mathbf{1}^k \in H_0 \cap P_\theta$.

      \begin{lemma}
        $H_0 \cap P_\theta$ does not contain any line.
      \end{lemma}
      \begin{proof}
      Suppose that this is false, i.e., $\mathfrak{l} \subseteq H_0 \cap P_\theta$ where $\mathfrak{l} \subseteq \mathbb{R}^k$ is a line.
      In particular, $\mathfrak{l} \subseteq H_0$.
      This means that $\mathfrak{l} = \{cw : c \in \mathbb{R}\}$ where $w \in H_0$ is a nonzero vector.
      Thus, there exists $i \ne j$ such that $w_i > 0$ and $w_j < 0$.
      Recall from \cref{definition: hyperplane arrangement} that
      $[\signvect(cw)]_{(i,j)}
      =
      \mathrm{sgn}\left(1- c(w_i - w_j)\right)$. 
      Thus, as $c$ ranges over $\mathbb{R}$, we have that  $[\signvect(cw)]_{(i,j)}$ takes on all three values in $\{-1,0, 1\}$.
      However, by \cref{lemma: folklore result about hyperplane arrangement} item 2, 
      $[\signvect(cw)]_{(i,j)}$ can only take on at most two distinct values in $\{-1,0,1\}$.
      \end{proof}
      Before proceeding, we recall a definition:
      \begin{definition}
        \label{definition: polyhedron and basic feasible solution}
        A \emph{polyhedron} $P$ in $\mathbb{R}^k$ is a set of the form
        $P = \{ x \in \mathbb{R}^k: \langle a^{(n)}, x \rangle \le b^{(n)},\,\forall n \in [m]\}$ where 
        $m$ is a positive integer, $a^{(n)} \in \mathbb{R}^k$ and $b^{(n)} \in \mathbb{R}$ for all $n \in [m]$.
        For each $n \in [m]$, the tuple $(a^{(n)},b^{(n)})$ is called a \emph{constraint} of $P$.
        A point $x \in P$ is a \emph{basic feasible solution} (BFS) if there exists $n_1,\dots, n_k \in [m]$ such that 
        \begin{enumerate}
          \item 
            $\langle a^{(n_i)}, x\rangle = b^{(n_i)}$ for all $i \in [k]$, and 
          \item $\mathcal{A} := \{a^{(n_1)},\dots, a^{(n_k)}\}$ is a basis for $\mathbb{R}^k$.
        \end{enumerate}
      \end{definition}

      By \citet[Theorem 2.6]{bertsimas1997introduction}
      and \cite[Theorem 2.3]{bertsimas1997introduction},
      a polyhedron which does not contain any line always have a BFS.
      Earlier, we proved that $H_0 \cap P_\theta$ does not contain any line.
      Hence, $H_0 \cap P_\theta$ contains a BFS.
      For the remainder of this proof, let $x \in \mathbb{R}^k$ be such a BFS 
      with associated basis $\mathcal{A} = \{a^{(n_1)},\dots, a^{(n_k)}\}$ as in \cref{definition: polyhedron and basic feasible solution}.


      Let $e^{i} \in \mathbb{R}^k$ be the $i$-th elementary basis vector in $\mathbb{R}^k$.
      By definition of $P_\theta \cap H_0$, we have 
      \[
        \mathcal{A} 
        \subseteq 
        \{e^i - e^j : (i,j) \in I \} 
        \cup \{\mathbf{1}^k\}
      \]
      where we recall that $I$ is as in \cref{definition: semiorder HA}.
      Observe that $\langle \mathbf{1}^k, e^{i} - e^{j}\rangle = 0$ for all $(i,j) \in I$.
      Hence, we must have that $\mathbf{1}^k \in \mathcal{A}$, since  otherwise $\mathcal{A}$ cannot span $\mathbb{R}^k$.
      This implies that we necessarily have $\mathbf{1}^k \in \mathcal{A}$. Without the loss of generality, let $a^{(n_k)} = \mathbf{1}^k$.
      Since $\mathcal{A}$ is linearly independent, we have 
      \[
        \mathcal{B}:=
        \mathcal{A} \setminus \{a^{(n_k)}\} = 
        \{a^{(n_1)},\dots, a^{(n_{k-1})}\}
        \subseteq 
        \{e^i - e^j : (i,j) \in I \}.
      \]

      Now, for each $i \in [k-1]$, let
      $(t_i,h_i) \in I$ be such that
      $a^{(n_i)} = e^{t_i} - e^{h_i}$.
      By the definition of $P_\theta$, we have
      $\langle a^{(n_i)}, x \rangle = x_{t_i} - x_{h_i} =\pm 1$.
      Note that this implies that $x$ is not a scalar multiple of $\mathbf{1}^k$.

      Next, consider the directed graph $G$ with vertices $V(G) = [k]$ and edges are $E(G) = 
      \{(t_i,h_i): i \in [k-1]\}$.
      Since $\mathcal{B}$ is linearly independent, we observe that if $(t_i, h_i) \in E(G)$, then $(h_i,t_i) \not \in E(G)$.
      Let $G^u$ be the undirected graph obtained from $G$ by forgetting the edge orientations. 
      By the preceding observation, we have $|E(G^u)| = k-1$.
      An undirected edge is denoted as $\{\alpha,\beta\} \in E(G^u)$.

      Observe that if $\{\alpha,\beta\} \in E(G^u)$, then $x_\alpha - x_\beta = \pm 1$.

      \begin{lemma}
        $G^u$ is a tree, i.e., a connected graph without cycles.
      \end{lemma}
      \begin{proof}
      Note that
      $G^u$ does not contain any cycles. To see this, note that if $G^u$ had a cycle, then $\mathcal{A}$ cannot be linearly independent.
      Thus, $G^u$ is a disjoint union of trees $\{T_1,\dots, T_f\}$ where $f$ is a positive integer.
      Since each $T_i$ is a tree, we have $|E(T_i)| = |V(T_i)| - 1$.
      On the other hand, we have
      \begin{align*}
        k-1 &= |E(G^u)| \\
            &= |E(T_1)| + \cdots + |E(T_f)|\\
            & = |V(T_i)| + \cdots + |V(T_f)| - f\\
            & = |V(G^u)| - f\\
            & = k - f
      \end{align*}
      which implies that $f = 1$. In other words, $G^u$ is a tree to begin with.
      \end{proof}

      Although we know that $G^u$ is a tree, we only need the fact that $G^u$ is connected.

      Let $\alpha,\beta \in V(G^u)$. A \emph{path} of length $l$ from $\alpha$ to $\beta$ is a sequence $\phi_1,\dots, \phi_l \in V(G^u)$ such that
      \begin{enumerate}
        \item $\phi_1 = \alpha$ and $\phi_l = \beta$
        \item $\{\phi_i, \phi_{i+1}\} \in E(G^u)$ for all $i \in [m-1]$.
      \end{enumerate}
      The fact that $G^u$ is connected implies that there exists a path between any two vertices $\alpha,\beta \in V(G^u)$.
      Define $\overline{x} := \max x$ and $\underline{x} := \min x$.
      \begin{lemma}
        \label{lemma: SHPA property 1}
        For all $\beta \in [k]$, we have $\overline{x} - x_\beta \in \mathbb{Z}$.
      \end{lemma}
      \begin{proof}
        Let $\alpha \in \argmax x$ and consider a path $\phi_1,\dots, \phi_l \in V(G^u)$ from $\alpha$ to $\beta$.
       Observe that 
      $
      x_\alpha - x_\beta 
      =
      \sum_{i \in [l-1]}
      x_{\phi_i} - x_{\phi_{i+1}}
      $. Since $\{\phi_i, \phi_{i+1}\} \in E(G^u)$, we have 
      $
      x_{\phi_i} - x_{\phi_{i+1}} = \pm 1$.
      This proves that $x_\alpha - x_\beta \in \mathbb{Z}$.
      \end{proof}
      Let $D := \overline{x} - \underline {x}$.
      Since $x_\beta \ge \underline x$, we have $0 \le \overline x - x_\beta \le D$.
      Apply \cref{lemma: SHPA property 1} with $\beta \in \argmin x$, we get 
      $\overline x-\underline x = D\in \mathbb{Z}$.
      In summarize, we have proven that
      \begin{equation}
        \label{equation: BFS is almost SkCZ}
        \{x_\beta - \overline x : \beta \in [k]\}
        \subseteq 
        \{-D, -D+1,\dots, -1,0\}.
      \end{equation}
      Below, we will show that the inclusion in \cref{equation: BFS is almost SkCZ} is in fact an equality.

      Next, let  $\overline{\varrho} \in \argmax x$ and $\underline{\varrho} \in \argmin x$.
      Let $\phi_1,\dots, \phi_l \in V(G^u)$ be a path between $\overline{\varrho}$ and $\underline{\varrho}$.       Note that by definition we have 
      \begin{enumerate}
        \item $x_{\phi_1} = \overline x$ and $x_{\phi_l} = \underline x$,
        \item $x_{\phi_i} - x_{\phi_{i+1}} =\pm 1$ for all $i \in [l-1]$.
      \end{enumerate}
Consider the sequence of numbers
      \[
        S:=
        ( \underbrace{x_{\phi_1} - \overline{x}}_{=-D},
        x_{\phi_2} -\overline{x},
        \dots,
        x_{\phi_{l-1}} - \overline{x},
        \underbrace{x_{\phi_l} - \overline{x}}_{=0}).
      \]
      Notice that the difference between consecutive entries of $S$ is $\pm 1$.
      Thus, the sequence $S$ takes on every value in $\{-D,-D+1,\dots, -1,0\}$ at least once.
      This proves that \cref{equation: BFS is almost SkCZ} holds with equality, i.e.,
      \begin{equation}
        \label{equation: BFS is SkCZ}
        \{x_\beta - \overline x : \beta \in [k]\}
        =
        \{-D, -D+1,\dots, -1,0\}.
      \end{equation}
      Now, let $\sigma \in \Perm{k}$ be the element such that $\sigma x$ is monotonic non-increasing.
      Earlier, we argued that $x$ is not a scalar multiple of $\mathbf{1}^k$.
      Thus, \cref{equation: BFS is SkCZ} implies that $\sigma x - \overline x\mathbf{1}^k \in \mathcal{C}_{\mathbb{Z}}$.
      Consequently, we have $x - \overline x \mathbf{1}^k \in \SkCZ$.
      Since $x \in P_\theta$, we have $x - \overline x \mathbf{1}^k \in P_\theta$ as well.
      This proves that $P_\theta \cap \SkCZ$ is nonempty, which concludes the proof of \cref{lemma: folklore result about hyperplane arrangement}.
  \end{proof}

  \subsection{Proof of \cref{main-proposition: maximal informative}}
  \begin{proof}[Proof of \cref{main-proposition: maximal informative}]
  Let $m = |\mathcal{OP}_k|$. Index the elements of $\mathcal{OP}_k$ by $[m]$, i.e.,
  \[
    \mathcal{OP}_k = \{\mathbf{S}^1,\dots, \mathbf{S}^m\}.
  \]
  For each $i \in [m]$, let $p^{(i)} \in \Delta^k$ be such that 
  $\{\mathbf{S}^i\} = \argmin_{\mathbf{S} \in \mathcal{OP}_k}
  \langle p, \ell(\mathbf{S})\rangle$. 
  The existence of such $p^{(i)}$s was confirmed by computer search for $k \in \{3,\dots, 7\}$.
  Equivalently, $\mathbf{S}^{i}$ is the unique element of $\mathcal{OP}_k$ such that
  \begin{equation}
    \label{equation: proof of max info embedding uniqueness}
    \langle p^{(i)}, \ell(\mathbf{S}^{i})\rangle = \underline{\ell}(p^{(i)})
    =
    \underline{L}(p^{(i)})
  \end{equation}
  where the second equality is by 
  \cref{main-lemma: L and ell bayes risk are equal}.

  Next, suppose $L$ embeds another discrete loss $\lambda: \mathcal{R} \to \mathbb{R}^k_+$ with embedding map $\chi: \mathcal{R} \to \mathbb{R}^k$.
  Our goal is to show that $|\mathcal{R}| \ge |\mathcal{OP}_k|$.
  To this end, let $\mathcal{R} = \{r^1,\dots, r^n\}$.
  Since $L$ embeds $\lambda$ via $\chi$, we have by definition that
  $\underline{L}(p) = \underline{\lambda}(p)
  = \min_{r \in \mathcal{R}} \langle p, L(\chi(r))\rangle$.
  In particular, for a fixed $i \in [m]$, there exists $\iota(i) \in [n]$ such that
  $\underline{L}(p^{(i)}) = 
  \langle p^{(i)}, L(\chi(r^{\iota(i)}))\rangle$.
  Note that this defines a mapping
  \begin{equation}
    \label{equation: proof of max info embedding iota mapping}
    \iota: [m] \to [n].
  \end{equation}
  Let $v^{(i)}:= \chi(r^{\iota(i)})$. Combined with \cref{equation: proof of max info embedding uniqueness}, we have
  \begin{equation}
    \label{equation: proof of max info embedding 1}
    \langle p^{(i)}
    ,
    L(v^{(i)})\rangle
    =
    \underline{L}(p^{(i)})
    =
    \underline{\ell}(p^{(i)}).
  \end{equation}
  Consider $\{P_\theta\}_{\theta \in \Theta}$ as in \cref{lemma: semiorder HA key property}.
  For each $v \in \mathbb{R}^k$, let $\theta(v) \in \Theta$ be the unique element such that $v \in \mathrm{relint}\left(P_{\theta(v)}\right)$.
  The existence and uniqueness of $\theta(v)$ is guaranteed by \cref{lemma: folklore result about hyperplane arrangement} \cref{lemma-part: folklore disjoint union of open cell is Rk}.
  
  By \cref{equation: proof of max info embedding 1}, we have $v^{(i)} \in \argmin_{v \in \mathbb{R}^k} \langle p^{(i)}, L(v)\rangle$.
  By \cref{lemma: semiorder HA key property}, the function $v \mapsto \langle p^{(i)}, L(v)\rangle$ is affine over the domain $P_{\theta(v^{(i)})}$.
  Furthermore, it is minimized at $v^{(i)} \in \mathrm{relint}(P_{\theta(v)})$.
  Thus, by \citet[Lemma 1.2.2]{ben2020optimization}, the function $v \mapsto \langle p^{(i)}, L(v)\rangle$ is constant over the domain $v \in P_{\theta(v^{(i)})}$.
  Since $v^{(i)} \in P_{\theta(v^{(i)})}$
 and $ 
    \langle p ^{(i)}, L(v^{(i)})\rangle
    =
    \underline{L}(p^{(i)})$ by \cref{equation: proof of max info embedding 1}, we have 
    \begin{equation} 
      \label{equation: proof of max info embedding L is locally constant}
    \langle p ^{(i)}, L(v)\rangle
    =
    \underline{L}(p^{(i)})
    , \, \forall v \in P_{\theta(v^{(i)})}
  \end{equation}
  Next, recall that 
  $P_{\theta} \cap \SkCZ$ is nonempty for all $\theta \in \Theta$.
  In particular, 
  $
    P_{\theta(v^{(i)})} \cap \SkCZ 
    $ is nonempty. 
    By \cref{main-theorem: bijection between OP representations}, we have $\SkCZ = \varphi(\mathcal{OP}_k)$.
    All elements of 
    $P_{\theta(v^{(i)})} \cap \SkCZ$ are of the form
    $\varphi(\mathbf{S})$ for some $\mathbf{S} \in \mathcal{OP}_k$.
    Fix such an $\mathbf{S}$ so that $\varphi(\mathbf{S}) \in P_{\theta(v^{(i)})} \cap \SkCZ$.
    Now,
    \[
      \langle p ^{(i)}, L(\varphi(\mathbf{S}))\rangle
      \overset{\mbox{\cref{equation: proof of max info embedding L is locally constant}}}
{ = }
    \underline{L}(p^{(i)})
  \overset{\mbox{\cref{equation: proof of max info embedding 1}}}
  { =}
    \underline{\ell}(p^{(i)}).
    \]
    Recall from right before \cref{equation: proof of max info embedding uniqueness}, we have that $\mathbf{S}^i$ is the unique element of $\mathcal{OP}_k$ such that
    $ \langle p ^{(i)}, L(\varphi(\mathbf{S}^{(i)}))\rangle = 
    \underline{\ell}(p^{(i)})$. This proves that $\mathbf{S} = \mathbf{S}^i$. Thus, we have shown that
  \begin{equation}
    \label{equation: maximally informative proof intersection}
    P_{\theta(v^{(i)})} \cap \SkCZ 
    =
    \{\varphi(\mathbf{S}^i)\}.
  \end{equation}
  Finally, we are now ready to prove that $n=|\mathcal{R}| \ge |\mathcal{OP}_k| = m$.
  It suffices to show that the mapping $\iota: [m] \to [n]$ defined at
    \cref{equation: proof of max info embedding iota mapping} is injective.
  Suppose that there exists distinct $i,j \in [m]$ such that $\iota(i) = \iota(j)$.
  Then
\begin{align*}
  & r^{\iota(i)} = r^{\iota(j)}
  \\ \implies &
  v^{(i)} = v^{(j)} \quad \because \mbox{definition of $v^{(i)} := \chi(r^{\iota(i)})$}
  \\ \implies &
  \theta(v^{(i)}) = \theta(v^{(j)})
  \\ \implies &
    P_{\theta(v^{(i)})} \cap \SkCZ = P_{\theta(v^{(j)})} \cap \SkCZ  
  \\ \implies &
  \{\varphi(\mathbf{S}^i)\} = \{\varphi(\mathbf{S}^j)\} \quad \because \mbox{\cref{equation: maximally informative proof intersection}}
  \\ \implies &
    \varphi(\mathbf{S}^i) = \varphi(\mathbf{S}^j)
  \\ \implies &
  \mathbf{S}^i = \mathbf{S}^j \quad \because \mbox{$\varphi$ is a bijection}
  \end{align*}
  which contradicts $i\ne j$. Thus, we have that $\iota : [m] \to [n]$ is injective which implies that $n \ge m$.
\end{proof}

\section{The argmax link}
\label{section: the argmax link}

\begin{definition}
  For $\sigma \in \Perm{k}$ and $\mathbf{S} = (S_1,\dots, S_l) \in \mathcal{OP}_k$, define $\sigma(\mathbf{S}) \in \mathcal{OP}_k$ by
  \[
    \sigma(\mathbf{S})
    =
    (\sigma(S_1),\dots, \sigma(S_l))
  \]
  where $\sigma(S_i) = \{ \sigma(j) : j \in S_i\}$ for each $i \in [l]$.
\end{definition}

\begin{lemma}
  \label{lemma: varphi is Sigma t-equivariant}
  For $\sigma \in \Perm{k}$ and $\mathbf{S} = (S_1,\dots, S_l) \in \mathcal{OP}_k$,
  we have
  \[
    \sigma' \varphi(\mathbf{S})
    = \varphi(\sigma(\mathbf{S})).
  \]
\end{lemma}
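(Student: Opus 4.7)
The plan is to unpack both sides entrywise using the definition of $\varphi$ and the permutation action on $\mathbb{R}^k$, and verify that they produce the same value in every coordinate. The only substantive observation needed is the set-theoretic equivalence $\sigma^{-1}(j) \in S_i \iff j \in \sigma(S_i)$, which is immediate from the definition of $\sigma(S_i) = \{\sigma(m) : m \in S_i\}$ and the fact that $\sigma$ is a bijection on $[k]$.

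More concretely, I would fix $j \in [k]$ and compute $[\sigma' \varphi(\mathbf{S})]_j$. Since $\sigma'$ is the transpose of a permutation matrix, it is the permutation matrix of $\sigma^{-1}$. By the convention $[\tau v]_j = v_{\tau(j)}$ from \cref{main-section: notations}, we get
\[
[\sigma' \varphi(\mathbf{S})]_j = [\varphi(\mathbf{S})]_{\sigma^{-1}(j)}.
\]
By \cref{main-definition: embedding map varphi}, there is a unique $i \in [l_{\mathbf{S}}]$ with $\sigma^{-1}(j) \in S_i$, and for this $i$ the value equals $-(i-1)$.

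On the other hand, $[\varphi(\sigma(\mathbf{S}))]_j = -(i'-1)$ where $i' \in [l_{\mathbf{S}}]$ is the unique index with $j \in \sigma(S_{i'})$. Because $\sigma$ is a bijection, $\sigma(S_1), \ldots, \sigma(S_{l_{\mathbf{S}}})$ is again an ordered partition of $[k]$, and $j \in \sigma(S_{i'})$ if and only if $\sigma^{-1}(j) \in S_{i'}$. Thus $i = i'$, and the two entries agree. Since $j$ was arbitrary, the two vectors are equal.

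I do not anticipate any obstacle: the statement is essentially a bookkeeping identity expressing equivariance of $\varphi$ with respect to the $\Perm{k}$-action (via $\sigma \mapsto \sigma'$ on $\mathbb{R}^k$ and via $\sigma \mapsto \sigma(\cdot)$ on $\mathcal{OP}_k$). The only potential pitfall is keeping track of the convention $[\tau v]_j = v_{\tau(j)}$, which forces the appearance of $\sigma^{-1}$ (hence $\sigma'$ rather than $\sigma$) on the left-hand side — this is why the lemma is phrased with $\sigma'$ and not $\sigma$.
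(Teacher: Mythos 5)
Your proof is correct and takes essentially the same approach as the paper's: both verify the identity entrywise, and both hinge on the same observation that $j \in \sigma(S_i) \iff \sigma^{-1}(j) \in S_i$, together with the convention $[\sigma' v]_j = v_{\sigma^{-1}(j)}$. The only cosmetic difference is that the paper performs the change of variable $m = \sigma^{-1}(j)$ and checks agreement at index $\sigma(m)$, whereas you work directly at index $j$.
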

\begin{proof}
  By definition, we have
  \[
    [\varphi(\sigma(\mathbf{S}))]_j
    =
    -(i-1),\, \forall j \in \sigma(S_i).
  \]
  Since $j \in \sigma(S_i) \iff \sigma^{-1}(j) \in S_i$, we have
  \[
    [\varphi(\sigma(\mathbf{S}))]_j
    =
    -(i-1),\, \forall j \in [k] : \sigma^{-1}(j) \in S_i.
  \]
  Introduce the change of variable $m = \sigma^{-1}(j)$, we have
  \[
    [\varphi(\sigma(\mathbf{S}))]_{\sigma(m)}
    =
    -(i-1),\, \forall m \in S_i.
  \]
  On the other hand, we have
  \[
    [\sigma' \varphi (\mathbf{S})]_{\sigma(m)}
    =
    [\varphi(S)]_{\sigma' \sigma(m)}
    =
    [\varphi(S)]_{m}
    =
    -(i-1), \, \forall m \in S_i.
  \]
  This proves that $\sigma' \varphi(\mathbf{S}) = \varphi(\sigma \mathbf{S})$.
\end{proof}

\begin{corollary}
  \label{lemma: ell is Sigma equivariant}
  For all $\mathbf{S} \in \mathcal{OP}_k$ and $\sigma \in \Perm{k}$, we have
  $\sigma \ell(\mathbf{S}) = \ell(\sigma' \mathbf{S})$.
\end{corollary}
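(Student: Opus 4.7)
The plan is to chain together three already-established facts: (i) the first embedding condition, which gives $L(\varphi(\mathbf{S})) = \ell(\mathbf{S})$ (established during the proof of \cref{main-theorem: embedding result only}); (ii) the $\Perm{k}$-equivariance of $L$, namely $L(\sigma v) = \sigma L(v)$ from \cref{corollary: L is Sigma equivariant}; and (iii) the equivariance of the embedding map, $\sigma' \varphi(\mathbf{S}) = \varphi(\sigma(\mathbf{S}))$, from \cref{lemma: varphi is Sigma t-equivariant}.

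First, I would start from $\ell(\mathbf{S}) = L(\varphi(\mathbf{S}))$ and multiply both sides on the left by the permutation matrix $\sigma$. Applying \cref{corollary: L is Sigma equivariant} then yields
\[
  \sigma \ell(\mathbf{S}) = \sigma L(\varphi(\mathbf{S})) = L(\sigma \varphi(\mathbf{S})).
\]
Next, I would invoke \cref{lemma: varphi is Sigma t-equivariant} with $\sigma$ replaced by $\sigma'$ (equivalently, $\sigma^{-1}$). Since $(\sigma')' = \sigma$, this gives $\sigma \varphi(\mathbf{S}) = \varphi(\sigma'(\mathbf{S}))$. Substituting into the previous display yields
\[
  \sigma \ell(\mathbf{S}) = L(\varphi(\sigma'(\mathbf{S}))) = \ell(\sigma'(\mathbf{S})),
\]
where the last equality is again the first embedding condition applied to the ordered partition $\sigma'(\mathbf{S})$.

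There is really no substantive obstacle here; the corollary is a formal consequence of the two equivariance statements combined with the identity $L \circ \varphi = \ell$. The only point requiring a little care is the direction of the permutation: \cref{lemma: varphi is Sigma t-equivariant} is stated with a transpose on the left-hand side, so one has to be attentive to replace $\sigma$ by $\sigma'$ (rather than applying the lemma directly) in order to convert $\sigma \varphi(\mathbf{S})$ into $\varphi(\sigma'(\mathbf{S}))$ and end up with $\ell(\sigma' \mathbf{S})$ on the right, matching the statement.
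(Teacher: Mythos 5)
Your proof is correct and uses the same three ingredients as the paper (Lemma~\ref{lemma: varphi is Sigma t-equivariant}, Corollary~\ref{corollary: L is Sigma equivariant}, and the identity $L\circ\varphi=\ell$), with one small stylistic difference: you work directly at the level of loss vectors via the first embedding condition $L(\varphi(\mathbf{S}))=\ell(\mathbf{S})$, whereas the paper works through inner products $\langle p,\cdot\rangle$ via Theorem~\ref{main-theorem: L and ell inner risk identity} and then invokes the fact that $\Delta^k$ spans $\mathbb{R}^k$ to conclude vector equality. Your route is marginally more direct since it skips the spanning step; your handling of the permutation direction, substituting $\sigma'$ for $\sigma$ in Lemma~\ref{lemma: varphi is Sigma t-equivariant} to obtain $\sigma\varphi(\mathbf{S})=\varphi(\sigma'(\mathbf{S}))$, is exactly what is needed and is correct.
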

\begin{proof}
  Since $\Delta^k$ spans $\mathbb{R}^k$, it suffices to check that $\langle p, \sigma \ell(\mathbf{S})
  \rangle
  =
  \langle p, \ell(\sigma ' \mathbf{S})\rangle$ for all $p \in \Delta^k$. 
  To this end, we have
  \begin{align*}
\langle p, \ell(\sigma ' \mathbf{S})\rangle
  &=
  \langle p, L(\varphi(\sigma ' \mathbf{S}))\rangle
  \quad \because \mbox{\cref{main-theorem: L and ell inner risk identity}}
  \\
  &=
  \langle p, L(\sigma \varphi( \mathbf{S}))\rangle
  \quad \because \mbox{\cref{lemma: varphi is Sigma t-equivariant}}
  \\
  &=
  \langle p, \sigma L(\varphi( \mathbf{S}))\rangle
  \quad \because \mbox{\cref{corollary: L is Sigma equivariant}}
  \\
  &=
  \langle \sigma' p, L(\varphi( \mathbf{S}))\rangle
  \\
  &=
  \langle \sigma' p, \ell( \mathbf{S})\rangle
  \quad \because \mbox{\cref{main-theorem: L and ell inner risk identity}}
  \\
  &=
  \langle p, \sigma \ell( \mathbf{S})\rangle
  \end{align*}
  as desired.
\end{proof}

For $p \in \Delta^k$, define
\begin{align}
  \gamma(p)
  &:=
  \argmin_{\mathbf{S} \in \mathcal{OP}_k} \langle p, \ell(\mathbf{S}) \rangle,\\
  \Gamma(p)
  &:=
  \argmin_{
  v \in \mathbb{R}^{k}}
  \langle p,
  L(v)
  \rangle.
\end{align}

\begin{lemma}
  \label{lemma: monotonic non-increasing p}
  Let $p \in \Delta^k_\downarrow$, $v \in \Gamma(p)$, and $\sigma$ be such that $\sigma v \in \mathbb{R}^k_\downarrow$.
  Then $\sigma p  = p$ and $\sigma v \in \Gamma(p)$.
\end{lemma}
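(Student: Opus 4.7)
The plan is to combine the rearrangement inequality of \cref{lemma: bubble sort} with the order-reversing property of $L$ (\cref{lemma: L is order reversing}) and its $\Perm{k}$-equivariance (\cref{corollary: L is Sigma equivariant}). Once $\sigma L(v)\in\mathbb{R}^k_\uparrow$ is established, the bubble-sort lemma will sandwich $\langle p,L(\sigma v)\rangle$ between two copies of $\underline{L}(p)$, forcing $\sigma v\in\Gamma(p)$; tracking when each adjacent step in that bubble sort is tight will then force every swap to live inside a plateau of $p$, which yields $\sigma p=p$.

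First I would argue the optimality of $\sigma v$. Since $\sigma v\in\mathbb{R}^k_\downarrow$, \cref{lemma: L is order reversing} yields $L(\sigma v)\in\mathbb{R}^k_\uparrow$, and \cref{corollary: L is Sigma equivariant} gives $L(\sigma v)=\sigma L(v)$, so $\sigma L(v)\in\mathbb{R}^k_\uparrow$. Setting $q=L(v)$ in \cref{lemma: bubble sort} yields
\[
\langle p,L(v)\rangle \;\ge\; \langle p,\sigma L(v)\rangle \;=\; \langle p,L(\sigma v)\rangle \;\ge\; \underline{L}(p) \;=\; \langle p,L(v)\rangle,
\]
where the last equality is $v\in\Gamma(p)$. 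All inequalities collapse to equalities, so $\sigma v\in\Gamma(p)$.

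For $\sigma p=p$, I would reopen the proof of \cref{lemma: bubble sort} and examine it step by step. At each adjacent swap of entries $q_i^{(t)}>q_{i+1}^{(t)}$ the nonnegative gain is $(q_i^{(t)}-q_{i+1}^{(t)})(p_i-p_{i+1})$, which is strictly positive whenever $p_i\neq p_{i+1}$. Equality throughout the chain above therefore forces every swap to occur at a position where $p_i=p_{i+1}$; in particular the bubble-sort permutation $\tau$ satisfies $\tau p=p$. By construction $\tau L(v)=\sigma L(v)$, since both are the non-decreasing rearrangement of $L(v)$. The strict clause of \cref{lemma: L is order reversing} supplies the key equivalence $v_i=v_j\iff[L(v)]_i=[L(v)]_j$, so any permutation fixing $L(v)$ also fixes $v$; thus $\sigma^{-1}\tau v=v$, i.e., $\tau v=\sigma v$. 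Since $\sigma$ and $\tau$ sort $v$ to the same monotone vector, $\tau$ is a legitimate witness for the hypothesis, and replacing $\sigma$ by $\tau$ delivers $\sigma p=p$.

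The main obstacle is precisely this bookkeeping around ties in $v$: the hypothesis only requires some $\sigma$ with $\sigma v\in\mathbb{R}^k_\downarrow$, and when $v$ has tied coordinates there are many such permutations, not all of which preserve $p$. The strict clause of \cref{lemma: L is order reversing} — that ties in $L(v)$ correspond exactly to ties in $v$ — is what allows us to replace the given $\sigma$ by the bubble-sort $\tau$ without altering the sorted vector, after which $\sigma p=p$ follows for this canonical choice.
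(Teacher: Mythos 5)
Your argument is correct and follows essentially the same route as the paper. The paper proves $p_i=p_{i+1}$ at each out-of-order position of $v$ by comparing $\langle p, L(v)\rangle$ against $\langle p, L(\sigma_{(i,i+1)} v)\rangle$ using the optimality of $v$, iterating until $v$ is sorted; you instead collapse the full chain $\langle p, L(v)\rangle \ge \langle p, \sigma L(v)\rangle = \langle p, L(\sigma v)\rangle \ge \underline L(p) = \langle p, L(v)\rangle$ and read off $p_i=p_{i+1}$ from the equality case of the bubble-sort rearrangement. By the strict order-reversing property these inspect exactly the same adjacent pairs, so the core content is identical; your version has the small advantage of getting $\sigma v\in\Gamma(p)$ immediately for the given $\sigma$, whereas the paper derives it afterwards via $\sigma p = p$. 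You are also right to flag the tie issue: when $v$ has repeated entries the conclusion $\sigma p=p$ fails for some sorting permutations (e.g.\ $k=3$, $p=(1/3+\epsilon,1/3,1/3-\epsilon)$ with small $\epsilon>0$ has $v=(0,0,-1)\in\Gamma(p)$, yet swapping the first two coordinates preserves $v$ but not $p$), so the lemma can only hold for a suitably chosen bubble-sort permutation; the paper's proof silently makes the same replacement by constructing $\sigma$ rather than taking the hypothesis's, and downstream uses rely only on the existence of such a $\sigma$, so this is consistent.
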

\begin{proof}
  Let $i \in [k-1]$ be such that 
  $v_i < v_{i+1}$.
  We first prove that $p_i = p_{i+1}$.
  Let $\tau = \sigma_{(i,i+1)}$. Since $\tau$ is a transposition, we have $\tau' = \tau$. Now,
  \begin{align*}
    0 &\le
    \langle p , L( \tau v) \rangle - \langle p , L( v) \rangle
    \quad \because \mbox{Optimality of $v$}
    \\
      &
      =
    \langle p , \tau L( v) \rangle - \langle p , L( v) \rangle
    \quad \because \mbox{\cref{corollary: L is Sigma equivariant}}
    \\
      &
      =
    \langle \tau p , L( v) \rangle - \langle p , L( v) \rangle
    \quad \because \mbox{$\tau' = \tau$.}
    \\
      &
      =
      (p_{i+1} - p_i) [L(v)]_i
      +
      (p_i - p_{i+1}) [L(v)]_{i+1}
      \\
      &
      =
      (p_{i+1} - p_i)
      ( [L(v)]_i - [L(v)]_{i+1})
  \end{align*}
  By 
  \cref{lemma: L is order reversing}, we have
  $
      [L(v)]_i
      -
      [L(v)]_{i+1}
      > 0.
      $
      By assumption, we have $p_i \ge p_{i+1}$. If we have $p_i > p_{i+1}$, then 
      \[
        \underbrace{
      (p_{i+1} - p_i)
    }_{< 0}
      (
    \underbrace{
      [L(v)]_i
      -
      [L(v)]_{i+1}
    }_{>0}
      )
      < 0
      \]
      which is a contradiction. Hence, we must have $p_i = p_{i+1}$.
      Repeating the proof with the update $v \gets \tau v$, we obtain a composition of transpositions
      \[\sigma := \sigma_{(i_1,i_1+1)} 
\sigma_{(i_2,i_2+1)} 
\cdots
\sigma_{(i_m,i_m+1)} 
\] such that $\sigma v \in \mathbb{R}^{k}_{\downarrow}$ and $\sigma p = p$.
Finally,
\[
  \underline L(p)
  =
  \langle p, L(v)\rangle
  =
  \langle p, \sigma' \sigma L(v)\rangle
  =
  \langle \sigma p, L( \sigma v)\rangle
  =
  \langle p, L( \sigma v)\rangle
\]
implies that $\sigma v \in \Gamma(p)$.
\end{proof}

\begin{lemma}
  \label{lemma: argmax is Sigma t-equivariant}
  Let $\sigma \in \Perm{k}$ and $v \in \mathbb{R}^k$. Then $\argmax \sigma v = \sigma^{-1}(\argmax v)$.
\end{lemma}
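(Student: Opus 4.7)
The plan is to unfold the definition of $\sigma v$ and exploit the invariance of $\max$ under permutation of entries. Specifically, I would first observe that since $\sigma v$ is just a reordering of the entries of $v$, we have $\max \sigma v = \max v$.

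Next, I would compute directly from \cref{main-section: notations}: for any index $i \in [k]$,
\[
    i \in \argmax \sigma v
    \iff
    [\sigma v]_i = \max \sigma v
    \iff
    v_{\sigma(i)} = \max v
    \iff
    \sigma(i) \in \argmax v.
\]
Applying $\sigma^{-1}$ to the last condition and using the abuse-of-notation convention from \cref{section: notations and conventions} under which $\sigma^{-1}$ acts on subsets of $[k]$ elementwise, this is equivalent to $i \in \sigma^{-1}(\argmax v)$.

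Chaining the equivalences gives $\argmax \sigma v = \sigma^{-1}(\argmax v)$, which completes the proof. There is no real obstacle here — the only subtlety is keeping straight that $[\sigma v]_i = v_{\sigma(i)}$ (not $v_{\sigma^{-1}(i)}$), so that the inverse appears on the right-hand side; I would state this once explicitly to avoid confusion.
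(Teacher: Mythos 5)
Your proof is correct and follows essentially the same route as the paper's: both compute $\max \sigma v = \max v$, unfold $[\sigma v]_i = v_{\sigma(i)}$, and identify $\sigma^{-1}(\argmax v) = \{i : \sigma(i) \in \argmax v\}$ via a chain of equivalences. Nothing to add.
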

\begin{proof}
  Let $M = \max v = \max \sigma v$.
  \begin{align*}
\argmax \sigma v 
&=
\{j \in [k]: [\sigma v]_j = M\}
\\
&=
\{j \in [k]: [v]_{\sigma(j)} = M\}.
  \end{align*}
  On the other hand,
  \begin{align*}
    \sigma^{-1}
    (\argmax v)
    &=
    \{j \in [k]: \sigma(j) \in \argmax v\}
    \\
    &=
    \{j \in [k]: [v]_{\sigma(j)} = M\}\\
    &=
\argmax \sigma v 
  \end{align*}
  as desired.
\end{proof}

\begin{lemma}
  \label{lemma: argmax v is contained in S1}
  Let $p \in \Delta^k_\downarrow$ be such that $\max p > \frac{1}{k}$.
  Let $v \in \Gamma(p)$, then there exists $\mathbf{S} = (S_1,\dots, S_l) \in \gamma(p)$ such that
  $\argmax v \subseteq S_1$.
\end{lemma}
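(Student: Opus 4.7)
The plan is to reduce to the case of monotonically non-increasing $v$ via \cref{lemma: monotonic non-increasing p}, then invoke the integrality result \cref{lemma: L bayes risk is optimized over integers} at level $n=|\argmax v|$ to obtain an explicit lattice minimizer whose leading coordinates are all zero, and finally convert this minimizer into an ordered partition and transport everything back through the permutation.

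First I apply \cref{lemma: monotonic non-increasing p} to obtain $\sigma\in\Perm{k}$ with $u:=\sigma v\in\mathbb{R}^{k}_{\downarrow}\cap\Gamma(p)$ and $\sigma p=p$. Set $m:=|\argmax u|$; since $u$ is monotonic non-increasing, $\argmax u=\{1,\dots,m\}$. I next verify $m<k$: otherwise $u$ would be a constant vector, so \cref{lemma: L is translation invariant} yields $\langle p,L(u)\rangle=\langle p,L(0)\rangle=k-1$, forcing $\underline{L}(p)=k-1$. But the partition $\mathbf{S}_0=(\{1\},\{2,\dots,k\})\in\mathcal{OP}_k$ satisfies $\langle p,\ell(\mathbf{S}_0)\rangle=k(1-p_1)<k-1$ because $p_1=\max p>1/k$, contradicting $\underline{L}(p)=\underline{\ell}(p)$ from \cref{main-lemma: L and ell bayes risk are equal}.

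Next, $\underline{L}^{m}(p)=\underline{L}(p)$: the inequality $\underline{L}^{m}(p)\le\langle p,L(u)\rangle=\underline{L}(p)$ holds because $u$ is admissible in the definition of $\underline{L}^{m}$ in \cref{equation: WW hinge n-bayes risk}, and the reverse is automatic since the constraint set of $\underline{L}^m$ is contained in $\mathbb{R}^k$. Applying \cref{lemma: L bayes risk is optimized over integers} at $n=m$ (legal since $m\in[k-1]$) produces $w^{*}\in\mathcal{C}_{\mathbb{Z}}$ with $w^{*}_{m}=0$ and $\langle p,L(w^{*})\rangle=\underline{L}(p)$. Because $w^{*}_{1}=0$ and $w^{*}$ is non-increasing, $w^{*}_{1}=\cdots=w^{*}_{m}=0$. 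Set $\mathbf{T}:=\tilde{\psi}(w^{*})\in\mathcal{OP}_k$, so that $T_{1}=\{j:w^{*}_{j}=0\}\supseteq\{1,\dots,m\}$; combining \cref{main-theorem: L and ell inner risk identity} with $\varphi(\mathbf{T})=w^{*}$ from \cref{main-theorem: bijection between OP representations} gives $\langle p,\ell(\mathbf{T})\rangle=\langle p,L(w^{*})\rangle=\underline{L}(p)=\underline{\ell}(p)$ (using \cref{main-lemma: L and ell bayes risk are equal}), so $\mathbf{T}\in\gamma(p)$.

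Finally, I set $\mathbf{S}^{\dagger}:=\sigma\mathbf{T}=(\sigma(T_{1}),\dots,\sigma(T_{l}))$. Applying \cref{lemma: ell is Sigma equivariant} with $\tau=\sigma'$ yields $\ell(\sigma\mathbf{T})=\sigma'\ell(\mathbf{T})$, hence $\langle p,\ell(\mathbf{S}^{\dagger})\rangle=\langle\sigma p,\ell(\mathbf{T})\rangle=\langle p,\ell(\mathbf{T})\rangle=\underline{\ell}(p)$, so $\mathbf{S}^{\dagger}\in\gamma(p)$. Since $v=\sigma' u$, \cref{lemma: argmax is Sigma t-equivariant} gives $\argmax v=\sigma(\argmax u)=\sigma\{1,\dots,m\}\subseteq\sigma(T_{1})=S_{1}^{\dagger}$, as required. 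The main obstacle I anticipate is recognizing that the right tool is the level-$m$ refinement \cref{lemma: L bayes risk is optimized over integers} rather than the baseline $n=1$ case used in \cref{main-theorem: L bayes risk is optimized over OP}; this refinement is precisely what produces a lattice minimizer whose flat top has size at least $m$, and the hypothesis $\max p>1/k$ is used exactly to exclude the out-of-range case $m=k$.
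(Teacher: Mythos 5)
Your proof is correct and follows essentially the same route as the paper: sort $v$ via \cref{lemma: monotonic non-increasing p}, rule out the constant case using $\max p > 1/k$, invoke the level-$n$ integrality result \cref{lemma: L bayes risk is optimized over integers} at $n = |\argmax v|$ to get a lattice minimizer $w^*$ with a zero block of size at least $n$, pass through $\tilde\psi$ to an ordered partition, and transport back with $\sigma$ using the equivariance lemmas. The only cosmetic difference is how the non-constancy of $v$ is established: the paper invokes \cref{lemma: the trivial ordered partition} with the vector $(0,\dots,0,-1)$, while you directly compute with the partition $(\{1\},\{2,\dots,k\})$; both yield the same strict inequality against $\langle p, L(0)\rangle = k-1$ under $\max p > 1/k$.
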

\begin{proof}
  Recall by definition, $v \in \Gamma(p)$ if and only if $\underline{L}(p) = \langle p, L(v)$.
  We first claim that $v$ is not a scalar multiple of the all ones vector.
  Suppose it is, then $\underline{L}(p) = \langle p, L(v) \rangle
  =\langle p , L(0)\rangle$ by \cref{lemma: L is translation invariant}, which implies that $0 \in \Gamma(p)$.
  Now, by \cref{lemma: the trivial ordered partition}, we have $0 \not \in \Gamma(p)$ since $\min p < \frac{1}{k}$ by the assumption that $\max p > \frac{1}{k}$.
  This is a contradiction.
  Hence, the claim is proved.

  Next, let $n = |\argmax v|$. By our claim that $v$ is non-constant, we have that $n \in [k-1]$.
  Let $\sigma \in \Perm{k}$ be such that $\sigma v \in \mathbb{R}^k_\downarrow$.
  Thus, by construction, we have $\argmax v = [n]$.
  Hence, we have, by \cref{lemma: argmax is Sigma t-equivariant},
  \[
    [n]=
    \argmax \sigma v
    =
    \sigma^{-1}(\argmax v)
  \]
  or, equivalently, $\argmax v = \sigma([n])$.
  Since $n = |\argmax v| \in [k-1]$, $v$ is feasible for the right hand side of 
  \cref{equation: WW hinge n-bayes risk}.
  Thus, we have
  \[
    \underline L(p) = \underline L^n(p).
  \]
  By \cref{lemma: L bayes risk is optimized over integers}
    \begin{equation}
      \label{equation: definition of w star}
      \underline L^n(p) = \min_{w \in \mathcal{C}_{\mathbb{Z}}\,:\, w_n = 0} \langle p, L(w) \rangle.
    \end{equation}
    Let $w^*$ be a minimizer of the above optimization.
    Since $w^* \in \mathcal{C}_{\mathbb Z}$, consider $\mathbf{S} = (S_1,\dots, S_l):= \tilde \psi(w^*)$.
    Hence, by the definition of $\tilde \psi$, we have that $S_1 = \argmax w^*$.
    Note that
\begin{align*}
  \underline L(p)
  =
  \underline{L}^n(p)
  &=
  \langle p, L(w^*) \rangle
  \\
  &=
  \langle p, L(\varphi(\mathbf{S})) \rangle
  \quad \because \mbox{\cref{main-theorem: bijection between OP representations}}
  \\
  &=
  \langle p, \ell(\mathbf{S}) \rangle
  \quad \because \mbox{\cref{main-theorem: L and ell inner risk identity}}
  \\
  &=
  \langle \sigma p, \ell(\mathbf{S}) \rangle
  \quad \because \sigma p =p\mbox{ by \cref{lemma: monotonic non-increasing p}}
  \\
  &=
  \langle p, \sigma' \ell(\mathbf{S}) \rangle
  \\
  &=
  \langle p, \ell(\sigma \mathbf{S}) \rangle
\quad \because \mbox{\cref{lemma: ell is Sigma equivariant}.}
\end{align*}
Putting it all together, we have
\[
  \langle p, \ell(\sigma \mathbf{S}) \rangle
  =
  \underline{L}(p)
  =\underline{\ell}(p)
\]
where the second equality follows from 
\cref{main-lemma: L and ell bayes risk are equal}.
This proves that $\sigma \mathbf{S} \in \gamma(p)$.
Note that since $w^*$ is feasible for the optimization on the right hand side of \cref{equation: definition of w star}, we have $\argmax w^* = \{i \in [k] : w^*_i = 0\} \supseteq [n]$.
Furthermore, recall that $S_1 = \argmax w^*$.
Putting it all together, we have
$\sigma (S_1) \supseteq \sigma([n]) = \argmax v$.
Thus, $\sigma(\mathbf{S})$ satisfies the desired conditions.
\end{proof}

\begin{lemma}
  \label{lemma: gamma is Sigma equivariant}
  For all $p \in \Delta^k$ and $\sigma \in \Perm{k}$, we have
  \begin{align}
    \label{lemma: S equation}
    \mathbf{S} \in \gamma(\sigma p) &\iff \sigma \mathbf{S} \in \gamma(p),\\
    \label{lemma: v equation}
    v \in \Gamma(\sigma p) &\iff \sigma' v \in \Gamma(p).
  \end{align}
\end{lemma}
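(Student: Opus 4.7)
The plan is to derive both equivalences as consequences of the $\Perm{k}$-equivariance properties already established for $\ell$ and $L$, combined with the $\Perm{k}$-invariance of their optimal values. Since these tools are available, no new structural arguments are required — the proof will essentially be a transport-of-structure argument via the group action.

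For the first equivalence, I would start from the dot product manipulation
\[
\langle \sigma p, \ell(\mathbf{S})\rangle = \langle p, \sigma' \ell(\mathbf{S})\rangle = \langle p, \ell(\sigma \mathbf{S})\rangle,
\]
where the second equality applies \cref{lemma: ell is Sigma equivariant} with $\sigma$ replaced by $\sigma'$ (giving $\sigma' \ell(\mathbf{S}) = \ell(\sigma \mathbf{S})$). Because $\mathbf{S} \mapsto \sigma \mathbf{S}$ is a bijection of $\mathcal{OP}_k$, taking the minimum on both sides over $\mathcal{OP}_k$ gives $\underline{\ell}(\sigma p) = \underline{\ell}(p)$, and the identity above then shows that $\mathbf{S}$ attains the minimum defining $\gamma(\sigma p)$ iff $\sigma \mathbf{S}$ attains the minimum defining $\gamma(p)$, which is exactly \cref{lemma: S equation}.

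For the second equivalence, I would use \cref{corollary: L is Sigma equivariant} with $\sigma$ replaced by $\sigma'$ to obtain $\sigma' L(v) = L(\sigma' v)$, so that
\[
\langle \sigma p, L(v)\rangle = \langle p, \sigma' L(v)\rangle = \langle p, L(\sigma' v)\rangle.
\]
Combined with \cref{lemma: L bayes risk is Sigma invariant}, which gives $\underline{L}(\sigma p) = \underline{L}(p)$, the identity above shows that $v$ achieves $\underline{L}(\sigma p)$ iff $\sigma' v$ achieves $\underline{L}(p)$, establishing \cref{lemma: v equation}. There is no real obstacle here; the only thing to be careful about is tracking whether the action should be by $\sigma$ or $\sigma'$ on each side (which is dictated by the transpose rule $\langle \sigma p, x\rangle = \langle p, \sigma' x\rangle$), and verifying that the action $\mathbf{S} \mapsto \sigma \mathbf{S}$ is indeed a bijection of $\mathcal{OP}_k$, which is immediate from the fact that $\sigma$ is a bijection of $[k]$ acting bucketwise.
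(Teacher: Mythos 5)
Your proof is correct and takes essentially the same approach as the paper: both hinge on the equivariance identities $\sigma'\ell(\mathbf{S}) = \ell(\sigma\mathbf{S})$ and $\sigma'L(v) = L(\sigma'v)$, the dot-product transpose rule, and the $\Perm{k}$-invariance of the inner risk. The only cosmetic difference is that you obtain $\underline{\ell}(\sigma p) = \underline{\ell}(p)$ directly via a change of variables over the bijection $\mathbf{S}\mapsto\sigma\mathbf{S}$ and thereby get both directions of the iff at once, whereas the paper proves the forward implication and then reapplies it with $p = \sigma'\sigma p$; these are interchangeable.
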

\begin{proof}
  We first prove 
    \cref{lemma: S equation}.
  Let $\mathbf{S} \in \gamma(\sigma p)$.  Then
  \begin{align*}
    \underline \ell(\sigma p)
    &=
    \langle \sigma p, \ell(\mathbf{S})
    \rangle
    \\
    &=\langle p, \sigma'\ell(\mathbf{S})
    \rangle
    \\
    &=\langle p, \ell(\sigma \mathbf{S})
    \rangle
    \quad \because \mbox{\cref{lemma: ell is Sigma equivariant}}
    \\
    & \ge
    \underline \ell(p).
  \end{align*}
  By the same argument, we have $\underline \ell(p) \ge \underline \ell(\sigma p)$.
  Thus, $\underline \ell(p) = \underline \ell(\sigma p)$ and $\sigma \mathbf{S} \in \gamma(p)$. This proves the $\implies$ direction \cref{lemma: S equation}.
  To prove the other direction, we first write $p = \sigma' \sigma p$ and note that
  \[
    \sigma \mathbf{S} \in \gamma(\sigma' \sigma p)
    \implies
    \sigma '\sigma \mathbf{S} \in \gamma(\sigma p)
    \iff
    \mathbf{S} \in \gamma(\sigma p).
  \]
  
    Next, we prove \cref{lemma: v equation}.
  By \cref{lemma: L bayes risk is Sigma invariant}, we have $\underline{L}(\sigma p) = \underline{L}(p)$.
    Let $v \in \Gamma(\sigma p)$, then
  \begin{align*}
    \underline L(p)=
    \underline L(\sigma p)
    &=
    \langle \sigma p, L(v)
    \rangle
    \\
    &=\langle p, \sigma'L(v)
    \rangle
    \\
    &=\langle p, L(\sigma' v)
    \rangle
    \quad \because \mbox{\cref{corollary: L is Sigma equivariant}.}
  \end{align*}
  Thus, $\sigma'v \in \Gamma(p)$.
  This proves the $\implies$ direction of \cref{lemma: v equation}.
  For the other direction, 
  \[
    \sigma' v \in \Gamma(\sigma' \sigma p)
    \implies
    \sigma \sigma' v \in \Gamma(\sigma p)
    \iff
    v \in \Gamma(\sigma p).
  \]
\end{proof}

\subsection{Proof of \cref{main-theorem: argmax v is contained in S1}}
\begin{proof}[Proof of \cref{main-theorem: argmax v is contained in S1}]
  Let $\sigma \in \Perm{k}$ be such that $\sigma p \in \Delta^k_\downarrow$. By \cref{lemma: gamma is Sigma equivariant}, we have $\sigma v \in \Gamma(\sigma p)$. Then by \cref{lemma: argmax v is contained in S1}, there exists $\mathbf{S} = (S_1,\dots, S_l) \in \gamma(\sigma p)$ such that
  $S_1 \supseteq \argmax \sigma v
  =
  \sigma^{-1} (\argmax v)$, where the equality is due to \cref{lemma: argmax is Sigma t-equivariant}.
  Applying $\sigma$, to both side, we have
  $\sigma S_1 \supseteq \argmax v$.
  By \cref{lemma: gamma is Sigma equivariant}, we have $\sigma \mathbf{S} \in \gamma(p)$. Hence, we are done.
\end{proof}

\begin{lemma}
  \label{lemma: argmax p is contained in S1}
  Let $p \in \Delta^k_\downarrow$ be such that $\argmax p = \{1\}$ and $\mathbf{S} = (S_1,\dots, S_l) \in \gamma(p)$. Then $1 \in S_1$.
\end{lemma}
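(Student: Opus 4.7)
The plan is to argue by contradiction: assume $\mathbf{S} = (S_1,\dots,S_l) \in \gamma(p)$ but $1 \notin S_1$, and then exhibit an explicit ``swap'' modification that strictly decreases the inner risk $\langle p, \ell(\mathbf{S})\rangle$, contradicting optimality.

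Concretely, since $1 \notin S_1$, there is a unique $j \in \{2,\dots,l\}$ with $1 \in S_j$. Pick any $i \in S_1$ (which is nonempty); note $i \neq 1$. Define $\mathbf{S}' = (S_1',\dots,S_l')$ by setting $S_1' = (S_1 \setminus \{i\}) \cup \{1\}$, $S_j' = (S_j \setminus \{1\}) \cup \{i\}$, and $S_m' = S_m$ for all other $m$. A quick check shows the blocks remain nonempty and pairwise disjoint, so $\mathbf{S}' \in \mathcal{OP}_k$, with $l_{\mathbf{S}'} = l$ and $|S_m'| = |S_m|$ for every $m$.

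The key observation is that, writing $A_m = S_1 \cup \cdots \cup S_m$ and $A_m' = S_1' \cup \cdots \cup S_m'$, the block sizes $|A_m'| = |A_m|$ are unchanged for all $m$, and moreover $A_m' = A_m$ for $m \ge j$, while for $1 \le m < j$ we have $A_m' = (A_m \setminus \{i\}) \cup \{1\}$. Plugging into the identity
\[
\langle p, \ell(\mathbf{S})\rangle = |S_1| - 1 + \sum_{m=1}^{l-1} |A_{m+1}| \cdot \Pr_{Y\sim p}(Y \notin A_m),
\]
all terms cancel except for $m = 1, \dots, j-1$, giving
\[
\langle p, \ell(\mathbf{S}')\rangle - \langle p, \ell(\mathbf{S})\rangle = \sum_{m=1}^{j-1} |A_{m+1}| (p_i - p_1).
\]
Since $\argmax p = \{1\}$ and $i \neq 1$, we have $p_i < p_1$; since $j \ge 2$ the sum has at least one positive term $|A_{m+1}| > 0$. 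Hence the difference is strictly negative, contradicting $\mathbf{S} \in \gamma(p)$.

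There is no real obstacle here — the argument is a direct one-step perturbation. The only mild subtlety is keeping track of which prefix unions $A_m$ actually change under the swap (they only change for $m < j$, and only by replacing the element $i$ with $1$, so sizes are preserved and probabilities strictly increase), which is exactly what makes the single bookkeeping computation collapse to $\sum_{m=1}^{j-1} |A_{m+1}|(p_i - p_1)$.
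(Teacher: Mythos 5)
Your proof is correct, and the bookkeeping checks out: for $m \ge j$ the prefix unions $A_m$ are unchanged, for $m < j$ they change only by replacing $i$ with $1$ (preserving $|A_m|$ and shifting $\Pr(Y\notin A_m)$ by $p_i - p_1$), and the $|S_1|-1$ term is also unchanged, so the difference collapses to $\sum_{m=1}^{j-1}|A_{m+1}|(p_i-p_1)<0$.

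The route differs from the paper's. The paper never manipulates the ordered partition loss directly; it sets $v=\varphi(\mathbf{S})$, observes $\argmax v = S_1$, finds $j$ with $v_j>v_1$, and invokes the earlier WW-hinge lemmas --- the order-reversing property of $L$ and the $\Perm{k}$-equivariance of $L$, $\varphi$, and $\ell$ --- to conclude that swapping labels $1$ and $j$ via the transposition $\sigma_j$ strictly lowers $\langle p, L(v)\rangle = \langle p, \ell(\mathbf{S})\rangle$. Structurally both proofs perform the same kind of one-step label swap (moving $1$ into an earlier bucket), but yours is self-contained at the level of the ordered-partition-loss formula of \cref{main-definition: ordered partition loss}, whereas the paper's reuses the $L$--$\ell$ machinery already built up (\cref{lemma: L is order reversing}, \cref{corollary: L is Sigma equivariant}, \cref{lemma: varphi is Sigma t-equivariant}, \cref{main-theorem: L and ell inner risk identity}). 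Yours is more elementary and would be preferable if one wanted to study $\ell$ independently of $L$; the paper's is shorter in context because it leans on equivariance lemmas that are needed elsewhere anyway.
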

\begin{proof}
  Let $v = \varphi(\mathbf{S})$.
  Since $\mathbf{S}$ is nontrivial, we have $\max v > \min v$.
  By construction, we have $\argmax v = S_1$.
  Hence, if $1 \not \in S_1$, then there exists some $j \in \{2,\dots, k\}$ such that $v_j > v_1$.
  Then \cref{lemma: L is order reversing} implies that $[L(v)]_1 > [L(v)]_j$ and so
  \[
    \langle p, L(v)\rangle 
    -
    \langle p, \sigma_jL(v)\rangle
    =
    (p_1 - p_j)([L(v)]_1 - [L(v)]_j)
    > 0.
  \]
  But  $
  \underline \ell(p)
  =
  \langle p, \ell(\mathbf{S}) \rangle
  =
    \langle p, L(v)\rangle $ and
  \[
    \langle p, \sigma_jL(v)\rangle
    =
    \langle p, L(\sigma_j v)\rangle
    =
    \langle p, L(\sigma_j \varphi(\mathbf{S})\rangle
    =
    \langle p, L(\varphi(\sigma_j \mathbf{S}))\rangle
    =
    \langle p, \ell(\sigma_j \mathbf{S})\rangle
  \]
  Thus, we have
  \[
  \langle p, \ell(\mathbf{S}) \rangle
  -
    \langle p, \ell(\sigma_j \mathbf{S})\rangle
    > 0
  \]
  which contradicts that $\mathbf{S} \in \gamma(p)$.
\end{proof}

\begin{definition}
  A \emph{$\Perm{k}$-invariant property} is a boolean function 
  \begin{equation}\mathcal{B}: \Delta^k \to \{\verb|true|, \verb|false|\}\end{equation} such that
  $\mathcal{B}(p) \implies \mathcal{B}(\sigma p)$ for all $\sigma \in \Perm{k}$ and $p \in \Delta^k$.
  Here, ``$\implies$'' denotes logical implication.
\end{definition}

\begin{lemma}
  \label{lemma: boolean function equivariant extension}
  Let $\mathcal{B}$ and $\mathcal{C}$ be $\Perm{k}$-invariant properties.
  Suppose that for all $p \in \Delta^k_\downarrow$, $\mathcal{B}(p)$ implies $\mathcal{C}(p)$.
  Then for all $p \in \Delta^k$, we have $\mathcal{B}(p)$ implies $\mathcal{C}(p)$.
\end{lemma}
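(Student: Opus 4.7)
The plan is a short two-step reduction using the symmetry hypotheses on $\mathcal{B}$ and $\mathcal{C}$. Fix an arbitrary $p \in \Delta^k$ and assume $\mathcal{B}(p)$. I would first choose a permutation $\sigma \in \Perm{k}$ that sorts the entries of $p$ in non-increasing order, so that $\sigma p \in \Delta^k_\downarrow$. Such a $\sigma$ always exists.

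Next I would invoke $\Perm{k}$-invariance of $\mathcal{B}$ to conclude $\mathcal{B}(\sigma p)$ from $\mathcal{B}(p)$. Since $\sigma p \in \Delta^k_\downarrow$, the hypothesis of the lemma applies and yields $\mathcal{C}(\sigma p)$. Finally, writing $p = \sigma^{-1}(\sigma p)$ and applying $\Perm{k}$-invariance of $\mathcal{C}$ (with the permutation $\sigma^{-1} \in \Perm{k}$) converts $\mathcal{C}(\sigma p)$ into $\mathcal{C}(p)$, which completes the proof.

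There is no substantive obstacle here; the only point worth being careful about is the direction of the invariance arrow. The definition only asserts $\mathcal{B}(p) \Rightarrow \mathcal{B}(\sigma p)$ for every $\sigma$, but this is in fact an equivalence: applying the same implication to $\sigma p$ with the permutation $\sigma^{-1}$ gives $\mathcal{B}(\sigma p) \Rightarrow \mathcal{B}(\sigma^{-1} \sigma p) = \mathcal{B}(p)$. The same remark applies to $\mathcal{C}$, and this is exactly what legitimizes the last step above, where I pass from $\mathcal{C}(\sigma p)$ back to $\mathcal{C}(p)$ via $\sigma^{-1}$. Beyond this observation the argument is a one-line symmetrization and requires no further machinery.
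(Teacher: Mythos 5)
Your proof is correct and follows exactly the same route as the paper: sort $p$ into $\Delta^k_\downarrow$ with a permutation $\sigma$, push $\mathcal{B}$ forward via invariance, apply the hypothesis, and pull $\mathcal{C}$ back via $\sigma^{-1}$. Your side remark about the one-directional definition actually being an equivalence is a useful clarification but does not change the substance.
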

\begin{proof}
  Let $p \in \Delta^k$ be arbitrary.
  Pick $\sigma$ such that $\sigma p \in \Delta^k_\downarrow$. Then
  \[
    \mathcal{B}(p) \implies \mathcal{B}(\sigma p) 
                   \implies \mathcal{C}(\sigma p) 
                   \implies \mathcal{C}(p)
  \]
  where for the first and last implications we used the $\Perm{k}$-invariance property of $\mathcal{B}$ and $\mathcal{C}$, and for the implication in the middle we used the assumption in the lemma.
\end{proof}
\begin{lemma}
  \label{lemma: argmax p = S1 is Sigma equivariant}
  Let $p \in \Delta^k$.
  Consider the statement $\mathcal{B}_{1}(p)$ which returns $\verb|true|$ if and only if
  \begin{equation}
    \mbox{for all $\mathbf{S} \in \gamma(p)$, $|S_1| = 1$ and $S_1 = \argmax p$.}
  \end{equation}
  Then $\mathcal{B}_1$ is a $\Perm{k}$-invariant property.
\end{lemma}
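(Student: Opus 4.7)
The plan is a direct unfolding using the two equivariance lemmas already established. Fix $p \in \Delta^k$ satisfying $\mathcal{B}_1(p)$ and let $\sigma \in \Perm{k}$; I want to show $\mathcal{B}_1(\sigma p)$. So pick an arbitrary $\mathbf{T} = (T_1, \ldots, T_l) \in \gamma(\sigma p)$ and verify the two properties: $|T_1| = 1$ and $T_1 = \argmax(\sigma p)$.

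First I would transport $\mathbf{T}$ back to an element of $\gamma(p)$ using \cref{lemma: gamma is Sigma equivariant}: since $\mathbf{T} \in \gamma(\sigma p)$, we get $\sigma \mathbf{T} = (\sigma(T_1), \ldots, \sigma(T_l)) \in \gamma(p)$. Then applying the hypothesis $\mathcal{B}_1(p)$ to $\sigma \mathbf{T}$ yields $|\sigma(T_1)| = 1$ and $\sigma(T_1) = \argmax p$. Since $\sigma$ is a bijection on $[k]$, $|\sigma(T_1)| = |T_1|$, giving the first requirement.

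For the second requirement, rewrite $\sigma(T_1) = \argmax p$ as $T_1 = \sigma^{-1}(\argmax p)$. Now I invoke \cref{lemma: argmax is Sigma t-equivariant} with $v = p$, which gives $\argmax(\sigma p) = \sigma^{-1}(\argmax p)$. Combining these identifications yields $T_1 = \argmax(\sigma p)$, which is exactly the second clause of $\mathcal{B}_1(\sigma p)$.

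There is essentially no obstacle here; the only thing to be slightly careful about is keeping the direction of $\sigma$ versus $\sigma^{-1}$ consistent between the action on ordered partitions (which sends $\mathbf{T} \mapsto \sigma(\mathbf{T})$ entrywise through $\sigma$ acting on indices) and the action on probability vectors (where $[\sigma p]_i = p_{\sigma(i)}$, so $\argmax$ pulls back through $\sigma^{-1}$). Both of the equivariance lemmas being cited have already been set up with exactly these conventions, so the chain of equalities composes cleanly.
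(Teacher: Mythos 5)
Your proof is correct and follows essentially the same route as the paper's: transport $\mathbf{T} \in \gamma(\sigma p)$ to $\sigma\mathbf{T} \in \gamma(p)$ via the $\gamma$-equivariance lemma, apply the hypothesis $\mathcal{B}_1(p)$, and then translate the two conclusions back using bijectivity of $\sigma$ and the $\argmax$-equivariance lemma. (In fact your write-up is cleaner than the paper's, which contains a couple of $\sigma$/$\sigma'$ typos but is otherwise the same argument.)
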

\begin{proof}
  Let $p \in \Delta^k$ and $\sigma \in \Perm{k}$.
  Suppose $\mathcal{B}_1(p)$ is true. We need to show that $\mathcal{B}_1(\sigma' p)$ is true.
  Let $\mathbf{S} \in \gamma(\sigma p)$.
  By \cref{lemma: gamma is Sigma equivariant}, we have $\sigma \mathbf{S} \in \gamma(p)$.
  Since $\mathcal{B}_1(p)$ is true, we have $|\sigma(S_1)| = 1$ and $\sigma(S_1) = \argmax p$.
  Thus, we immediately get that $|S_1| = 1$.
  By \cref{lemma: argmax is Sigma t-equivariant}, we have $S_1 = \sigma^{-1}(\argmax p) = \argmax \sigma p$.
  The two preceding facts is equivalent to $\mathcal{B}_1(p)$ being true, by definition.
\end{proof}

\subsection{Proof of \cref{main-proposition: majority case}}
\begin{proof}[Proof of \cref{main-proposition: majority case}]
  By \cref{lemma: argmax p = S1 is Sigma equivariant} and \cref{lemma: boolean function equivariant extension}, we may assume $p \in \Delta^k_\downarrow$. \cref{lemma: argmax p is contained in S1} implies that $1 \in S_1$.
  If $|S_1| = 1$, then $S_1 = \{1\}$ and the result is proven.
  Below, suppose $|S_1| >1$.
  We define 
  \[
    S_1' = \{1\}, \quad S_1'' = S_1 \setminus \{1\}.
  \]
  Define
  \[
    \mathbf{S}'
    =(S_1',S_1'',S_2,\dots, S_l) \in \mathcal{OP}_k.
  \]
  We claim that $\langle p, \ell(\mathbf{S}')\rangle
  < \langle p, \ell(\mathbf{S})\rangle$.
  Given the claim, we would have a contradiction that $\mathbf{S} \in \gamma(p)$ and so $|S_1| = 1$ must be true.
  Let $Y \sim p$ and define
  \[
\beta:=
\sum_{j = 1}^{l-1}
|S_1 \cup \cdots \cup S_{j+1}|
\Pr(Y \not \in S_1\cup \cdots \cup S_j)
  \]
  Observe that
  \begin{align*}
\langle p, \ell(\mathbf{S}')\rangle
&=
|S_1'| - 1
+
|S_1'\cup S_1''|
\Pr(Y \not \in S_1') + \beta
\\
&=
|S_1| \Pr(Y \ne 1)
+ \beta
\\
&<
\frac{1}{2}|S_1|
+ \beta.
  \end{align*}
  On the other hand, we have
  \[
    \langle p, \ell(\mathbf{S})\rangle
    =
    |S_1| - 1 
    + \beta.
  \]
  Hence, we have
  \begin{align*}
    \langle p, \ell(\mathbf{S})\rangle
    -
    \langle p, \ell(\mathbf{S}')\rangle
    &=
    |S_1| - 1 
    -
|S_1| \Pr(Y \ne 1)
    \\
    &>
    |S_1| - 1 
    -
    \frac{1}{2}|S_1|
    \\
    &=
    \frac{1}{2}|S_1| -1
    \\
    &\ge
    \frac{2}{2} - 1
    \\
    &= 0.
  \end{align*}
  which proves the claim.
\end{proof}

\subsection{Proof of \cref{main-proposition: spiked case}}
\begin{proof}[Proof of \cref{main-proposition: spiked case}]
  Since $\argmax p = \{j^*\}$, we have $
      (\{j^*\},\, [k] \setminus \{j^*\})
      = (\argmax p,\, [k] \setminus \argmax p).
  $
  We check that the statement below
defines a $\Perm{k}$-invariant property:
  \begin{equation}
    \label{equation: SLN condition stated in terms of OP}
    \mbox{``$p$ satisfies $(\argmax p,\, [k] \setminus \argmax p)$ is the unique element of $\gamma(p)$.'' }
\end{equation}
Let $p$ satisfy \cref{equation: SLN condition stated in terms of OP}. 
By 
  \cref{lemma: gamma is Sigma equivariant}, we have
  $\sigma^{-1}(\argmax p,\, [k] \setminus \argmax p)
  $ is the unique element of $\gamma(\sigma p)$.
  By definition,  
  \[\sigma^{-1}(\argmax p,\, [k] \setminus \argmax p)
  =
  (\sigma^{-1}\argmax p,\, \sigma^{-1}([k] \setminus \argmax p)).
  \]
  By 
  \cref{lemma: argmax is Sigma t-equivariant}, we have
  $\sigma^{-1} \argmax p = \argmax \sigma p$.
  Thus, we have
  \[\sigma^{-1}(\argmax p,\, [k] \setminus \argmax p)
  =
  (\argmax \sigma p,\, [k] \setminus \argmax \sigma^{-1}p)
  \]
  is the unique element of $\gamma(\sigma p)$. In other words, $\sigma p$ satisfies \cref{equation: SLN condition stated in terms of OP}, as desired.

Furthermore, 
$
\mbox{``$p$ satisfies the symmetric noise condition.''}
$
is obviously $\Perm{k}$-invariant.
  Hence, by \cref{lemma: argmax p = S1 is Sigma equivariant} and \cref{lemma: boolean function equivariant extension}, we may assume $p \in \Delta^k_\downarrow$.
  Pick $\mathbf{S} = (S_1,\dots, S_l) \in \gamma(p)$.
  \cref{lemma: argmax p is contained in S1} implies that $1 \in S_1$.
  By \cref{main-definition: ordered partitions} of $\mathcal{OP}_k$, we have $l \ge 2$. We first show that $l = 2$ by contradiction.
  Suppose that $l > 2$.
  Define
  $\mathbf{S}' = (S_1',\dots, S_{l-1}')$ where
  \[
    S_1' := S_1,\, \quad S_2' := S_2 \cup S_3, \quad S_j' := S_{j+1}, \, \forall j \in \{3, \cdots, l-1\}.
  \]
  
  Let  $Y \sim p$ and
  \[
    \beta
    :=
    \sum_{j=3}^{l-1}
    |S_1 \cup \cdots \cup S_{j+1}|
    \Pr( Y \not \in S_1 \cup \cdots \cup S_j).
  \]
  Then we have
  \begin{align*}
    \langle p, \ell(\mathbf{S})\rangle
    &=
    |S_1| - 1
    +
    |S_1 \cup S_2| \Pr(Y \not \in S_1)
    \\
    &\qquad
    +
    |S_1 \cup S_2 \cup S_3| \Pr(Y \not \in S_1 \cup S_2)
    + \beta
  \end{align*}
  and
  \begin{align*}
    \langle p , \ell(\mathbf{S}')\rangle
    &=
    |S_1'| - 1
    + 
    |S_1' \cup S_2'| \Pr(Y \not \in S_1')
    \\
    &\qquad +
    \sum_{j=2}^{l-2}
    |S_1' \cup \cdots \cup S_{j+1}'| \Pr(Y \not \in S_1'\cup \cdots \cup S_j')
    \\
    &=
    |S_1| - 1
    + 
    |S_1 \cup S_2\cup S_3| \Pr(Y \not \in S_1)
    \\
    &\qquad +
    \sum_{j=2}^{l-2}
    |S_1 \cup \cdots \cup S_{j+2}| \Pr(Y \not \in S_1\cup \cdots \cup S_{j+1})
    \\
    &=
    |S_1| - 1
    + 
    |S_1 \cup S_2\cup S_3| \Pr(Y \not \in S_1)
    \\
    &\qquad +
    \sum_{j=3}^{l-1}
    |S_1 \cup \cdots \cup S_{j+1}| \Pr(Y \not \in S_1\cup \cdots \cup S_{j})
    \\
    &=
    |S_1| - 1
    + 
    |S_1 \cup S_2\cup S_3| \Pr(Y \not \in S_1) + \beta
  \end{align*}

  Putting it all together, we have
  \begin{align*}
    \langle p, \ell(\mathbf{S})\rangle
    -
    \langle p , \ell(\mathbf{S}')\rangle
    &=
    |S_1 \cup S_2| \Pr(Y \not \in S_1)
    \\
    &\qquad
    +
    |S_1 \cup S_2 \cup S_3| \Pr(Y \not \in S_1 \cup S_2)
    \\
    &\qquad
    - 
    |S_1 \cup S_2\cup S_3| \Pr(Y \not \in S_1)
    \\
    &=
    |S_1 \cup S_2 \cup S_3| \Pr(Y \not \in S_1 \cup S_2)
    \\
    &\qquad
    -
    |S_3| \Pr(Y \not \in S_1).
  \end{align*}
  Define $s_i := |S_i|$ for each $i \in [l]$. Then
  \[
    |S_1 \cup S_2 \cup S_3| \Pr(Y \not \in S_1 \cup S_2)
    =
    (s_1 + s_2 + s_3)
    (k - s_1 - s_2)
    \frac{1- \alpha}{k-1}
  \]
  and
  \[
    |S_3| \Pr(Y \not \in S_1)
    =
    s_3
    (k - s_1)
    \frac{1- \alpha}{k-1}.
  \]
  Now, we have
  \begin{align*}
    &
    (s_1 + s_2 + s_3)
    (k - s_1 - s_2)
    -
    s_3
    (k - s_1)
    \\
    &=
    ((s_1 + s_2) + s_3)
    ((k - s_1) - s_2)
    -
    s_3
    (k - s_1)
    \\
    &=
    (s_1+s_2)(k-s_1)
    -s_2(s_1+s_2)
    -s_2s_3
    \\
    &=
    (s_1+s_2)k
    -(s_1+s_2)^2
    -s_2s_3
    \\
    &\ge
    (s_1+s_2)(s_1+s_2+s_3)
    -(s_1+s_2)^2
    -s_2s_3
    \\
    &=
    s_1s_3
  \end{align*}
  where for the inequality, we used the fact that $k \ge s_1 + s_2 +s_3$.
  Finally, we now get a contradiction of the optimality of $\mathbf{S}$:
  \[
    |S_1 \cup S_2 \cup S_3| \Pr(Y \not \in S_1 \cup S_2)
    -
    |S_3| \Pr(Y \not \in S_1)
    \ge
    s_1s_3 \frac{1- \alpha}{k-1}
    > 0
  \]
  implies
  \[
    \langle p, \ell(\mathbf{S})\rangle
    -
    \langle p , \ell(\mathbf{S}')\rangle
    >0.
  \]
  This proves the claim that if $\mathbf{S} = (S_1,\dots, S_l) \in \gamma(p)$, then $l=2$ and so $\mathbf{S} = (S_1, [k] \setminus S_1)$.
  Next, we show that $S_1 = \{1\}$.
  We already have shown that $1 \in S_1$.
  We calculate
  \begin{align*}
    \langle p, \ell((S_1, [k] \setminus S_1))\rangle
    &=
    |S_1| - 1
    +
    k \Pr(Y \not \in S_1)
    \\
    &=
    |S_1| - 1
    +
    k
    (k-|S_1|)
    \left(
    \frac{1 - \alpha}{k-1}
    \right)
    \\
    &=
    |S_1|
    \left(
      1 - 
    k
    \left(
    \frac{1 - \alpha}{k-1}
    \right)
    \right)
    +C
  \end{align*}
  where $C = -1 +k^2\left(
    \frac{1 - \alpha}{k-1}
    \right)$ does not depend on $|S_1|$.
    To prove that $|S_1| =1$, by minimality of $\mathbf{S}$ it suffices to show that 
    \[
      1 - 
    k
    \left(
    \frac{1 - \alpha}{k-1}
    \right)
    >0.
    \]

    To see this, note that
    \begin{align*}
      1 >
    k
    \left(
    \frac{1 - \alpha}{k-1}
    \right)
     \iff &\frac{1}{k} >
    \frac{1 - \alpha}{k-1}
    \\
     \iff &\frac{k-1}{k} = 1 - \frac{1}{k}>
     1-\alpha
    \\
     \iff &
    \alpha > \frac{1}{k}
    \end{align*}
    where the last line is part of our assumption in the lemma statement.  
\end{proof}
\newpage

\section{Derivation of the figures}
\label{section: derivation of the figures}
We discuss how \cref{main-figure: Bayes optimal classifier of the OP loss,main-figure: bayes decision regions of WW vs CS} are obtained.

\subsection{\cref{main-figure: Bayes optimal classifier of the OP loss}} 
\label{section: figure 1 generating code}
When $k=3$, there are $12$ nontrivial ordered partitions.
Below, we represent $\mathcal{OP}_3$ vectorially in $\mathbb{R}^3$ using \cref{main-theorem: bijection between OP representations}:
\begin{lstlisting}[language=Matlab]
OPk = [-2 -2 -1 -1 -1 -1  0 -1  0  0  0  0 ;
        0 -1  0  0  0 -1  0 -2 -1 -1 -1 -2 ;
       -1  0  0 -1 -2  0 -1  0  0 -1 -2 -1 ]
\end{lstlisting}
Every column of the matrix \verb|OPk| is a nontrivial ordered partition, e.g., the first column
$
  \begin{bmatrix}
    -2\\
    0\\
    -1
  \end{bmatrix}
  \mapsto 
  2|3|1.
$
Consider the following matrix whose columns are $\ell(\mathbf{S}) = L^{WW}(\varphi(\mathbf{S})) \in \mathbb{R}^3_+$ where $\ell$ is the ordered partition loss and $\mathbf{S}\in \mathcal{OP}_3$.
\begin{lstlisting}[language=Matlab]
ell = [ 5  5  4  3  2  3  1  2  1  0  0  0 ; 
        0  2  1  0  0  3  1  5  4  3  2  5 ; 
        2  0  1  3  5  0  4  0  1  3  5  2 ]
\end{lstlisting}
For example, the first column  of \verb|ell| is the result of applying $L^{WW}: \mathbb{R}^k \to \mathbb{R}^k_+$ to the first column of \verb|OPk|
$, i.e.,
  \begin{bmatrix}
    5\\
    0\\
    2 
  \end{bmatrix}
  =
  L^{WW}\left(
  \begin{bmatrix}
    -2\\
    0\\
    -1
  \end{bmatrix}
  \right)
  = \ell^{\mathcal{OP}}(
  2|3|1).
$
Finally, to get the region in \cref{figure: matlab generated figure} labelled by ``$2|3|1$'', we plot the $(p_2,p_3)$ coordinates of the following polytope:
\begin{equation*}
  \mathrm{Reg}(2|3|1) :=
  \{ p \in \Delta^3 : \langle p, \ell(2|3|1) - \ell(\mathbf{S})\rangle \le 0,\, \forall \mathbf{S} \in \mathcal{OP}_3,\, \mathbf{S} \ne 2|3|1\}.
\end{equation*}
Repeat this procedure for all of $\mathcal{OP}_3$, we obtain \cref{figure: matlab generated figure}.

\begin{figure}[H]
  \centering
  \includegraphics[width = 0.75\textwidth]{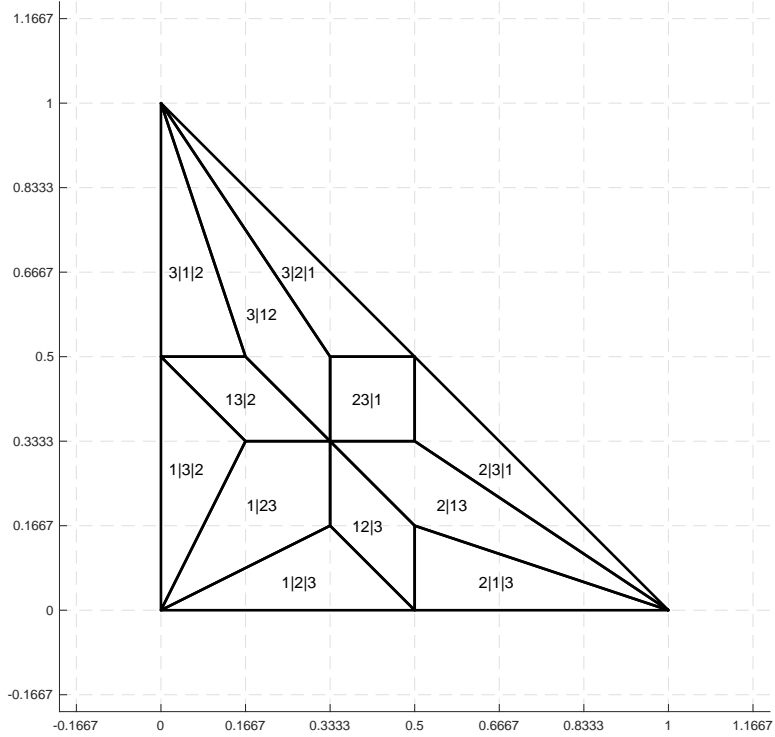}
  \caption{
    Each polygonal region is the polytope
    $\mathrm{Reg}(\mathbf{S})$ projected onto its last two coordinates overall $\mathbf{S} \in \mathcal{OP}_3$.
}
\label{figure: matlab generated figure}
\end{figure}

\subsection{\cref{main-figure: bayes decision regions of WW vs CS}} 
\label{section: figure 2 detailed derivation}
For the left panel of \cref{main-figure: bayes decision regions of WW vs CS}, we compute $\Omega_{L^{WW}}$
\[
  \Omega_{L^{WW}}
  :=
  \{ p \in \Delta^k : 
    |\argmax p| = 1,\,
  \argmax v = \argmax p,\, \forall v \in \Gamma_{L^{WW}}(p)\}.
\]
Thus, the region in light gray in the left panel of \cref{main-figure: bayes decision regions of WW vs CS} is the union of the polygons of \cref{main-figure: Bayes optimal classifier of the OP loss} labelled by an ordered partition whose the top bucket has $2$ elements.
This characterize $\Omega_{L^{WW}}$ up to a set of Lebesgue measure zero.

For the right panel,
consider $v \in \Gamma_{L^{CS}}(p)$.
\citet[Lemma 4]{liu2007fisher} states that if $\max p < 1/2$, then $v = (0,0,0)$.
Furthermore, if $\max p > 1/2$, then $\argmax v = \argmax p$.
This characterize $\Omega_{L^{CS}}$ up to a set of Lebesgue measure zero.

\end{document}